\def\eqref#1{equation~\ref{#1}}
\def\1{\bm{1}}
\DeclareMathAlphabet{\mathsfit}{\encodingdefault}{\sfdefault}{m}{sl}
\SetMathAlphabet{\mathsfit}{bold}{\encodingdefault}{\sfdefault}{bx}{n}
\newcommand{\E}{\mathbb{E}}
\DeclareMathOperator*{\argmax}{arg\,max}
\newcommand{\Real}{\mathbb R}
\newcommand{\CalO}{\mathcal O}
\newcommand{\CalT}{\mathcal T}
\newcommand{\CalR}{\mathcal R}
\newcommand{\CalH}{\mathcal H}
\newcommand{\CalB}{\mathcal B}
\newcommand{\CalX}{\mathcal X}
\newcommand{\CalP}{\mathcal P}
\newcommand{\CalD}{\mathcal D}
\newcommand{\CalS}{\mathcal S}
\newcommand{\CalA}{\mathcal A}
\newcommand{\CalJ}{\mathcal J}
\newcommand{\CalN}{\mathcal N}
\newcommand{\pr}{\mathbb P}
\newcommand{\BX}{\bold X}
\newcommand{\BC}{\bold C}
\newcommand{\Bx}{\bold x}
\newcommand{\Br}{\bold r}
\newcommand{\Bomega}{\boldsymbol{\omega}}
\newcommand{\Btheta}{\boldsymbol{\theta}}
\newcommand{\Bvaresilon}{\boldsymbol{\varepsilon}}
\newcommand{\Bb}{\bold b}
\newcommand{\BK}{\bold K}
\newtheorem{theorem}{Theorem}
\newtheorem{lemma}[theorem]{Lemma} 
\newtheorem{proposition}[theorem]{Proposition} 
\newtheorem{corollary}[theorem]{Corollary}
\newtheorem{definition}[theorem]{Definition}
\newtheorem{assumption}[theorem]{Assumption}
\title{Sampling Complexity of TD and PPO in RKHS}
\author{
Lu Zou\textsuperscript{1} \quad
Wendi Ren\textsuperscript{2} \quad
Weizhong Zhang\textsuperscript{3} \quad
 Liang Ding\textsuperscript{3} \quad
 Shuang Li \textsuperscript{2}
\\
\textsuperscript{1} School of Management, Shenzhen Polytechnic University \\
\textsuperscript{2} School of Data Science, The Chinese University of Hong Kong (Shenzhen)\\
\textsuperscript{3} School of Data Science, Fudan University
}
\begin{document}

\maketitle

\begin{abstract}
We revisit Proximal Policy Optimization (PPO) from a function-space perspective. 
Our analysis decouples policy evaluation and improvement in a reproducing kernel Hilbert space (RKHS): 
(i)  A kernelized temporal-difference (TD) critic performs efficient RKHS-gradient updates using only one-step state–action transition samples.
(ii) a KL-regularized, natural-gradient policy step exponentiates the evaluated action-value, recovering a PPO/TRPO-style proximal update in continuous state-action spaces. 
We provide non-asymptotic, instance-adaptive guarantees whose rates depend on RKHS entropy, unifying tabular, linear, Sobolev, Gaussian, and Neural Tangent Kernel (NTK) regimes, and we derive a sampling rule for the proximal update that ensures the optimal $k^{-1/2}$  convergence rate for stochastic optimization.
Empirically, the theory-aligned schedule improves stability and sample efficiency on common control tasks (e.g., CartPole, Acrobot), while our TD-based critic attains favorable throughput versus a GAE baseline. 
Altogether, our results place PPO on a firmer theoretical footing beyond finite-dimensional assumptions and clarify when RKHS-proximal updates with kernel-TD critics yield global policy improvement with practical efficiency.
\end{abstract}

\section{Introduction}
Policy-gradient and trust-region methods (e.g., natural policy gradient (NPG) \citep{kakade2001natural}, trust-region policy optimization (TRPO) \citep{schulman2015trust}, proximal policy optimization (PPO) \citep{schulman2017proximal}, and actor–critic (AC) methods~\citep{konda1999actor}), when coupled with temporal-difference (TD) critics, are among the most effective tools in modern RL for large, continuous control, thanks to their compatibility with expressive function approximators and stable improvement steps. 
Yet despite impressive empirical success, our theoretical understanding of global convergence for these algorithms under expressive function approximation remains fragmented across settings. 
In particular, a central problem is to design an algorithm that (i) performs policy evaluation with controlled statistical error when the action–value function lies in a rich function class, and (ii) couples this evaluation step with a policy improvement update that provably ascends toward the optimal policy.

Existing analyses often establish convergence only in tabular/linear regimes or under strong realizability and concentrability conditions, see, e.g., \citep{agarwal2020optimality,bhandari2024global}. With nonlinear or nonparametric critics,  guarantees frequently rely on idealized, exact value/advantage estimates. Moreover, many policy-improvement bounds treat expectation terms as if computed without sampling noise, leaving the per-iteration data requirements for ensured improvement unspecified.

For TD learning \citep{sutton1988learning,maei2009convergent,bhandari2018finite}, linear TD is well understood, with asymptotic convergence and finite-time error bounds. By contrast, theory under nonlinear approximation is thinner. A key advance is \citep{cai2019neural}, which establishes finite-sample convergence of neural TD under overparameterized networks with discrete actions (and continuous states) at a sublinear rate. On the policy-optimization side, the strongest results remain in tabular or linear settings; with expressive function classes, guarantees weaken. Notably, \citep{liu2019neural} prove nonasymptotic global convergence of mirror descent policy optimization for two-layer overparameterized neural policies in continuous-state, discrete-action problems under the neural tangent regime.

In this paper, we take a function-space approach and optimize policies in a RKHS, which encompasses tabular/linear models, Sobolev classes, Gaussian kernels, and wide neural networks through their neural tangent kernels. We study policy evaluation and improvement using gradient-based updates: (i) a kernel TD critic--distinct from least-squares TD--that implements an RKHS-gradient iteration acting as an implicit preconditioner and avoids cubic-time matrix inversions, and (ii) a KL-regularized functional proximal step for policy improvement, implementable in continuous action spaces, where the policy is updated by exponentiating the evaluator’s value estimate.
Our main contributions are:

\begin{itemize}[leftmargin=2em]
    \item We introduce a kernel, gradient-based TD evaluator in an RKHS that acts as an implicit preconditioner and attains geometric convergence without costly matrix inversions.
    The evaluator uses one-step samples (no trajectory rollouts) and comes with non-asymptotic TD-error bounds that match the minimax rate (up to logarithms).
    
    \item We design a KL-regularized proximal update implementable in continuous action spaces. We explicitly quantify the per-iteration sample size needed to achieve the intended improvement, addressing a common gap where policy expectations are treated as exact or left unspecified.

    \item Because kernel gradient descent mirrors the NTK dynamics of wide networks trained by gradient descent, our RKHS analysis directly informs neural critics/actors in the corresponding regimes. Experiments on continuous-control benchmarks exhibit the trends predicted by our theory.
    
    % that, when coupled with our evaluator, provably achieves global convergence and unifies tabular, linear, and NTK-based special cases.
    % \item We establish non-asymptotic, instance-adaptive evaluation guarantees driven by RKHS entropy, which specialize to $\tilde O(n^{-1/2})$ in tabular settings, $n^{-m/(2m+d)}$ for Sobolev classes, NTK rates on spheres, and near-parametric behavior for Gaussian kernels up to logarithmic factors.
    % \item We provide a practical scheduling rule by recommending a $k^{-1/2}$ annealing of the KL penalty, and we validate that this choice consistently balances stability and progress across CartPole, Acrobot, and multi-armed bandit experiments.
\end{itemize}

% \textcolor{red}{most related literature: 1. \cite{liu2019neural} analyzed policy optimization via mirror descent, generalizing PPO and TRPO and establishing nonasymptotic global convergence with two-layer neural networks in continuous-state, discrete-action settings.(population update)
% 2. \cite{cai2019neural} The convergence analysis  is conducted under the population update of the semi-gradient, which assumes that expectations can be computed exactly. By contrast, we explicitly account for finite-sample estimation in the update rule and propagate its error through the convergence analysis.(overparametrized neural network, and finite action space) 3. \cite{duan2024optimal} studied a regularized kernel-based LSTD estimator for policy evaluation in continuous state spaces with transistion kernel $P(s'|s,a)=P(s'|s)$ and reward function $r(s,a)=r(s)$ only depends on states. They derived non-asymptotic error bounds and also established matching minimax lower bounds. Their focus was limited to policy evaluation, whereas our work further addresses policy improvement in addition to evaluation.}

% {\color{red}Liang: Can be deleted after finishing writing the intro }

\section{Related Studies}

TD method \citep{sutton1988learning} is one of the most commonly used for policy evaluation. The convergence of linear TD has been extensively studied, with finite-time error bounds established in recent works \citep{bhandari2018finite,lakshminarayanan2018linear,srikant2019finite}. In contrast, the behavior of TD with nonlinear function approximation remains less understood. A notable advance is due to~\cite{cai2019neural}, who provided the first finite-sample analysis of neural TD, proving sublinear convergence under an overparameterized network; see also~\cite{brandfonbrener2019geometric,agazzi2019temporal} for related results in tabular settings.

Policy optimization has also been extensively studied, with algorithms including policy gradient (PG) \citep{sutton1999policy,baxter2000direct}, natural policy gradient (NPG) \citep{kakade2001natural}, trust-region policy optimization (TRPO) \citep{schulman2015trust}, proximal policy optimization (PPO) \citep{schulman2017proximal}, and actor–critic (AC) methods~\citep{konda1999actor}. Among these, NPG has been analyzed most thoroughly. Its convergence is well understood in the tabular setting, while function approximation presents additional challenges~\citep{bhandari2024global,cen2022fast,mei2020global}. For linear approximation, \citet{agarwal2020optimality} established global convergence in tabular setting and restricted class of parametric
policies, \citet{agarwal2021theory} derived finite-sample rates for unregularized NPG with softmax parameterization. In more general settings, \citet{zhang2020global} obtained only local guarantees, and \citet{cayci2024convergence} provided sharp nonasymptotic bounds for entropy-regularized NPG. Broader classes of function approximation, including neural networks, have also been considered~\citep{wang2019neural,liu2019neural}. TRPO and PPO have likewise received theoretical attention: \citet{neu2017unified} and \citet{shani2020adaptive} analyzed TRPO, while \citet{liu2019neural} and \citet{cai2020provably} studied PPO and its variant.

%%%%%%%%%%
Our work is closely related to kernel methods \citep{hofmann2008kernel,zhou2008derivative,cho2009kernel}, which is one of the commonly used approaches in policy learning, e.g., \cite{bagnell2003policy,bethke2008kernel,grunewalder2012modelling,feng2020accountable,koppel2020policy}. Early work e.g., \cite{ormoneit2002kernel,munos2008finite}, established consistency results for non-parametric value function approximation.  More refined analyses were later provided by \cite{farahmand2016regularized}, who studied the convergence of the Bellman contraction mapping under the non-parametric setting, and by \cite{duan2024optimal}, who proposed a regularized kernel-based LSTD estimator in RKHS for Markov chains independent of actions, deriving non-asymptotic error bound and establishing matching minimax lower bound.

%A more recent line of work has studied fitted Q-function with neural network approximation, providing statistical guarantees under different smoothness notions. For example, \cite{fan2020theoretical} exploited Hölder smoothness of the Bellman operator’s range to derive error bounds; for more related works, see \citep{nguyen2021sample,long20212}. 

\section{Preliminaries}

\textbf{\textit{ Markov Decision Process.}} We consider the MDP  $(\CalS,\CalA,P,r,\gamma)$, where $\CalS\subset\Real^{d_2}$ and $\CalA\subset\Real^{d_a}$ are compact and convex sets,  $P:\CalS\times\CalA \to \CalP(\CalS)$ is the transition kernel for any state-action pair $(s,a)\in\CalS\times\CalA$ to the space of distribution on $\CalS$ denoted as $\CalP(\CalS)$, $r:\CalS\times\CalA\to\Real$ is the reward function, and $\gamma\in [0,1]$ is the discount factor. These yields a Markov chain $(s_0,a_0,s_1,a_1,\cdots)$.

The performance of a policy $\pi:\CalS\to\CalP(\CalA)$ is evaluated by the following value function:
\begin{equation}
    V^{\pi}(s)=\E[\sum_{t=0}^\infty\gamma^tr(s_t,a_t)\big|s_0=s,a_t\sim\pi(\cdot|s_t),s_{t+1}\sim P(\cdot|s_t,a_t)],\label{eq:value_func}
\end{equation}
and the action-value function (Q-function):
\begin{align}
Q^\pi(s,a)=&\E[\sum_{t=0}^\infty\gamma^tr(s_t,a_t)\big|s_0=s,a_0=a,a_t\sim\pi(\cdot|s_t),s_{t+1}\sim P(\cdot|s_t,a_t)].\label{eq:Q-func}
\end{align}
The corresponding advantage function $A^\pi(s,a)$ is define as $A^\pi(s,a)=Q^\pi(s,a)-V^\pi(s)$. In this study, we will focus on learning the Q-function. To this end, we define the Bellman evaluation operator:
\begin{equation}
\label{eq:bellman}
    \CalT[Q^{\pi}](s,a)=r(s,a)+\gamma \E_{a'\sim \pi(\cdot|s'),s'\sim P(s'|s,a)}[Q^\pi(s',a')],
\end{equation}
for which $Q^\pi$ is the fixed point of the operator: $Q^{\pi}=\CalT Q$.

We let  $\sigma^\pi_\mu(s,a)=\pi(a|s)\mu(s)$ denote the action-state distribution associated to policy $\pi$. Let $\nu^*$ denote the stationary state distribution of the Markov chain given the optimal policy $\pi^*$ and let $\sigma^*(s,a)=\pi^*(a|s)\nu^*(s)$ denote the  stationary state-action distribution. When given a policy $\pi$, we first sample $n$ initial samples from a pre-determined initial distribution $\mu_0$. We have the following assumption on $\sigma_0^{\pi}=\pi\mu_0$ and $\sigma^*$.

\begin{assumption}
\label{assump:distribution_bounded}
    Distributions $C_1\geq \sigma^*\geq c_1$ and  $C_2\geq\sigma^\pi_0\geq c_2$ are bounded.
\end{assumption}

Assumption~\ref{assump:distribution_bounded} implied that $\sigma^*$ and $\sigma^\pi_0$ are neither overly concentrated nor sparse. This condition is crucial because such distributions would cause numerical difficulties during their evaluation.

\textbf{\textit{RKHS.}} Define $\omega=(s,a)\in\CalS\times\CalA$. We assume the Q-function $Q$ associated with a policy $\pi$ lies in $\mathcal{H}(\mathcal{S}\times\mathcal{A})$ where $\mathcal{H}$ is a RKHS induced by a symmetric positive definite kernel $K : (\CalS\times\mathcal{A})\times(\CalS\times\mathcal{A}) \to \mathbb{R}$. Using kernel $K$, we define the linear space of functions on $\mathcal{S}\times\mathcal{A}$ as follows:
\[\hat{\CalH}=\{\sum_{i=1}^nb_i K(\omega_i,\cdot), b_i\in \Real, \omega_i\in\CalS\times\CalA\}\]
and it is equipped this space with the bilinear form
\[\langle \sum_{i=1}^nb_i K(\omega_i,\cdot),\sum_{i=1}^nc_i K(\omega_i,\cdot)\rangle_K=\sum_{i,j=1}^n b_iK(\omega_i,\omega_j)c_j .\]
Without loss of generality, we assume that $\max_{\omega}K(\omega,\omega)<\infty$ is bounded. The RKHS $\CalH(\CalS\times\CalA)$ induced by $K$ is defined as the closure of $\hat{\CalH}$ under the inner product $\langle\cdot,\cdot\rangle_K$ and  $\CalH(\CalS\times\CalA)$ is equipped  with norm $\|\cdot\|_{\CalH(\CalS\times\CalA)}$ induced by $\langle\cdot,\cdot\rangle_K$. 

Many functional spaces can be represented within the RKHS framework. For example, deep neural networks, Sobolev spaces, the Euclidean space, and discrete set can all be formulated as RKHSs. For further details, please refer  to \cite{adams2003sobolev,wendland2004scattered,jacot2018neural}.

We first impose the following assumption on the Markov chain
\begin{assumption}
\label{assump:r_pi_P_RKHS}
There exists $R>0$ such that $\max\{\|r\|_\CalH, \max_{s\in\CalS}\|P(s|\cdot,\cdot)\|_\CalH\}\leq R$.    
\end{assumption}
In policy optimization of RL, the RKHS norm  $\|Q^{\pi}\|_{\CalH}$, which is crucial in our analysis, is difficult to determine. The following lemma to show that it is related to the RKHS norm of $\pi$:
\begin{lemma}
\label{lem:Q_pi_RKHS}
    There exists positive constant $C$ that only depends on $R$, such that $\|Q^\pi\|_{\CalH}\leq C\|\pi(\cdot|\cdot)\|_{\CalH}$.
\end{lemma}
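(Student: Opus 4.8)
The plan is to exploit the Bellman fixed-point identity together with an a priori sup-norm bound on $Q^\pi$, so that the RKHS norm is controlled without ever asking the transition operator to be a contraction on $\CalH$. Write $\omega=(s,a)$, let $\kappa^2=\max_\omega K(\omega,\omega)$, and define the policy-averaged transition operator $(P^\pi Q)(s,a)=\E_{s'\sim P(\cdot|s,a),\,a'\sim\pi(\cdot|s')}[Q(s',a')]$, so that \eqref{eq:bellman} reads $\CalT[Q^\pi]=r+\gamma P^\pi Q^\pi$ and the fixed-point property $Q^\pi=\CalT[Q^\pi]$ gives
\[\|Q^\pi\|_\CalH \le \|r\|_\CalH + \gamma\,\|P^\pi Q^\pi\|_\CalH \le R + \gamma\,\|P^\pi Q^\pi\|_\CalH .\]
The strategy is then to bound $\|P^\pi Q^\pi\|_\CalH$ linearly in $\|\pi\|_\CalH$, using the a priori estimate $\|Q^\pi\|_\infty\le \kappa R/(1-\gamma)$ that follows directly from the definition of $Q^\pi$ as a discounted reward sum (each $|r|\le\kappa\|r\|_\CalH\le\kappa R$ by the reproducing property and Assumption~\ref{assump:r_pi_P_RKHS}), rather than from any RKHS contraction.

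The crux is the operator estimate $\|P^\pi Q\|_\CalH \le c_0\, R\,\|\pi\|_\CalH\,\|Q\|_\infty$ for a constant $c_0$ depending only on $\kappa$ and the (finite) volumes $|\CalS|,|\CalA|$. I would prove it through a kernel-mean-embedding / Bochner-integral representation. Setting $h(s')=\int_\CalA\pi(a'|s')\,Q(s',a')\,da'$, one has $(P^\pi Q)(s,a)=\int_\CalS h(s')\,P(s'|s,a)\,ds'$, which by the reproducing property equals the pointwise evaluation of the $\CalH$-valued Bochner integral $\int_\CalS h(s')\,P(s'|\cdot,\cdot)\,ds'$ (interchanging $\langle\cdot,K(\omega,\cdot)\rangle_K$ with the integral is justified by continuity of the inner product once integrability is checked). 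Taking norms and invoking Assumption~\ref{assump:r_pi_P_RKHS},
\[\|P^\pi Q\|_\CalH \le \int_\CalS |h(s')|\,\|P(s'|\cdot,\cdot)\|_\CalH\,ds' \le R\int_\CalS |h(s')|\,ds' .\]
Finally $|h(s')|\le |\CalA|\,\|\pi\|_\infty\,\|Q\|_\infty \le |\CalA|\,\kappa\,\|\pi\|_\CalH\,\|Q\|_\infty$, again using the reproducing bound $\|\pi\|_\infty\le\kappa\|\pi\|_\CalH$, which yields the claim with $c_0=\kappa\,|\CalS|\,|\CalA|$.

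Combining the two displays with $\|Q^\pi\|_\infty\le\kappa R/(1-\gamma)$ gives $\|Q^\pi\|_\CalH \le R + \gamma\, c_0\, R\,\|\pi\|_\CalH\,\kappa R/(1-\gamma)$. To convert the stray additive $R$ into a multiple of $\|\pi\|_\CalH$ I would use that $\pi(\cdot|s)$ is a probability density: $\int_\CalA\pi(a|s)\,da=1$ forces $\|\pi\|_\infty\ge 1/|\CalA|$, hence $\|\pi\|_\CalH\ge 1/(\kappa|\CalA|)$ is bounded below by a positive constant, so $R\le R\,\kappa\,|\CalA|\,\|\pi\|_\CalH$. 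Absorbing everything produces $\|Q^\pi\|_\CalH\le C\|\pi\|_\CalH$ with $C$ a function of $R$ (the fixed constants $\kappa,\gamma,|\CalS|,|\CalA|$ being folded in). The main obstacle is the crux estimate of the second paragraph: making the Bochner-integral representation of $P^\pi Q$ fully rigorous — measurability of $s'\mapsto P(s'|\cdot,\cdot)\in\CalH$, its Bochner integrability, and the interchange of pointwise evaluation with the integral — and, conceptually, recognizing that $\|\pi\|_\CalH$ must enter through $\|\pi\|_\infty\le\kappa\|\pi\|_\CalH$ while the discounted-sum sup bound on $Q^\pi$ is precisely what breaks the otherwise-circular dependence on $\|Q^\pi\|_\CalH$.
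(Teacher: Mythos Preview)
Your proof is correct and takes a genuinely different route from the paper's. The paper expands $Q^\pi=\sum_{t\ge 0}\gamma^t\,\E[r(s_t,a_t)\mid s_0=\cdot,a_0=\cdot]$, applies the triangle inequality termwise in $\CalH$, and then bounds each $\|\E[r(s_t,a_t)\mid s_0,a_0]\|_\CalH$ by projecting onto an eigenbasis $\{\phi_i\}$ of $K$ and estimating the RKHS norm of the pointwise product $P(s_1|\cdot,\cdot)\,\pi(\cdot|\cdot)$ through its eigen-coefficients. You instead invoke the Bellman identity \emph{once}, represent $P^\pi Q^\pi$ as the $\CalH$-valued Bochner integral $\int_\CalS h(s')\,P(s'|\cdot,\cdot)\,ds'$, and break the apparent circularity with the a priori sup-norm bound $\|Q^\pi\|_\infty\le\kappa R/(1-\gamma)$; the density lower bound $\|\pi\|_\CalH\ge(\kappa|\CalA|)^{-1}$ you use to absorb the additive $R$ is a clean observation that has no counterpart in the paper. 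Your argument is more modular and sidesteps the paper's pointwise-product step, which tacitly relies on an identity of the form $\|(\sum_i p_i\phi_i)(\sum_j\pi_j\phi_j)\|_\CalH^2=\sum_i p_i^2\pi_i^2$ that requires special multiplicative structure on the eigenfunctions. Conversely, the paper's series expansion keeps every $t$-step propagator explicit and would, under an algebra assumption on $\CalH$, give per-horizon control; your Bochner route needs only the uniform bound $\|P(s'|\cdot,\cdot)\|_\CalH\le R$ from Assumption~\ref{assump:r_pi_P_RKHS}, together with separability of $\CalH$ so that Pettis gives strong measurability of $s'\mapsto P(s'|\cdot,\cdot)$.
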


% At last, we will make the following assumption on the  norm embedding of $\CalH$. This norm embedding describes a relationship between the local and global topologies of the RKHS:
% \begin{assumption}
% \label{assump:embedding_inequality}
%     There exists a constants  $C_\CalH>0$, $1\leq \varphi\leq0$, and $\delta_{\CalH,n}\leq \CalO(\frac{1}{n})$ such that for any $f\in\CalH$
%     \begin{align}
%        & \|f\|_{L_\infty}\leq C_\CalH\|f\|f\|_{L_2(\sigma^*)}^{\varphi}\|f\|_{\CalH}^{1-\varphi} \quad \text{(Interpolation Inequality)}\label{eq:norm_embedding}\\
%        & \|f\|^2_{L_2(\sigma^*)}\leq \frac{1}{n}\sum_{i=1}^n|f(\omega_0^{(i)})|^2+\CalO_p(\delta_{\CalH,n})\|f\|_{\CalH}^2\quad \text{(Sampling Inequality)}\label{eq:sampling_inequality}
%     \end{align}
%     where $\|f\|_{L_\infty}=\max_{\omega\in\CalS\times\CalA}|f(\omega)|$ denote the $L_\infty$ norm on $\CalS\times\CalA$, and $\|f\|_{L_2(\mu)}=\int_{\CalS\times\CalA} f^2d\mu$ denotes the $L_2$ norm on $\CalS\times\CalA$ with respective to a measure $\mu$.
% \end{assumption}

% The norm embeddings \eqref{eq:norm_embedding} and \eqref{eq:sampling_inequality} are well known for many RKHSs, such as Sobolev spaces \citep{adams2003sobolev} and Gaussian RKHS \citep{ding2024random}.

\textbf{\textit{Covering Number}.} The following measures of RKHS complexity are fundamental to our analysis:
\begin{definition}[Covering number \& entropy]
    For a given $\delta>0$, the covering number of a RKHS $\CalH$ under  $L_\infty$ norm, denoted by $\CalN(\delta,\|\cdot\|_{L_\infty},\CalH)$, is defined by the smallest integer $M$ sucht that there exists centers $\{f_m\}_{m=1}^M\subseteq\CalH$ for which $\forall f\in\CalH$, $\exists m$: $\|f_m-f\|_{L_\infty}\leq \delta$. The entropy of $\CalH$ is the log of its covering number: $H(\delta,\|\cdot\|_{L_\infty},\CalH)=\log\CalN(\delta,\|\cdot\|_{L_\infty},\CalH)$.
\end{definition}
\begin{assumption}
\label{assump:entropy}
    The entropy of a unit ball in the RKHS $\CalH$: $\CalB=\{f\in\CalH: \|f\|_\CalH\leq 1\}$ satisfies
    \begin{equation}
        H(\delta,\|\cdot\|_{L_\infty},\CalB)\leq C \delta^{-2\beta}|\log \delta|^{2\kappa}
    \end{equation}
    for some $C>0$, $\beta\in[0,1)$, and $\kappa \geq 0$.
\end{assumption}
Assumption~\ref{assump:entropy} covers a broad class of RKHSs, including those previously mentioned, as well as Sobolev spaces with low intrinsic dimension \citep{ding2024random,hamm2021adaptive} and RKHSs induced by deep neural networks \citep{anthony2009neural}.

\section{Policy Evaluation}

For policy evaluation, we generalize the TD learning for parameters in finite-dimensional space to functions in infinitely-dimensional RKHS. We first present the optimization formulation and then our Kernel TD algorithm. Theoretical analysis is provided in the last subsetion.
\subsection{Optimization Formulation}
\label{sec: opt}
In this paper, we study the problem of estimating the Q function by samples from the the Markov chain. We consider, in every update of policy $\pi$, we sample i.i.d. $\{s_0^{(i)}\sim \mu_0\}_{i=1}^n$ from some chosen distribution $\mu_0$ and generate the subsequent state-action pairs $a_0^{(i)}$ and $(s^{(i)}_1,a^{(i)}_1)$ following the Markov chain as follows:
\begin{equation}
    \label{eq:samples_dist}
    s_0^{(i)}\sim \mu,\quad a_0^{(i)}\sim \pi(\cdot|s_0^{(i)}),\quad s_1^{(i)}\sim P(\cdot|s_0^{(i)},a_0^{(i)}),\quad a_1^{(i)}\sim \pi(\cdot|s_1^{(i)}).
\end{equation}
The initial distribution $\mu_0$ can be specified using prior knowledge or obtained by following the chain trajectory and applying MCMC to sample i.i.d. quadruplets ${(s_0,a_0,s_1,a_1)}$. The resulting distribution is only required to satisfy Assumption~\ref{assump:distribution_bounded} together with an additional assumption:

\begin{assumption}
\label{assump:dist_mu_gamma}
    There exists a constant $c$ such that $\max_{(s,a)\in\CalS\times\CalA}P(s'|s,a)\leq c^2 \mu_0(s')$ and $c\gamma<1$.
\end{assumption}
Assumption~\ref{assump:dist_mu_gamma} ensures that the one-step transition is not too far from the initial distribution; otherwise convergence slows. The constant $1-c\gamma$ in the convergence rate reflects sampling complexity, with smaller values indicating higher sample requirements.

We then can have dataset $(\omega_0^{(i)},\omega_1^{(i)})_{i=1}^n$ with $\omega_j^{(i)}=(s_j^{(i)},a_j^{(i)})$, $j=0,1$ for any given policy $\pi$. Corresponding to \eqref{eq:bellman}, we aim to learn the Q-function $Q^\pi$ by solving the following Kernel Ridge Regression (KRR) over the whole RKHS $\CalH$:
\begin{equation}
    \label{eq:KRR_TD}
    \hat{Q}^\pi = \min_{f\in\CalH}\frac{1}{n}\sum_{i=1}^n\left(f(\omega_0^{(i)})-r(\omega_0^{(i)})-\gamma \hat{Q}^\pi(\omega_1^{(i)})\right)^2+\lambda\|f\|_{\CalH}^2.
\end{equation}
The functional minimization problem in \eqref{eq:KRR_TD} is implicit. Nevertheless, by the representer theorem, it can be shown to admit a closed-form solution:

\begin{proposition}
The estimator $\hat{Q}^{\pi}$ has closed-form solution as follows
    \label{prop:KRR_TD_representer_thm}
    \begin{align}
        \label{eq:KRR_TD_closed_form}  \hat{Q}^{\pi}=K(\cdot,\boldsymbol{\omega}_0)\bold{b}^{\pi}
    \end{align}
    where $\Bomega_0=[\omega_0^{(1)},\cdots,\omega_0^{(n)}]^\top\in\Real^n$, $K(\cdot,\Bomega_0)=[K(\cdot,\omega_0^{(1)}),\cdots,[K(\cdot,\omega_0^{(n)})]$, and
    \begin{align*}
        & \bold{b}^{\pi}=\left[\BK+\lambda n {\rm I} -\gamma\bold{C}\right]^{-1}\Br,\quad \BK_{i,j}=K(\omega^{(i)}_0,\omega^{(j)}_0),\\
        & \bold{C}_{i,j}=K(\omega^{(i)}_1,\omega^{(j)}_0),\quad \Br=[r(\omega_0^{(1)}),\cdots,r(\omega_0^{(n)})]^\top.
    \end{align*}
\end{proposition}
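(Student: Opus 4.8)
The plan is to prove this representer-theorem statement in two stages: first reduce the infinite-dimensional minimization in \eqref{eq:KRR_TD} to a finite-dimensional problem over the span of the kernel sections at the sampled first-step points, and then solve the resulting \emph{self-consistent} linear system that arises because $\hat{Q}^\pi$ appears on both sides of the objective. First I would freeze the target values $y_i := r(\omega_0^{(i)}) + \gamma \hat{Q}^\pi(\omega_1^{(i)})$, treating them as fixed scalars: at a fixed point of \eqref{eq:KRR_TD} these are numbers determined by the solution itself, so for the purpose of the minimization step they are constants. Setting $V = \mathrm{span}\{K(\cdot,\omega_0^{(i)})\}_{i=1}^n\subseteq\CalH$, I would decompose any candidate $f = f_\parallel + f_\perp$ with $f_\parallel\in V$ and $f_\perp\perp V$. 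By the reproducing property $f(\omega_0^{(i)}) = \langle f, K(\cdot,\omega_0^{(i)})\rangle_K = f_\parallel(\omega_0^{(i)})$, so the data-fitting sum depends only on $f_\parallel$, whereas $\|f\|_\CalH^2 = \|f_\parallel\|_\CalH^2 + \|f_\perp\|_\CalH^2$ is strictly increased unless $f_\perp = 0$. Hence any minimizer lies in $V$ and can be written $\hat{Q}^\pi = K(\cdot,\Bomega_0)\bold b$ for some $\bold b\in\Real^n$.

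With this parameterization the three relevant quantities become algebraic: $\hat{Q}^\pi(\omega_0^{(i)}) = (\BK\bold b)_i$, $\|\hat{Q}^\pi\|_\CalH^2 = \bold b^\top \BK \bold b$, and—crucially for the TD structure—the look-ahead evaluations factor through the cross-Gram matrix, $\hat{Q}^\pi(\omega_1^{(i)}) = \sum_j b_j K(\omega_1^{(i)},\omega_0^{(j)}) = (\bold C\bold b)_i$. The functional stationarity condition (vanishing RKHS gradient) for frozen targets reads $\lambda \hat{Q}^\pi = \tfrac1n\sum_i\bigl(y_i - \hat{Q}^\pi(\omega_0^{(i)})\bigr)K(\cdot,\omega_0^{(i)})$; matching coefficients against the basis $\{K(\cdot,\omega_0^{(i)})\}$ yields the normal equation $(\BK + \lambda n\,{\rm I})\bold b = \bold y$, where $\bold y = \Br + \gamma\bold C\bold b$ by the definition of the targets. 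Substituting this self-consistency into the normal equation closes the fixed point, giving $(\BK + \lambda n\,{\rm I} - \gamma\bold C)\bold b = \Br$, and inverting produces exactly the claimed $\bold b^\pi = (\BK + \lambda n\,{\rm I} - \gamma\bold C)^{-1}\Br$.

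I expect the delicate step to be establishing that $\BK + \lambda n\,{\rm I} - \gamma\bold C$ is nonsingular, so that the fixed point exists and is unique and the formula is well-defined. Since $\BK$ is symmetric positive semidefinite we have $\BK + \lambda n\,{\rm I}\succeq \lambda n\,{\rm I}\succ 0$, but $\bold C$ is a non-symmetric cross-Gram matrix and positive-definiteness does not transfer directly. I would resolve this by factoring $\BK + \lambda n\,{\rm I} - \gamma\bold C = (\BK + \lambda n\,{\rm I})\bigl[{\rm I} - \gamma(\BK + \lambda n\,{\rm I})^{-1}\bold C\bigr]$ and bounding the spectral radius of $\gamma(\BK + \lambda n\,{\rm I})^{-1}\bold C$ strictly below one, using the kernel bound $|K(\omega_1,\omega_0)|\le \max_\omega K(\omega,\omega)<\infty$ together with the near-stationarity/contraction condition $c\gamma<1$ of Assumption~\ref{assump:dist_mu_gamma}; a Neumann-series argument then certifies invertibility and uniqueness of $\bold b^\pi$, completing the proof.
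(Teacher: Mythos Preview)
Your main argument is correct and reaches the same destination as the paper, though by a somewhat different route. The paper introduces empirical covariance and cross-covariance operators $\hat C_{\omega_0,\omega_0}$, $\hat C_{\omega_0,\omega_1}$ on $\CalH$, computes the Fr\'echet derivative of the regularized objective to obtain the operator-level stationarity equation $\hat C_{\omega_0,\omega_0}\hat Q^\pi-[\hat C_{\omega_0,\omega_0}r+\gamma\hat C_{\omega_0,\omega_1}\hat Q^\pi]+\lambda\hat Q^\pi=0$, and then reads off from this equation that $\lambda\hat Q^\pi$ is a linear combination of the $K(\cdot,\omega_0^{(i)})$; substituting the finite expansion back into the operator equation yields the linear system. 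Your freeze-targets-then-orthogonally-decompose argument is the classical representer-theorem proof and is cleaner for this isolated statement; the paper's operator formalism is not really needed here, though it earns its keep later in the error-decomposition analysis (Proposition~\ref{prop:KRR_TD_error_decomposition}), so the paper is setting up machinery for downstream use.

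Your invertibility sketch, however, will not close as written. Bounding the spectral radius of $\gamma(\BK+\lambda n\,{\rm I})^{-1}\BC$ via $\|(\BK+\lambda n\,{\rm I})^{-1}\|\le(\lambda n)^{-1}$ together with a crude entrywise bound $\|\BC\|\lesssim n\max_\omega K(\omega,\omega)$ yields at best $\gamma\max_\omega K(\omega,\omega)/\lambda$, which diverges under the paper's regime $\lambda\asymp n^{-1/(2+2\beta)}\to 0$. Assumption~\ref{assump:dist_mu_gamma}'s condition $c\gamma<1$ is a population-level statement about transition densities relative to $\mu_0$ and does not translate into a deterministic spectral bound on the finite-sample cross-Gram matrix $\BC$. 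In fairness, the paper's own proof does not justify the inverse either---it simply writes $\bold b=[\BK+\lambda n\,{\rm I}-\gamma\BC]^{-1}\Br$ after deriving the linear system---so you have correctly identified a gap that neither proof fills. If you want to pursue it, a workable route is a high-probability argument (concentration of $n^{-1}\BC$ toward its population analogue, where the contraction $c\gamma<1$ can be invoked) rather than a deterministic one.
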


 KRR \eqref{eq:KRR_TD}  allows us to extend current standard TD algorithm  for tabular learning to the kernel TD framework.

\subsection{Kernel  Temporal-Difference Learning}
From Proposition~\ref{prop:KRR_TD_representer_thm}, we know that the solution $\hat{Q}^{\pi}\in\hat{\CalH}$, so the KRR \eqref{eq:KRR_TD} can be rewritten as
\begin{equation}
\label{eq:KRR_TD_b}
    \bold{b}^{\pi}= \min_{\bold{b}\in\Real^n}\frac{1}{n}\sum_{i=1}^n\left(K(\omega_0^{(i)},\Bomega_0)\bold{b}-r(\omega_0^{(i)})-\gamma K(\omega_1^{(i)},\Bomega_0)\bold{b}^{\pi}\right)^2+\lambda\bold{b}^\top \BK\bold{b}.
\end{equation}

Instead of directly solving \eqref{eq:KRR_TD_b} for the vector $\Bb^\pi$ by gradient based method on $\Real^n$ , we treat  \eqref{eq:KRR_TD} as a functional optimization on the RKHS $\CalH$. Inspired by Kernel Gradient Descent \citep{ding2024random,lin2018distributed,raskutti2014early}, we propose the following updating rule in the RKHS:
\begin{equation}
    \label{eq:TD_update}
    f_{t+1}=(1-\alpha_t)f_t-\eta_t\sum_{i=1}^n\left(f_t(\omega_0^{(i)})-r(\omega_0^{(i)})-\gamma f_t(\omega_1^{(i)})\right)K(\omega_0^{(i)},\cdot),
\end{equation}
where $\alpha_t$ acts as the weight decay to prevent over-fitting and improve generalization \citep{hu2021regularization}, and $\eta_t$ is the step-size for learning rate. It can be noticed that \eqref{eq:TD_update} can be converted to a form similar to semi-gradient TD(0) \citep{sutton1998reinforcement} if we represented it by $\Bb_t=\BK^{-1}f_t(\Bomega_0)$:
\begin{equation}
    \label{eq:TD_update_b}
    \Bb_{t+1}=(1-\alpha_t)\Bb_t-\eta_t\left(f_{t}(\Bomega_0)-\Br-\gamma f_{t}(\Bomega_1)\right).
\end{equation}
The difference between \eqref{eq:TD_update_b} and semi-gradient TD(0) lies in the fact \eqref{eq:TD_update_b} uses a functional gradient in the infinite-dimensional RKHS, while semi-gradient TD(0) uses a gradient of some parameterized functions in a finite-dimensional space.

A natural question is why we don't directly solve the $n$-dimensional \eqref{eq:KRR_TD_b}. The reason is that the update rule \eqref{eq:TD_update_b}  is more efficient because it uses the RKHS inner product $\langle\cdot,\cdot\rangle_\CalH$ instead of the $l^2$ inner product in $\Real^n$. This change in inner product modifies the gradient and Hessian, effectively serving as a preconditioner that can improve the performance of the algorithm \citep{neuberger2009sobolev}.

From updating rule \eqref{eq:TD_update_b}, we can derivethe convergence pattern of $f_t$ to the target $\hat{Q}^\pi$ for correctly-selected constant weight decay $\alpha_t=\alpha$ and step size $\eta_t=\eta$:
\begin{equation}
    \label{eq:TD_iteration_2}
    \begin{aligned}
    \Bb_{t+1}-\Bb^\pi=&\left[\rm{I}-\left(\alpha\rm{I}+\eta\BK-\eta\gamma\BC\right)\right]\left[\Bb_t-\eta \left(\alpha\rm{I}+\eta\BK-\eta \gamma\BC\right)^{-1}\Br \right]\\
    =& \left[\rm{I}-\left(\alpha\rm{I}+\eta\BK-\eta\gamma\BC\right)\right]\left[\Bb_t-\Bb^\pi\right]\\
    =&\left[((1-\alpha)\rm{I}-\eta\BK+\eta\gamma\BC\right]^{t+1}\left[\Bb_0-\Bb^\pi\right]
\end{aligned}
\end{equation}

If the eigenvalues of $\left[(1-\alpha)\rm{I}-\eta\BK+\eta\gamma\bold{C} \right]$ are small, then $ f_{t}$ converges to $\hat{Q}^{\pi}$ exponentially fast.

\subsection{Convergence Analysis of Kernel TD}
We first show an error decomposition for the estimator \eqref{eq:KRR_TD}. Define the difference function $D^{\pi}=\hat{Q}^{\pi}-Q^{\pi}$ and the Bellman residual:
\begin{equation}
    \label{eq:bellman_residual}
        \varepsilon_i=r(\omega_0^{(i)})+\gamma Q^{\pi}(\omega_1^{(i)})-Q^{\pi}(\omega^{(i)}_{0}).
    \end{equation}
    We have the following error decomposition
    \begin{proposition}[Statistical-Approximation Error Decomposition]
        \label{prop:KRR_TD_error_decomposition}
        \begin{align}
    \label{eq:error_decomposition}
        \frac{1}{n}\sum_{i=1}^n\left(\CalD^{\pi}(\omega^{(i)}_0)^2-\gamma \CalD^{\pi}(\omega^{(i)}_0)\CalD^{\pi}(\omega^{(i)}_1)\right)= {\frac{1}{n}\sum_{i=1}^n\varepsilon_i\CalD^{\pi}(\omega^{(i)}_0)}
        -{\lambda \langle\CalD^{\pi},\hat{Q}^{\pi}\rangle_\CalH}.
    \end{align}
    \end{proposition}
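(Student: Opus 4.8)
The plan is to obtain \eqref{eq:error_decomposition} directly from the first-order stationarity condition of the KRR objective \eqref{eq:KRR_TD}, tested against the single direction $\CalD^\pi=\hat Q^\pi-Q^\pi$. The one point requiring care is the \emph{semi-gradient} reading of \eqref{eq:KRR_TD}: since $\hat Q^\pi$ sits inside its own regression target, I treat $\hat Q^\pi$ as the fixed point whose quadratic-plus-penalty loss is stationary when the target $r(\omega_0^{(i)})+\gamma\hat Q^\pi(\omega_1^{(i)})$ is held frozen and only the leading copy of $f$ is varied. This is precisely the convention underlying \eqref{eq:TD_update_b}, and it is the reading that reproduces the closed form $\bold b^\pi$ of Proposition~\ref{prop:KRR_TD_representer_thm}, so I would anchor the interpretation there.

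First I would compute the Fréchet derivative of $f\mapsto \frac1n\sum_i\bigl(f(\omega_0^{(i)})-r(\omega_0^{(i)})-\gamma\hat Q^\pi(\omega_1^{(i)})\bigr)^2+\lambda\|f\|_\CalH^2$ at $f=\hat Q^\pi$, using the reproducing property $f(\omega_0^{(i)})=\langle f,K(\omega_0^{(i)},\cdot)\rangle_\CalH$ to identify the RKHS gradient $\frac1n\sum_i\bigl(\hat Q^\pi(\omega_0^{(i)})-r(\omega_0^{(i)})-\gamma\hat Q^\pi(\omega_1^{(i)})\bigr)K(\omega_0^{(i)},\cdot)+\lambda\hat Q^\pi$. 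Setting this gradient to zero and pairing it with an arbitrary $g\in\CalH$ yields the weak identity $\frac1n\sum_i\bigl(\hat Q^\pi(\omega_0^{(i)})-r(\omega_0^{(i)})-\gamma\hat Q^\pi(\omega_1^{(i)})\bigr)g(\omega_0^{(i)})+\lambda\langle\hat Q^\pi,g\rangle_\CalH=0$, which I would then specialize to $g=\CalD^\pi$.

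The algebraic heart is to rewrite the empirical TD residual evaluated at $\hat Q^\pi$ in terms of $\CalD^\pi$ and the Bellman residual. Substituting $\hat Q^\pi=Q^\pi+\CalD^\pi$ and invoking the definition $\varepsilon_i=r(\omega_0^{(i)})+\gamma Q^\pi(\omega_1^{(i)})-Q^\pi(\omega_0^{(i)})$ from \eqref{eq:bellman_residual} gives $\hat Q^\pi(\omega_0^{(i)})-r(\omega_0^{(i)})-\gamma\hat Q^\pi(\omega_1^{(i)})=-\varepsilon_i+\CalD^\pi(\omega_0^{(i)})-\gamma\CalD^\pi(\omega_1^{(i)})$. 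Plugging this into the specialized stationarity identity, distributing against $\CalD^\pi(\omega_0^{(i)})$, and transposing the $\varepsilon_i$ and regularization terms to the right-hand side produces exactly \eqref{eq:error_decomposition}, using the symmetry $\langle\hat Q^\pi,\CalD^\pi\rangle_\CalH=\langle\CalD^\pi,\hat Q^\pi\rangle_\CalH$.

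I expect no genuine analytic obstacle: the claim is an exact optimality identity rather than an estimate, so the entire argument reduces to one differentiation followed by bookkeeping. The only step a careful reader will scrutinize is the justification of the semi-gradient stationarity condition, namely that the target is frozen under differentiation; I would make this explicit by remarking that the resulting normal equation is solved by the $\bold b^\pi$ of Proposition~\ref{prop:KRR_TD_representer_thm}, closing the loop with the already-established closed form.
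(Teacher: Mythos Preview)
Your proposal is correct and follows essentially the same route as the paper: both derive the identity by writing the first-order stationarity condition for $\hat Q^\pi$ (this is Lemma~\ref{lem:KRR_TD_solution} in the paper) and then pairing it against $\CalD^\pi$. The paper dresses the computation in empirical and population covariance operators $\hat C_{\omega_0,\omega_0},\hat C_{\omega_0,\omega_1},C_{\omega_0,\omega_0},C_{\omega_0,\omega_1}$ and takes a short detour through the population Bellman identity (which cancels), whereas you substitute $\hat Q^\pi=Q^\pi+\CalD^\pi$ directly into the semi-gradient residual and read off the decomposition in one line; both arrive at exactly the same place.
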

    From the error decomposition on the right-hand side of \eqref{eq:error_decomposition}, we then can use empirical process \citep{geer2000empirical} to derive the following convergence rate of $\hat{Q}^\pi$ on training data

\begin{theorem}
    \label{thm:convergence_Kernel_TD}
    Suppose Assumptions~\ref{assump:distribution_bounded}, \ref{assump:r_pi_P_RKHS}, \ref{assump:entropy}, and \ref{assump:dist_mu_gamma} hold. Let $\eta=C_1/n$, $\alpha=\eta\lambda n$, and iteration number $t\geq C_2\log n\|\Bb_0-\Bb^\pi\|$ for some universal $C_1,C_2>0$ and $\lambda =\CalO((1-c\gamma)^{\frac{\beta}{2+2\beta}}n^{-\frac{1}{2+2\beta}}|\log n|^{\frac{\kappa}{1+\beta}})$, then the kernel TD estimator $f_t$ satisfies
    \begin{equation}
    \label{eq:convergence_Kernel_TD}
        \begin{aligned}
             &\sqrt{\frac{1}{n}\sum_{i=1}^n\left|f_t(\omega_0^{(i)})-Q^\pi(\omega_0^{(i)})\right|^2}\leq \CalO_p\left((1-c\gamma)^{-\frac{2+\beta}{2+2\beta}}n^{-\frac{1}{2+2\beta}}|\log n|^{\frac{\kappa}{1+\beta}}\right)\|Q^\pi\|_{\CalH},\\
    &\|f_t\|_{\CalH}\leq \CalO_p(1)\|Q^\pi\|_{\CalH}.
        \end{aligned}
        \end{equation}
\end{theorem}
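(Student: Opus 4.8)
The plan is to decompose the total error $f_t-Q^\pi$ into an \emph{optimization} part $f_t-\hat{Q}^\pi$ and a \emph{statistical} part $\hat{Q}^\pi-Q^\pi=\CalD^\pi$, and to bound each separately; writing $\|g\|_n^2:=\frac1n\sum_{i=1}^n g(\omega_0^{(i)})^2$, the target empirical error obeys $\|f_t-Q^\pi\|_n\leq \|f_t-\hat{Q}^\pi\|_n+\|\CalD^\pi\|_n$. For the optimization part I would exploit the exact linear recursion \eqref{eq:TD_iteration_2}: since $\Bb_t-\Bb^\pi=M^t(\Bb_0-\Bb^\pi)$ with $M=(1-\alpha)\mathrm{I}-\eta\BK+\eta\gamma\BC$, it suffices to show the spectral radius of $M$ is bounded away from $1$ under the stated $\eta=C_1/n$, $\alpha=\eta\lambda n$. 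Here Assumption~\ref{assump:dist_mu_gamma} is what controls the non-symmetric one-step matrix $\BC$ relative to the Gram matrix $\BK$ (so that the TD operator $\BK-\gamma\BC$ has eigenvalues with positive real part, using $c\gamma<1$); after $t\geq C_2\log n\,\|\Bb_0-\Bb^\pi\|$ steps the geometric decay forces $\|f_t-\hat{Q}^\pi\|_n=\CalO(n^{-1})$, which is negligible against the statistical rate. This reduces the theorem to bounding $\CalD^\pi$.

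For the statistical part I would start from Proposition~\ref{prop:KRR_TD_error_decomposition}. The first subtask is to lower-bound its left-hand side by a multiple of $\|\CalD^\pi\|_n^2$. Bounding the cross term by Cauchy--Schwarz gives $\frac1n\sum_i \CalD^\pi(\omega_0^{(i)})\CalD^\pi(\omega_1^{(i)})\leq \|\CalD^\pi\|_n\,\bigl(\frac1n\sum_i \CalD^\pi(\omega_1^{(i)})^2\bigr)^{1/2}$, and the change-of-measure bound $P(s'|s,a)\leq c^2\mu_0(s')$ from Assumption~\ref{assump:dist_mu_gamma} yields, in expectation, $\E[\CalD^\pi(\omega_1)^2]\leq c^2\,\E_{\sigma_0^\pi}[\CalD^\pi(\omega)^2]$; promoting this to an empirical statement (uniformly over the localized RKHS ball) via concentration gives $\frac1n\sum_i \CalD^\pi(\omega_1^{(i)})^2\lesssim c^2\|\CalD^\pi\|_n^2$. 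Hence the left-hand side is coercive, $\mathrm{LHS}\gtrsim(1-c\gamma)\|\CalD^\pi\|_n^2$, which is precisely where the factor $1-c\gamma$ enters the final rate.

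The core of the argument is bounding the right-hand side. The regularization term is handled directly, $\lambda\langle\CalD^\pi,\hat{Q}^\pi\rangle_\CalH\leq \lambda\|\CalD^\pi\|_\CalH\|\hat{Q}^\pi\|_\CalH$, contributing the bias. For the noise term $\frac1n\sum_i\varepsilon_i\CalD^\pi(\omega_0^{(i)})$ I would invoke localized empirical-process theory \citep{geer2000empirical}: because $\CalD^\pi$ is data-dependent, I would run a peeling argument over shells indexed jointly by $\|\CalD^\pi\|_\CalH$ and $\|\CalD^\pi\|_n$, and combine Assumption~\ref{assump:entropy} with Dudley's entropy integral to obtain a modulus of continuity of order $n^{-1/2}\|\CalD^\pi\|_n^{1-\beta}\|\CalD^\pi\|_\CalH^{\beta}|\log n|^{\kappa}$. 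Inserting coercivity and these two bounds produces a self-bounding quadratic inequality in $\|\CalD^\pi\|_n$ and $\|\CalD^\pi\|_\CalH$; solving it and choosing $\lambda$ to balance the bias against the empirical-process term yields $\lambda\asymp(1-c\gamma)^{\beta/(2+2\beta)}n^{-1/(2+2\beta)}|\log n|^{\kappa/(1+\beta)}$ and hence the stated empirical $L^2$ rate. The companion bound $\|f_t\|_\CalH\leq\CalO_p(1)\|Q^\pi\|_\CalH$ follows by comparing the KRR objective \eqref{eq:KRR_TD} at $\hat{Q}^\pi$ with its value at the feasible point $Q^\pi$, which forces $\lambda\|\hat{Q}^\pi\|_\CalH^2$ to be controlled, and then transferring back to $f_t$ through the optimization estimate of the first paragraph.

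The main obstacle I anticipate is the localized empirical-process step: since $\CalD^\pi=\hat{Q}^\pi-Q^\pi$ is a random element whose RKHS norm is not a priori bounded, the supremum of the noise term must be taken over a two-parameter family (radii in $\|\cdot\|_\CalH$ and in $\|\cdot\|_n$), and the peeling device has to be executed so that the modulus stays sharp in both the entropy exponent $\beta$ and the logarithmic exponent $\kappa$. A secondary subtlety is making the population change-of-measure inequality valid at the \emph{empirical} level uniformly over that localized ball, which demands an auxiliary concentration argument itself controlled by the same entropy bound of Assumption~\ref{assump:entropy}.
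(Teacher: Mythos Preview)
Your overall plan matches the paper's: split $f_t-Q^\pi$ into an optimization piece $f_t-\hat{Q}^\pi$ controlled via the linear recursion \eqref{eq:TD_iteration_2}, and a statistical piece $\CalD^\pi=\hat{Q}^\pi-Q^\pi$ controlled by combining Proposition~\ref{prop:KRR_TD_error_decomposition}, the change-of-measure coercivity from Assumption~\ref{assump:dist_mu_gamma}, and an empirical-process modulus bound under Assumption~\ref{assump:entropy}. The coercivity step and the peeling device you describe are essentially those of the paper (its Lemma~\ref{lem:Modulus-Continuity}).

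There is, however, one genuine gap: your treatment of the regularization term and of the RKHS-norm bound. You upper-bound the bias by $|\lambda\langle\CalD^\pi,\hat{Q}^\pi\rangle_\CalH|\leq \lambda\|\CalD^\pi\|_\CalH\|\hat{Q}^\pi\|_\CalH$ and propose to obtain $\|\hat{Q}^\pi\|_\CalH=\CalO_p(1)\|Q^\pi\|_\CalH$ afterward by comparing the KRR objective \eqref{eq:KRR_TD} at $\hat{Q}^\pi$ and at $Q^\pi$. That comparison does \emph{not} deliver the claim: since the target in \eqref{eq:KRR_TD} is the noisy TD target, the data-fitting term at $f=Q^\pi$ equals $\tfrac{1}{n}\sum_i(\varepsilon_i+\gamma\CalD^\pi(\omega_1^{(i)}))^2$, whose dominant part $\tfrac{1}{n}\sum_i\varepsilon_i^2$ is $\CalO_p(1)$, not $o(\lambda)$; you only get $\|\hat{Q}^\pi\|_\CalH=\CalO_p(\lambda^{-1/2})$. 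And without an a priori control on $\|\hat{Q}^\pi\|_\CalH$ (hence on $\|\CalD^\pi\|_\CalH$), your ``self-bounding quadratic inequality'' has no coercive term in the RKHS norm and cannot be closed---the modulus $n^{-1/2}\|\CalD^\pi\|_n^{1-\beta}\|\CalD^\pi\|_\CalH^{\beta}$ on the right is unbounded. The paper's fix is to \emph{rearrange} rather than bound the bias: write $-\lambda\langle\CalD^\pi,\hat{Q}^\pi\rangle_\CalH=-\lambda\|\hat{Q}^\pi\|_\CalH^2+\lambda\langle Q^\pi,\hat{Q}^\pi\rangle_\CalH$ and move $\lambda\|\hat{Q}^\pi\|_\CalH^2$ to the left-hand side. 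The resulting oracle inequality is coercive in both $\|\CalD^\pi\|_n$ and $\|\hat{Q}^\pi\|_\CalH$, and a case split on $\|\hat{Q}^\pi\|_\CalH\gtrless\|Q^\pi\|_\CalH$ yields both conclusions of \eqref{eq:convergence_Kernel_TD} simultaneously (this is the paper's Theorem~\ref{thm:convergence_KRR}).

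A smaller divergence: for the optimization part the paper does not invoke Assumption~\ref{assump:dist_mu_gamma} to control the spectrum of $M=(1-\alpha)\mathrm{I}-\eta\BK+\eta\gamma\BC$. It uses only the crude bound that the eigenvalues of $\BK-\gamma\BC$ lie in $[-\gamma\max_\omega K(\omega,\omega)\,n,\ (1+\gamma)\max_\omega K(\omega,\omega)\,n]$, together with $\alpha=\eta\lambda n$, and shows $\|f_t-\hat{Q}^\pi\|_n^2=\CalO(\lambda)$ (not $\CalO(n^{-1})$) after the prescribed number of iterations---which is already dominated by the statistical rate. Assumption~\ref{assump:dist_mu_gamma} is used only in the statistical part, exactly where you place it.
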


Theorems~\ref{thm:convergence_Kernel_TD} yields convergence only on the dataset $\{\omega_0^{(i)}\}_{i=1}^n$, not generalization error, since Assumption~\ref{assump:entropy} is imposed solely for general applicability. Establishing generalization requires additional structural information on the RKHS, after which it follows readily from sampling inequalities.
\begin{corollary}
\label{coro:convergence_L2}
    Suppose the same assumptions in Theorem~\ref{thm:convergence_Kernel_TD} hold, then
    \begin{enumerate}
        \item[\textbf{Tabular:}]: If $\CalS\times\CalA$ are uncorrelated discrete set $\{\omega_j:j=1,\cdots, n^\nu\}$ for some constant $\nu\in(0,1)$, then $K=\delta_{s=s'}\delta_{a=a'}$ and the Kernel TD estimator satisfies
        \begin{equation}
            \label{eq:convergence_tabular}
            \|f_t-Q^\pi\|_{L_2(\sigma^\pi_0)}\leq \CalO_p\left((1-c\gamma)^{-1}n^{-\frac{1}{2}}|\log n|^{1/2}+\frac{1}{n^{(1+\nu)/4}}\right)\|Q^\pi\|_{\CalH};
        \end{equation}
        \item[\textbf{Sobolev:}] If $K=K_\CalS\delta_{a=a'}$ where $K_\CalS$ is a Sobolev type kernel with smoothness  $m$ and intrinsic dimension $d$ and $\CalA=\{a\}_{a=1}^A$, then the Kernel TD estimator  satisfies
\begin{equation}
            \label{eq:convergence_Sobolev}
            \|f_t-Q^\pi\|_{L_2(\sigma^\pi_0)}\leq \CalO_p\left((1-c\gamma)^{-\frac{2m+d/2}{2m+d}}n^{-\frac{m}{2m+d}}\right)\|Q^\pi\|_{\CalH}.
        \end{equation}
        \item[\textbf{NTK:}] If $K=N\delta_{a=a'}$ where $N$ is the NTK $N(s,s')$ of a two-layer neural network on a $d$-sphere $\mathbb{S}^{d-1}$  and $\CalA=\{a\}_{a=1}^A$, then the Kernel TD estimator satisfies
        \begin{equation}
            \label{eq:convergence_NTK}
            \|f_t-Q^\pi\|_{L_2(\sigma^\pi_0)}\leq \CalO_p\left((1-c\gamma)^{-\frac{3d+1}{4d}}n^{-\frac{d+1}{4d}}\right)\|Q^\pi\|_{\CalH}.
        \end{equation}
        \item[\textbf{Gaussian:}]  If $\CalH$ is Gaussian RKHS, i.e., $K=e^{-\|\omega-\omega'\|^2}$ with  $\CalS\times\CalA=[0,1]^{d}$  hypercube, then 
        \begin{equation}
        \label{eq:convergence_Gaussian}
           \|f_t-Q^\pi\|_{L_2(\sigma^\pi_0)}\leq  \CalO_p\left((1-c\gamma)^{-1}n^{-\frac{1}{2}}|\log n|^{{(d+1)}/{2}}\right)\|Q^\pi\|_{\CalH}.
        \end{equation}

    \end{enumerate}
\end{corollary}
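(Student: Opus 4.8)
The plan is to reduce all four cases to a single \emph{empirical-to-population} conversion and then specialize the structural constants. Theorem~\ref{thm:convergence_Kernel_TD} already controls the on-sample quantity $\frac1n\sum_{i=1}^n|f_t(\omega_0^{(i)})-Q^\pi(\omega_0^{(i)})|^2$ together with the norm bound $\|f_t\|_\CalH\lesssim\|Q^\pi\|_\CalH$, hence $\|f_t-Q^\pi\|_\CalH\lesssim\|Q^\pi\|_\CalH$. Writing $g=f_t-Q^\pi$, the goal is an inequality of the form
\[\|g\|_{L_2(\sigma^\pi_0)}\;\lesssim\;\Big(\frac1n\sum_{i=1}^n g(\omega_0^{(i)})^2\Big)^{1/2}+\rho_n\,\|g\|_\CalH,\]
where the first term is supplied verbatim by Theorem~\ref{thm:convergence_Kernel_TD} and $\rho_n$ is a structure-dependent remainder. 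Because $\sigma^\pi_0$ is bounded above and below (Assumption~\ref{assump:distribution_bounded}), the empirical design is nondegenerate, and it remains only to produce $\rho_n$ in each regime and check that it does not dominate the empirical rate.

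For the \textbf{tabular} case I would argue by direct concentration rather than a sampling inequality. With $n^\nu$ atoms and $\sigma^\pi_0$ bounded below, each atom mass is forced to be $\asymp n^{-\nu}$; binomial/Bernstein concentration then gives $|\hat\sigma(\omega_j)-\sigma^\pi_0(\omega_j)|\lesssim\sqrt{\sigma^\pi_0(\omega_j)/n}\asymp n^{-(1+\nu)/2}$ for each atom. Summing the per-atom gap of the squared norm, $\big|\|g\|_{L_2(\sigma^\pi_0)}^2-\tfrac1n\sum_i g(\omega_0^{(i)})^2\big|=\big|\sum_j(\sigma^\pi_0-\hat\sigma)(\omega_j)\,g(\omega_j)^2\big|\lesssim n^{-(1+\nu)/2}\|g\|_\CalH^2$, since $\sum_j g(\omega_j)^2=\|g\|_\CalH^2$ for the delta kernel. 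This yields $\rho_n\asymp n^{-(1+\nu)/4}$. Plugging the $\beta=0,\kappa=1/2$ specialization of Theorem~\ref{thm:convergence_Kernel_TD} into the first term reproduces the $(1-c\gamma)^{-1}n^{-1/2}|\log n|^{1/2}$ contribution, and adding $\rho_n\|Q^\pi\|_\CalH$ gives~\eqref{eq:convergence_tabular}.

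For the three continuous cases I would invoke a scattered-data \emph{sampling inequality}: for $g$ in the RKHS, $\|g\|_{L_2(\sigma^\pi_0)}\lesssim(\tfrac1n\sum_i g(\omega_0^{(i)})^2)^{1/2}+C(h)\|g\|_\CalH$, where $h$ is the fill distance of the $n$ design points. Since the sampling density is bounded below, $h\asymp(\log n/n)^{1/d}$ with high probability, so $C(h)$ decays like a power (or faster) of $n^{-1/d}$ dictated by the smoothness. For the \textbf{Sobolev} kernel of smoothness $m$ and intrinsic dimension $d$, $C(h)\asymp h^{m}\asymp(\log n/n)^{m/d}=o(n^{-m/(2m+d)})$ since $m/d>m/(2m+d)$, so the empirical term dominates; after simplifying $\tfrac{2+\beta}{2+2\beta}=\tfrac{2m+d/2}{2m+d}$ with $\beta=d/(2m),\kappa=0$ this gives~\eqref{eq:convergence_Sobolev}. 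For the \textbf{NTK} on $\mathbb{S}^{d-1}$ I would use the known norm-equivalence of its RKHS to a sphere Sobolev space of smoothness $(d+1)/2$ on the $(d-1)$-dimensional sphere, i.e. $\beta=(d-1)/(d+1),\kappa=0$; inserting these into Theorem~\ref{thm:convergence_Kernel_TD} and simplifying produces exactly the exponents $(d+1)/(4d)$ and $(3d+1)/(4d)$, while the corresponding $C(h)$ again sits below the empirical rate, giving~\eqref{eq:convergence_NTK}. For the \textbf{Gaussian} kernel, the spectral (analytic) smoothness makes $C(h)$ decay faster than any polynomial in $n$, so the remainder is negligible and the $\beta=0,\kappa=(d+1)/2$ specialization yields~\eqref{eq:convergence_Gaussian}.

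The main obstacle is the continuous sampling inequality with \emph{random} nodes: standard scattered-data estimates presuppose control of both the fill and separation distances, whereas i.i.d. points can cluster, so one must either use a version robust to small separation or restrict to a high-probability event on which the design is quasi-uniform, and then verify uniformly over the ball $\{\|g\|_\CalH\lesssim\|Q^\pi\|_\CalH\}$ that the remainder truly sits below the empirical rate. A secondary technical point is pinning down the correct smoothness and entropy constants for the NTK on the sphere so that the exponent $(d+1)/(4d)$ comes out exactly; the tabular concentration step also requires the mild self-bounding argument implicit in passing from the squared-norm gap back to $\|g\|_{L_2(\sigma^\pi_0)}$.
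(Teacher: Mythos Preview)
Your proposal is correct and follows essentially the same route as the paper: a sampling inequality of the form $\|g\|_{L_2(\sigma^\pi_0)}^2\lesssim\|g\|_n^2+\delta_{\CalH,n}\|g\|_\CalH^2$ is combined with the empirical bound and RKHS-norm control from Theorem~\ref{thm:convergence_Kernel_TD}, and the structural constants $(\beta,\kappa)$ are specialized case by case, with the NTK reduced to the Sobolev case via the Laplace/Mat\'ern equivalence at smoothness $(d+1)/2$ on the $(d-1)$-sphere. Your separation-distance worry is unnecessary in the paper's execution: the cited scattered-data inequalities (e.g., Proposition~2.6 in \cite{tuo2020improved} for Sobolev and the Gaussian result in \cite{rieger2008sampling}) depend only on the fill distance, and the fill distance of i.i.d.\ points from a lower-bounded density is handled directly by a covering/coupon-collector bound (Proposition~3 in \cite{krieg2024random}), so no quasi-uniformity restriction is needed.
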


\section{Policy Improvement}

We generalize the popular NPG update rule to a version for policy evaluation in the infinite-dimensional RKHS. In this section, we first present our algorithm for updating the policy function from a functional perspective, then we provide global convergence for our algorithm. 

\subsection{Natural Policy Gradient in RKHS}
\label{sec: NPG}

We consisder updating policy  by NPG:
\begin{equation}
    \label{eq:NPG}
    \pi^{k+1}\propto \pi^k \exp\{\Delta_k f_T^{(k)}\}\propto\exp\{\sum_{j=0}^k\Delta_j f_T^{(j)}\}
\end{equation}
where $f_T^{(j)}$ is the kernel TD estimator of $Q^{(j)}:=Q^{\pi^j}$, trained for $T$ steps on initial data $\{s_0^{(i)}\}_{i=1}^n$ and $\Delta_j$ is the step size. \eqref{eq:NPG} is the solution of the following functional proximal gradient optimization:
\begin{lemma}
    \label{lem:NPG_optimize}
     Suppose $\pi^k\propto \exp\{F\}$ and $1>\pi^{(k+1)}>0$. Then on the sets $\{s^{(i)}_0\}\times\CalA$,
    \begin{equation}
\label{eq:NPG_optimize}
    \pi^{k+1}=\argmax_{\int_\CalA\pi(a|\cdot)da=1,\pi\geq 0} \E_n\left[\Delta_k\int_\CalA f^{(k)}(s,a)\pi(a|s)da- \text{KL}\left(\pi(\cdot|s)\|\pi^k(\cdot|s)\right)\right]
\end{equation}
where $\E_n[f(s)]=\frac{1}{n}\sum_{i=1}^nf(s_0^{(i)})$ is the empirical expectation induced by dataset $\{s_0^{(i)}\}$ and KL$(p\|q)$ denote the KL divergence between distributions $p$ and $q$.
\end{lemma}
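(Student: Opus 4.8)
The plan is to exploit the fact that the empirical objective in \eqref{eq:NPG_optimize} decouples over the finite set of anchor states $\{s_0^{(i)}\}_{i=1}^n$. Because the normalization constraint $\int_\CalA\pi(a|\cdot)\,da=1$ and the nonnegativity constraint act on each conditional $\pi(\cdot|s_0^{(i)})$ separately, and $\E_n$ is an unweighted average over these states, maximizing the sum is equivalent to maximizing, independently for each fixed $s=s_0^{(i)}$, the per-state objective
\[J_s(\pi):=\Delta_k\int_\CalA f^{(k)}(s,a)\,\pi(a|s)\,da-\text{KL}\!\left(\pi(\cdot|s)\,\|\,\pi^k(\cdot|s)\right)\]
over the set of probability densities on $\CalA$. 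So it suffices to solve a single KL-regularized linear problem over densities on $\CalA$ for fixed $s$.

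For fixed $s$ the cleanest route is a completing-the-KL (Gibbs variational) identity rather than Lagrangian calculus of variations. I would define the candidate
\[\tilde\pi(a):=\frac{\pi^k(a|s)\exp\{\Delta_k f^{(k)}(s,a)\}}{Z(s)},\qquad Z(s):=\int_\CalA \pi^k(a'|s)\exp\{\Delta_k f^{(k)}(s,a')\}\,da',\]
which is exactly the NPG update \eqref{eq:NPG} and is a bona fide density (positive and normalized) by construction. Using the algebraic identity $\Delta_k f^{(k)}(s,a)+\log\pi^k(a|s)=\log\tilde\pi(a)+\log Z(s)$ inside the integrand, and $\int_\CalA\pi(a|s)\,da=1$, I would rewrite
\[J_s(\pi)=\int_\CalA \pi(a|s)\Big(\log\tilde\pi(a)-\log\pi(a|s)\Big)\,da+\log Z(s)=-\text{KL}\!\left(\pi(\cdot|s)\,\|\,\tilde\pi\right)+\log Z(s).\]
Since $\log Z(s)$ is a constant independent of $\pi$, and $\text{KL}(\cdot\|\tilde\pi)\ge 0$ with equality if and only if the arguments coincide, the unique maximizer of $J_s$ is $\pi(\cdot|s)=\tilde\pi$, i.e. the exponentiated update. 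Strict convexity of the KL divergence delivers uniqueness without checking second-order conditions.

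Finally I would reassemble the per-state maximizers into the global maximizer and verify feasibility. The hypothesis $1>\pi^{k+1}>0$ guarantees the maximizer lies in the interior of the simplex, so the nonnegativity constraint is inactive and $\tilde\pi$ is admissible; the assumption $\pi^k\propto\exp\{F\}$ ensures $\pi^k>0$, so the logarithms and the KL terms above are well-defined. The main obstacle is bookkeeping rather than depth: one must justify that the infinite-dimensional maximization over densities on the continuous action set $\CalA$ is legitimately solved by the variational identity, i.e. that $Z(s)<\infty$, $\tilde\pi\in L^1(\CalA)$, and $\text{KL}(\pi(\cdot|s)\|\tilde\pi)$ is finite. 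This follows from boundedness of $f^{(k)}$ on the compact set $\CalS\times\CalA$ and positivity of $\pi^k$, after which the completing-the-KL argument bypasses any need to verify Euler--Lagrange stationarity or handle the constraints via explicit Lagrange multipliers.
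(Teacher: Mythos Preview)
Your proposal is correct and arguably cleaner than the paper's own argument, but it takes a genuinely different route. The paper proceeds via a Lagrangian: it introduces a multiplier $\Lambda(s)$ for the normalization constraint, takes the functional derivative of the resulting Lagrangian with respect to $\pi$, sets it to zero, and solves the stationarity condition $\Delta_k f^{(k)}(s,a)+F(s,a)-(\log\pi(a|s)+1+\log Z(s))+\Lambda(s)=0$ to conclude that the optimizer must be proportional to $\pi^k\exp\{\Delta_k f^{(k)}\}$. Your approach instead first decouples over the finite anchor states and then, for each fixed $s$, applies the Gibbs variational identity (``completing the KL'') to rewrite $J_s(\pi)=-\text{KL}(\pi(\cdot|s)\|\tilde\pi)+\log Z(s)$, from which the maximizer and its uniqueness are immediate. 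The advantage of your route is that it bypasses the need to verify that the Euler--Lagrange stationary point is in fact a global maximizer and delivers uniqueness for free via strict convexity; the paper's Lagrangian derivation establishes only first-order stationarity and tacitly relies on concavity of the objective to upgrade this to optimality. Both arguments ultimately rest on the same structural fact---the KL-regularized linear functional is maximized by the exponential tilt---so neither is more general, but yours is more self-contained.
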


 The update rule \eqref{eq:NPG} generalizes existing NPG methods, including those for tabular RL \citep{liu2024elementary}, linear RL \citep{cen2022fast}, and two-layer neural networks \citep{liu2019neural}, to general RKHS functions. Our update rule \eqref{eq:NPG_optimize} allows the policy $\pi^{k}$ to be a continuous function of $(s,a)$ and replaces the population-level expectation with an empirical expectation that is adaptive to the data.

By combining the NPG update rule \eqref{eq:NPG} with the kernel TD update rule \eqref{eq:TD_update}, we introduce an efficient algorithm that generalizes NPG to infinite-dimensional RKHS as follows:

\begin{algorithm}
	\caption{NPG in RKHS} 
    \label{alg:PPO_RKHS}
	\begin{algorithmic}[1]
     \State \textbf{Require}: MDP $(\CalS,\CalA,P,r,\gamma)$, RKHS $\CalH$ 
     \State Initialize $\pi^0\propto \exp\{f^{(0)}\}$ for some $f^{(0)}\in\hat{\CalH}$, set $F=\Delta_0f^{(0)}$
		\For {$k=1,,2,\ldots$}
        \State Select an initial sampling distribution $\mu_0^k$ and number of samples $n\leftarrow n^{(k)}$
        \State Generate $\{[s_0^{(i)},a_0^{(i)},s_1^{(i)},a_1^{(i)}]\sim \mu^k_0(s_0)\pi^{k-1}(a_0|s_0)P(s_1|s_0,a_0)\pi^{k-1}(a_1|s_1)\}_{i=1}^n$
        \State Set $T\leftarrow T^{(k)}$, $\alpha\leftarrow\alpha^{(k)}$, $\eta\leftarrow\eta^{(k)}$ %satisfy conditions in Theorem \ref{thm:convergence_Kernel_TD}
        \State Initialize $f^{(k)}$
			\For {$t=1,\cdots, T$}
                \State Update $f^{(k)}\leftarrow(1-\alpha)f^{(k)}-\eta\sum_{i=1}^n\left(f^{(k)}(\omega_0^{(i)})-r(\omega_0^{(i)})-\gamma f^{(k)}(\omega_1^{(i)})\right)K(\omega_0^{(i)},\cdot)$
                
			\EndFor
			\State $F \leftarrow F+\Delta_k f^{(k)} $, $\pi^k\propto \exp\{F\}$
		\EndFor
	\end{algorithmic} 
\end{algorithm}

From the NTK perspective, Algorithm~\ref{alg:PPO_RKHS} can be viewed as the evolution of a deep neural network described by kernel $K$. Consequently, it remains valid if we replace the RKHS functions $f^{(k)}$
  with a deep neural net $f_{\Btheta^{(k)}}$ parameterized by $\Btheta^{(k)}$ and the policy as $\text{Softmax}(f_{\Btheta^{(k)}})$.

\subsection{Global Convergence of NPG}

We first define the expected total reward to measure the optimality of a policy $\pi$
\begin{equation}
\label{eq:performance_difference}
    \CalR[\pi]=\E_{S\sim \nu^*}[V^{\pi}(S)]=\E_{S\sim \nu^*}[\langle Q^{\pi}(S,\cdot),\pi(\cdot|S)\rangle_\CalA].
\end{equation}

We can have a fundamental inequality for the \eqref{eq:performance_difference} in RKHS. This inequality, which slightly modifies the mirror descent analysis in \cite{nesterov2013introductory,Nemirovsky1985} for infinite-dimensional spaces, is central to our analysis.
\begin{theorem}
\label{thm:one_step_improvement}
  In Algorithm~\ref{alg:PPO_RKHS}, 
  \begin{equation}
      \label{eq:one_step_improvement}
     \begin{aligned}
      &\inf_{k}\left(\CalR[\pi^k]-\CalR[\pi^*]\right)\\
      \leq& \frac{\left(\sum_{k}2\Delta_k\|f^{(k)}-Q^{(k)}\|_{L_\infty}\right)+\left(\sum_k\Delta^2_k(1-\gamma)^{-1}\|r\|_{L_\infty}\right)+\E_{S\sim\nu^*}\text{KL}\left(\pi^*(\cdot|S)||\pi^{0}(\cdot|S)\right)}{\sum_k\Delta_k}.
     \end{aligned}
  \end{equation}
\end{theorem}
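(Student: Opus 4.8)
The plan is to read the policy step of Algorithm~\ref{alg:PPO_RKHS} as mirror descent with the conditional KL divergence as its Bregman divergence, run the classical mirror-descent estimate state-by-state over the continuous action simplex, and convert the resulting linear regret into an optimality gap for $\CalR$ via the performance-difference identity. By Lemma~\ref{lem:NPG_optimize} the update satisfies $\log\pi^{k+1}(\cdot\mid s)=\log\pi^k(\cdot\mid s)+\Delta_k f^{(k)}(s,\cdot)-\log Z_k(s)$ pointwise in $s$. I would substitute this into the three-point (generalized Pythagorean) identity for the KL Bregman divergence: for each fixed $s$ and any competitor density $p$ on $\CalA$,
\begin{equation*}
\text{KL}(p\|\pi^k)-\text{KL}(p\|\pi^{k+1})-\text{KL}(\pi^{k+1}\|\pi^k)=\Delta_k\big\langle f^{(k)}(s,\cdot),\,p-\pi^{k+1}(\cdot\mid s)\big\rangle_{\CalA},
\end{equation*}
where the log-normalizers $\log Z_k(s)$ cancel because $p$ and $\pi^{k+1}$ both integrate to one over $\CalA$.

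Next I would set $p=\pi^*(\cdot\mid s)$, take $\E_{S\sim\nu^*}$, and split $p-\pi^{k+1}=(\pi^*-\pi^k)+(\pi^k-\pi^{k+1})$. For the first piece I replace the critic by the true value plus its error, $\langle f^{(k)},\pi^*-\pi^k\rangle_{\CalA}=\langle Q^{(k)},\pi^*-\pi^k\rangle_{\CalA}+\langle f^{(k)}-Q^{(k)},\pi^*-\pi^k\rangle_{\CalA}$. Since $\langle Q^{(k)}(s,\cdot),\pi^*(\cdot\mid s)-\pi^k(\cdot\mid s)\rangle_{\CalA}=\E_{a\sim\pi^*}[A^{(k)}(s,a)]$ with advantage $A^{(k)}=Q^{(k)}-V^{(k)}$, and $\nu^*$ is stationary for $\pi^*$ (so it coincides with its own discounted visitation measure), the performance-difference identity gives $\E_{S\sim\nu^*}\langle Q^{(k)},\pi^*-\pi^k\rangle_{\CalA}=(1-\gamma)\big(\CalR[\pi^*]-\CalR[\pi^k]\big)$, which is exactly the optimality gap I want to bound. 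The evaluation-error piece is controlled by H\"older's inequality together with $\|\pi^*(\cdot\mid s)-\pi^k(\cdot\mid s)\|_{L_1(\CalA)}\le 2$, producing the term $2\Delta_k\|f^{(k)}-Q^{(k)}\|_{L_\infty}$.

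For the second piece, $\Delta_k\langle f^{(k)},\pi^k-\pi^{k+1}\rangle_{\CalA}$ combined with $-\text{KL}(\pi^{k+1}\|\pi^k)$ is the usual second-order term of mirror descent; I would bound it by H\"older plus Pinsker, $\Delta_k\langle f^{(k)},\pi^{k+1}-\pi^k\rangle_{\CalA}-\text{KL}(\pi^{k+1}\|\pi^k)\le\tfrac12\Delta_k^2\|f^{(k)}\|_{L_\infty}^2$, and then insert the uniform value bound $\|f^{(k)}\|_{L_\infty}\lesssim(1-\gamma)^{-1}\|r\|_{L_\infty}$ inherited from Theorem~\ref{thm:convergence_Kernel_TD} and $\|Q^{(k)}\|_{L_\infty}\le(1-\gamma)^{-1}\|r\|_{L_\infty}$; this is the source of the displayed $\Delta_k^2(1-\gamma)^{-1}\|r\|_{L_\infty}$ contribution (up to the precise constant and power of $(1-\gamma)\|r\|_{L_\infty}$, which I would recheck). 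Summing the identity over $k$, the KL differences telescope and the boundary term $-\E_{S\sim\nu^*}\text{KL}(\pi^*\|\pi^{K+1})\le 0$ is dropped, leaving $\E_{S\sim\nu^*}\text{KL}(\pi^*\|\pi^0)$. Rearranging and using that a strictly positive weighted average dominates its infimum, $\inf_k(\CalR[\pi^*]-\CalR[\pi^k])\le\big(\sum_k\Delta_k\big)^{-1}\sum_k\Delta_k(\CalR[\pi^*]-\CalR[\pi^k])$, yields the stated inequality after dividing by $\sum_k\Delta_k$.

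The main obstacle I anticipate is making the mirror-descent estimate rigorous on the infinite-dimensional action simplex rather than the finite one: I must verify that $\log Z_k(s)$ and the conditional KL integrals are finite (guaranteed by the strict positivity $1>\pi^{k+1}>0$ assumed in Lemma~\ref{lem:NPG_optimize} together with boundedness of $f^{(k)}$), and that the three-point identity and Pinsker's inequality carry over verbatim to densities on $\CalA$. A secondary difficulty is the bookkeeping of the $(1-\gamma)$ factor produced by the performance-difference step and the reconciliation of the empirical expectation $\E_n$ used in the proximal update of Lemma~\ref{lem:NPG_optimize} with the population expectation $\E_{S\sim\nu^*}$ entering $\CalR$; I would handle the latter by proving this fundamental inequality at the population level and deferring the per-iteration sampling error to the separate sample-size rule.
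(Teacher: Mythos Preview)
Your proposal is correct and follows the same mirror-descent template as the paper: the three-point KL identity for the exponential update, the performance-difference lemma to convert $\E_{S\sim\nu^*}\langle Q^{(k)},\pi^*-\pi^k\rangle_\CalA$ into the optimality gap, H\"older for the critic-error term, Pinsker plus the elementary inequality $xy-\tfrac12 y^2\le\tfrac12 x^2$ for the quadratic term, and a telescoping sum over $k$.

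The one place you deviate is in the Pinsker step: you pair $\pi^{k+1}-\pi^k$ with $f^{(k)}$ and then appeal to Theorem~\ref{thm:convergence_Kernel_TD} for $\|f^{(k)}\|_{L_\infty}$, whereas the paper keeps $Q^{(k)}$ in that slot (and pairs the critic error $f^{(k)}-Q^{(k)}$ with $\pi^*-\pi^{k+1}$ instead of $\pi^*-\pi^k$). The paper's grouping is cleaner because $\|Q^{(k)}\|_{L_\infty}\le(1-\gamma)^{-1}\|r\|_{L_\infty}$ is immediate from the geometric series in \eqref{eq:Q-func}; by contrast Theorem~\ref{thm:convergence_Kernel_TD} only gives an RKHS-norm bound on $f^{(k)}$, so your route would need the extra step $\|f^{(k)}\|_{L_\infty}\le\sup_\omega K(\omega,\omega)^{1/2}\|f^{(k)}\|_\CalH$ (producing a different constant) or a regrouping back to $Q^{(k)}$. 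Your caution about the $(1-\gamma)$ factor from the performance-difference identity and about the exact power of $(1-\gamma)^{-1}\|r\|_{L_\infty}$ in the quadratic term is well placed; the paper's stated form of Lemma~\ref{lem:performance_difference_lemma} elides that factor.
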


From Theorem~\ref{thm:one_step_improvement}, we observe that achieving the optimal $\CalO(k^{-1/2})$ convergence rate for stochastic optimization requires selecting suitable parameters in Algorithm~\ref{alg:PPO_RKHS} to ensure the policy evaluation error is well-controlled under the $L_\infty$ norm. This leads to the following corollary.
\begin{corollary}
\label{coro:NGP_convergence}
Set $\Delta_k=1/\sqrt{k}$. For settings listed in Corollary~\ref{coro:convergence_L2}, set $n^{(k)}$ and $\lambda^{(k)}$ according to Table~\ref{tab:NPG_parameter}. Set $\alpha^{(k)},\eta^{(k)}$, and $T^{(k)}$ according to Theorem~\ref{thm:convergence_Kernel_TD} with $\lambda=\lambda^{(k)}$. Then under the same conditions as Corollary~\ref{coro:convergence_L2},  we have
\begin{equation}
    \label{eq:NGP_convergence}
    \inf_{1\leq k\leq k^*}\left(\CalR[\pi^k]-\CalR[\pi^*]\right)\leq \CalO_p(\frac{1}{\sqrt{k^*}}).
\end{equation}

% \begin{table}[H]
%     \centering
% \caption{Parameters selection in Algorithm~\ref{alg:PPO_RKHS} for four cases in Corollary~\ref{coro:convergence_L2}}
% \label{tab:NPG_parameter}.
% \begin{tabular}{ |p{1.2cm}|p{5cm}|p{5cm}|  }
%  %\hline
%  %\multicolumn{4}{|c|}{Country List} \\
%  \hline Setting
%  & $n^{(k)}$ & $\lambda^{(k)}$\\
%  \hline
%  Tabular   & $\CalO(\frac{\|\pi^k\|^2_\CalH k}{(1-c\gamma)^2}\log \frac{\|\pi^k\|_\CalH k}{1-c\gamma}+\quad(\sqrt{k}\|\pi^k\|_\CalH)^{\frac{4}{1+\nu}})$      & $\CalO\left(\frac{1-c\gamma}{\|\pi^k\|_\CalH\sqrt{k}}\right)$\\
%   \hline
%   Sobolev & $\CalO\bigg(\frac{\|\pi^k\|_\CalH^{\frac{2(2m+d)}{2m-d}}k^{\frac{2m+d}{2m-d}}}{(1-c\gamma)^{\frac{2m+d/2}{m}}}\bigg)$ & $\CalO\left(\frac{1-c\gamma}{\|\pi^k\|_{\CalH}^{\frac{2m}{2m-d}}k^{\frac{m}{2m-d}}}\right)$
% \\
% \hline
% NTK  & $\CalO\left(\frac{\|\pi^k\|_\CalH^{2d}k^{d}}{(1-c\gamma)^{\frac{3d+1}{d+1}}}\right)$ & $\CalO\left(\frac{1-c\gamma}{\|\pi^k\|_{\CalH}^{\frac{d+1}{2}}k^{\frac{d+1}{4}}}\right)$ \\
% \hline
% Gaussian &$\CalO\left(\frac{\|\pi^k\|^{\frac{2}{1-\epsilon}}k^{\frac{1}{1-\epsilon}}}{(1-c\gamma)^2}\log \frac{\|\pi^k\|_\CalH k}{1-c\gamma}\right)$  & $\CalO\left(\frac{(1-c\gamma)}{\|\pi\|_{\CalH}^{\frac{1}{1-\epsilon}}\sqrt{k}^{\frac{1}{1-\epsilon}}}\right),\ \forall\epsilon\in(0,1)$\\
% \hline
% \end{tabular}

% \end{table}

\begin{table}[H]
    \centering
\caption{Parameters selection in Algorithm~\ref{alg:PPO_RKHS} for four cases in Corollary~\ref{coro:convergence_L2}}
\label{tab:NPG_parameter}.
\begin{tabular}{ |p{1.2cm}|p{5cm}|p{5cm}|  }
 %\hline
 %\multicolumn{4}{|c|}{Country List} \\
 \hline Setting
 & $n^{(k)}$ & $\lambda^{(k)}$\\
 \hline
 Tabular   & $\CalO\left(\begin{aligned} & \frac{\|\pi^k\|^2_\CalH k}{(1-c\gamma)^2}\log \frac{\|\pi^k\|_\CalH k}{1-c\gamma} \\ & + (\sqrt{k}\|\pi^k\|_\CalH)^{\frac{4}{1+\nu}} \end{aligned}\right)$      & $\CalO\left(\frac{1-c\gamma}{\|\pi^k\|_\CalH\sqrt{k}}\right)$\\
  \hline
  Sobolev & $\CalO\bigg(\frac{\|\pi^k\|_\CalH^{\frac{2(2m+d)}{2m-d}}k^{\frac{2m+d}{2m-d}}}{(1-c\gamma)^{\frac{2m+d/2}{m}}}\bigg)$ & $\CalO\left(\frac{1-c\gamma}{\|\pi^k\|_{\CalH}^{\frac{2m}{2m-d}}k^{\frac{m}{2m-d}}}\right)$
\\
\hline
NTK  & $\CalO\left(\frac{\|\pi^k\|_\CalH^{2d}k^{d}}{(1-c\gamma)^{\frac{3d+1}{d+1}}}\right)$ & $\CalO\left(\frac{1-c\gamma}{\|\pi^k\|_{\CalH}^{\frac{d+1}{2}}k^{\frac{d+1}{4}}}\right)$ \\
\hline
Gaussian &$\CalO\left(\frac{\|\pi^k\|^{\frac{2}{1-\epsilon}}k^{\frac{1}{1-\epsilon}}}{(1-c\gamma)^2}\log \frac{\|\pi^k\|_\CalH k}{1-c\gamma}\right)$  & $\CalO\left(\frac{(1-c\gamma)}{\|\pi\|_{\CalH}^{\frac{1}{1-\epsilon}}\sqrt{k}^{\frac{1}{1-\epsilon}}}\right),\ \forall\epsilon\in(0,1)$\\
\hline
\end{tabular}

\end{table}
    
\end{corollary}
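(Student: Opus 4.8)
The plan is to push everything through the one-step improvement bound of Theorem~\ref{thm:one_step_improvement} and calibrate the per-iteration evaluation error so that its weighted average decays at the target rate. Substituting $\Delta_k = 1/\sqrt k$, the denominator obeys $\sum_{k=1}^{k^*}\Delta_k = \Theta(\sqrt{k^*})$. Two of the three numerator terms are then immediate: the divergence term $\E_{S\sim\nu^*}\text{KL}(\pi^*\|\pi^0)$ is a fixed constant and contributes $\CalO(1/\sqrt{k^*})$ after division, while the drift term carries the factor $(1-\gamma)^{-1}\|r\|_{L_\infty}$ and $\sum_{k=1}^{k^*}\Delta_k^2 = \sum_{k=1}^{k^*}1/k = \Theta(\log k^*)$, contributing $\CalO(\log k^*/\sqrt{k^*})$. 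Hence the whole argument reduces to controlling the evaluation-error term $\sum_{k=1}^{k^*}2\Delta_k\|f^{(k)}-Q^{(k)}\|_{L_\infty}$.

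The crux is to turn the kernel-TD guarantees, which Theorem~\ref{thm:convergence_Kernel_TD} states on the samples and Corollary~\ref{coro:convergence_L2} states in $L_2(\sigma^\pi_0)$, into an $L_\infty$ bound and then to read off the sample budget. For each regime I would combine the empirical TD-error bound of Theorem~\ref{thm:convergence_Kernel_TD} with its companion RKHS-norm control $\|f^{(k)}\|_\CalH \le \CalO_p(1)\|Q^{(k)}\|_\CalH$ through the setting-specific $L_\infty$ sampling inequality, mirroring how the $L_2$ rates of Corollary~\ref{coro:convergence_L2} are obtained but in the stronger norm. Lemma~\ref{lem:Q_pi_RKHS} then replaces $\|Q^{(k)}\|_\CalH$ by $C\|\pi^k\|_\CalH$, so the $L_\infty$ error factorizes as a sample-size-dependent rate times $\|\pi^k\|_\CalH$. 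Requiring this product to be $\lesssim 1/\sqrt k$ and solving for $n^{(k)}$, with the matching $\lambda^{(k)}$ dictated by Theorem~\ref{thm:convergence_Kernel_TD}, reproduces every entry of Table~\ref{tab:NPG_parameter}; the tabular row is the most transparent sanity check, since the finite discrete structure together with the lower bound on $\sigma^\pi_0$ from Assumption~\ref{assump:distribution_bounded} makes $L_\infty$ directly comparable to $L_2$, and the budget then follows from \eqref{eq:convergence_tabular}.

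With $\|f^{(k)}-Q^{(k)}\|_{L_\infty} = \CalO_p(1/\sqrt k)$ secured at the prescribed budgets, I would finish by summation. The evaluation-error term becomes $\sum_{k=1}^{k^*}2\Delta_k\cdot\CalO_p(1/\sqrt k)=\CalO_p(\sum_{k=1}^{k^*}1/k)=\CalO_p(\log k^*)$, so dividing by $\sqrt{k^*}$ gives $\CalO_p(\log k^*/\sqrt{k^*})$. Combining this with the divergence and drift contributions yields $\inf_{1\le k\le k^*}(\CalR[\pi^k]-\CalR[\pi^*]) = \CalO_p(\log k^*/\sqrt{k^*})$, which is the claimed $\CalO_p(1/\sqrt{k^*})$ once the logarithmic factor is suppressed. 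A union bound over $k\le k^*$ upgrades the per-iteration in-probability bounds to a single event on which all the evaluation guarantees hold simultaneously.

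The hard part will be the $L_2$-to-$L_\infty$ passage together with the implicit dependence on $\|\pi^k\|_\CalH$. The sampling inequalities are genuinely regime-dependent (Sobolev smoothness and intrinsic dimension, NTK spectral decay, Gaussian entropy), so each row of Table~\ref{tab:NPG_parameter} needs its own interpolation exponent, and the $L_\infty$ rate is typically more delicate than the $L_2$ one. Moreover, since $\pi^k\propto\exp\{\sum_{j\le k}\Delta_j f^{(j)}\}$, its RKHS norm may grow along the trajectory; to honor the clean $\CalO_p(1/\sqrt{k^*})$ statement one must either argue that $\|\pi^k\|_\CalH$ stays bounded (absorbing it into the constant) or carry it explicitly through the budget, which is exactly why $\|\pi^k\|_\CalH$ appears inside $n^{(k)}$ in the table.
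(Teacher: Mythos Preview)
Your proposal is correct and follows the paper's strategy: plug $\Delta_k=1/\sqrt{k}$ into Theorem~\ref{thm:one_step_improvement}, observe that the KL and drift terms contribute $\CalO(\log k^*/\sqrt{k^*})$, and reduce everything to securing $\|f^{(k)}-Q^{(k)}\|_{L_\infty}\le \CalO_p(1/\sqrt{k})$ via a regime-dependent $L_\infty$ control combined with Lemma~\ref{lem:Q_pi_RKHS}.

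One point of sharpening: the paper's device for the $L_2\to L_\infty$ passage is not a ``sampling inequality in the stronger norm'' but rather a multiplicative \emph{interpolation/embedding} inequality
\[
\|f\|_{L_\infty}\le C\,\|f\|_{L_2(\mu)}^{\varphi}\,\|f\|_{\CalH}^{1-\varphi},
\]
applied directly to the already-established $L_2(\sigma_0^\pi)$ bound of Corollary~\ref{coro:convergence_L2} together with the RKHS-norm control from Theorem~\ref{thm:convergence_Kernel_TD}. Concretely: tabular uses $\varphi=1$ (finite set, measure bounded below), Sobolev uses Gagliardo--Nirenberg with $\varphi=(2m-d)/(2m)$, NTK reduces to the Sobolev case with $m=(d+1)/2$ and intrinsic dimension $d-1$, and Gaussian uses $\varphi=1-\epsilon$ for any $\epsilon\in(0,1)$. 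You evidently have this in mind (you mention ``interpolation exponent''), but your phrasing ``$L_\infty$ sampling inequality, mirroring how the $L_2$ rates are obtained'' suggests re-running the fill-distance argument in $L_\infty$, which is more laborious and not what the paper does. Once you pivot to the interpolation inequality, the exponents in Table~\ref{tab:NPG_parameter} drop out by setting the right-hand side to $1/\sqrt{k}$ and solving for $n^{(k)}$, exactly as you outline.
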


Corollary~\ref{coro:NGP_convergence} shows that the required sampling number increases with both the step size $k$ and the policy complexity, measured by the RKHS norm $\|\pi^k\|_\CalH$. As the policy nears the optimum, the performance gap narrows, requiring more accurate evaluation and hence more samples. As NPG progresses, the policy may also become highly concentrated, approaching a delta distribution; in this case, the RKHS norm diverges and the sampling requirement grows accordingly.

The RKHS norm is a well-defined and meaningful measure of policy complexity. For examples, in the Sobolev and NTK cases, the policy is from composing a softmax with RKHS function, and remains in the RKHS since the softmax is analytic. Although the RKHS norm is hard to estimate in the NTK case—where the policy corresponds to a deep neural network—other attributes such as network stability and architecture can serve as practical proxies for the RKHS norm complexity.

\section{Experiments}
We conduct numerical experiments to empirically validate our theoretical analysis of Q-function estimation within the NPG framework outlined in Algorithm~\ref{alg:PPO_RKHS}. The experiments are designed to investigate the convergence properties of a deep neural network-based implementation and sample efficiency.
% We further compare our Algorithm with classic PPO algorithms \citep{schulman2017proximal}.

\begin{wrapfigure}[24]{r}{0.6\textwidth} % 16 行环绕高度可按需要调
\centering
% \vspace{-10pt}
\captionsetup[subfigure]{justification=centering}

\begin{subfigure}{0.48\linewidth}
  \centering
  \includegraphics[width=\linewidth]{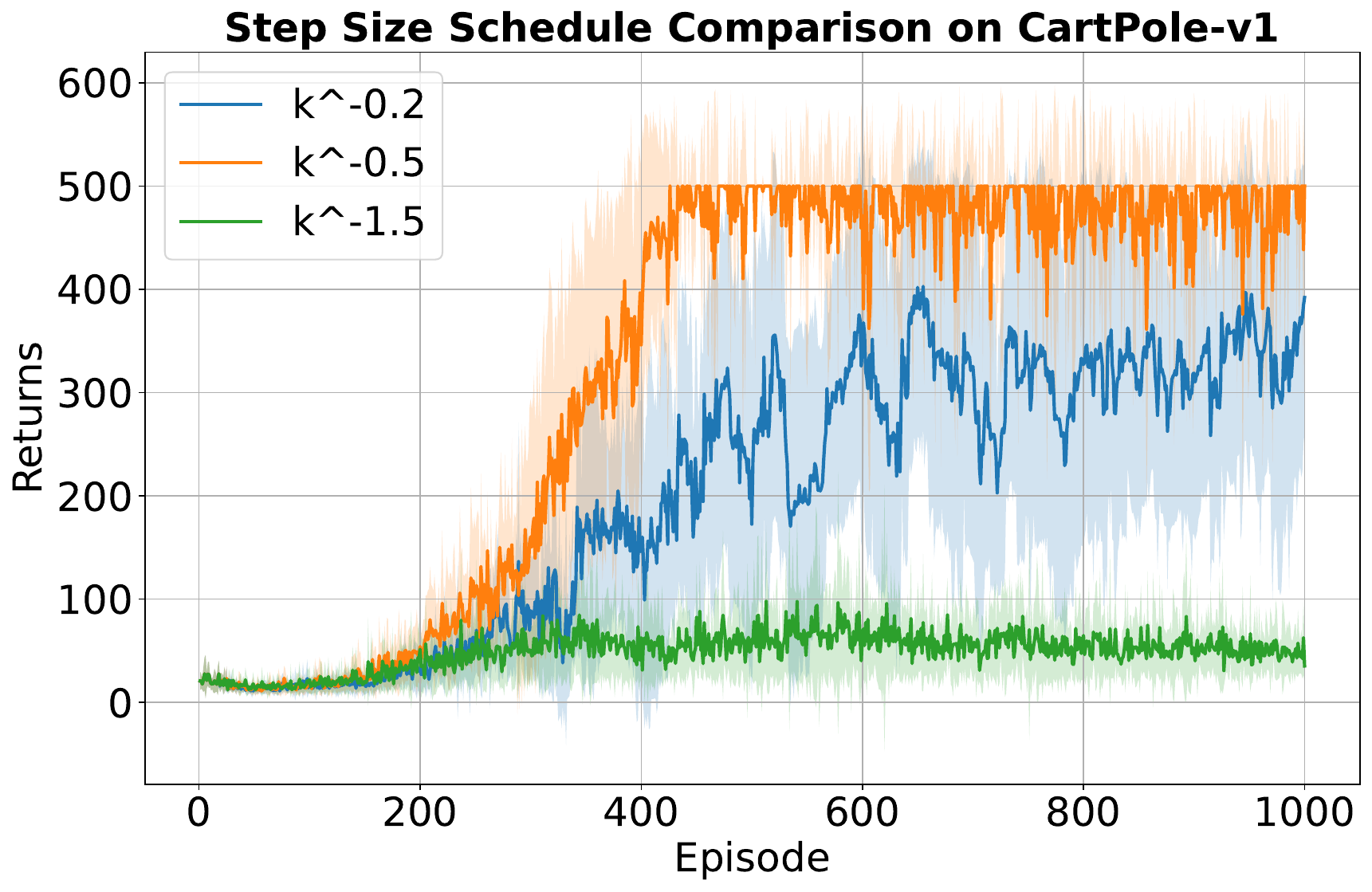}
  \caption{Reward (CartPole-v1).}
\end{subfigure}\hfill
\begin{subfigure}{0.48\linewidth}
  \centering
  \includegraphics[width=\linewidth]{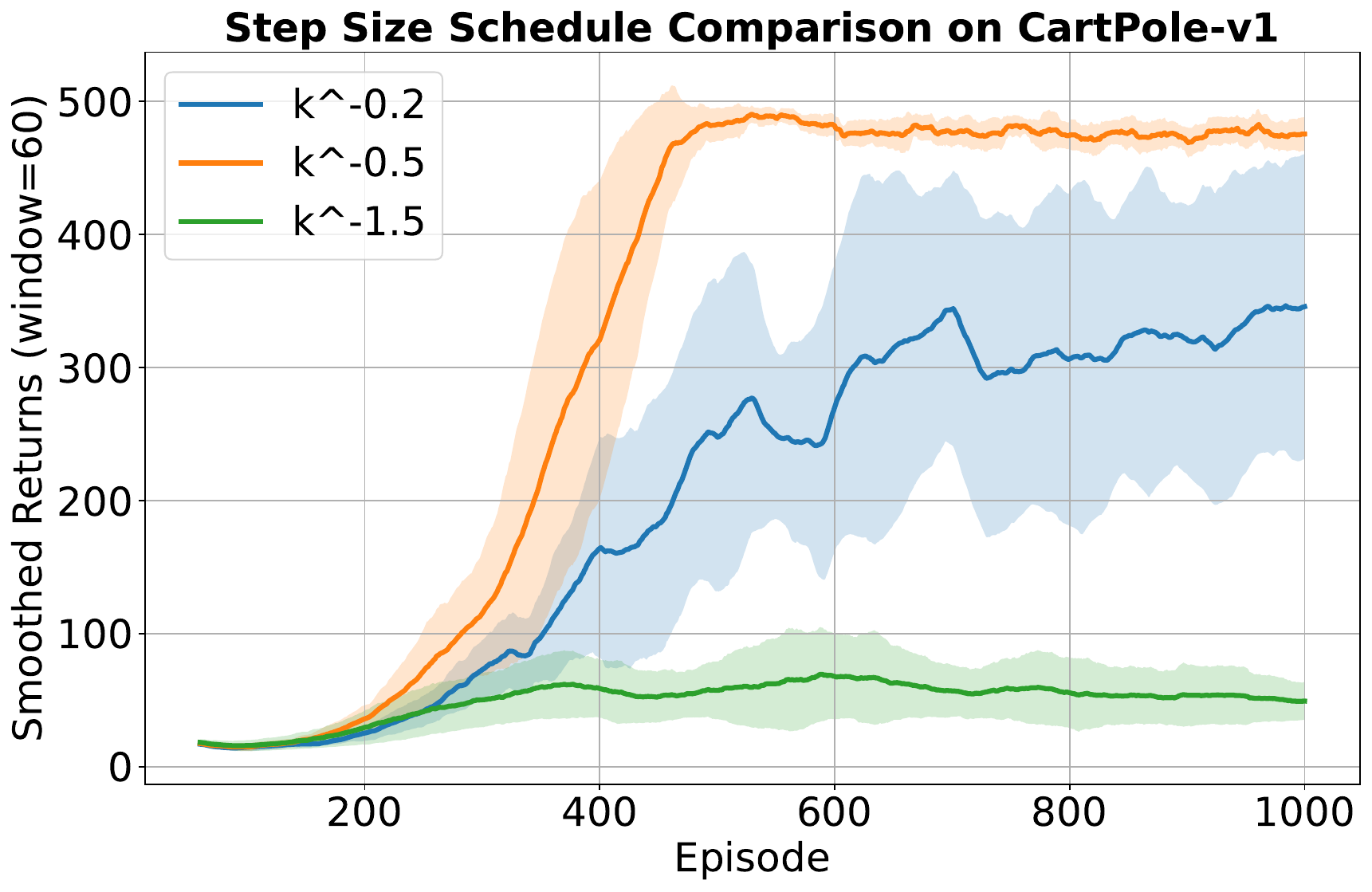}
  \caption{Smoothed (CartPole-v1).}
\end{subfigure}

\vspace{3pt}

\begin{subfigure}{0.48\linewidth}
  \centering
  \includegraphics[width=\linewidth]{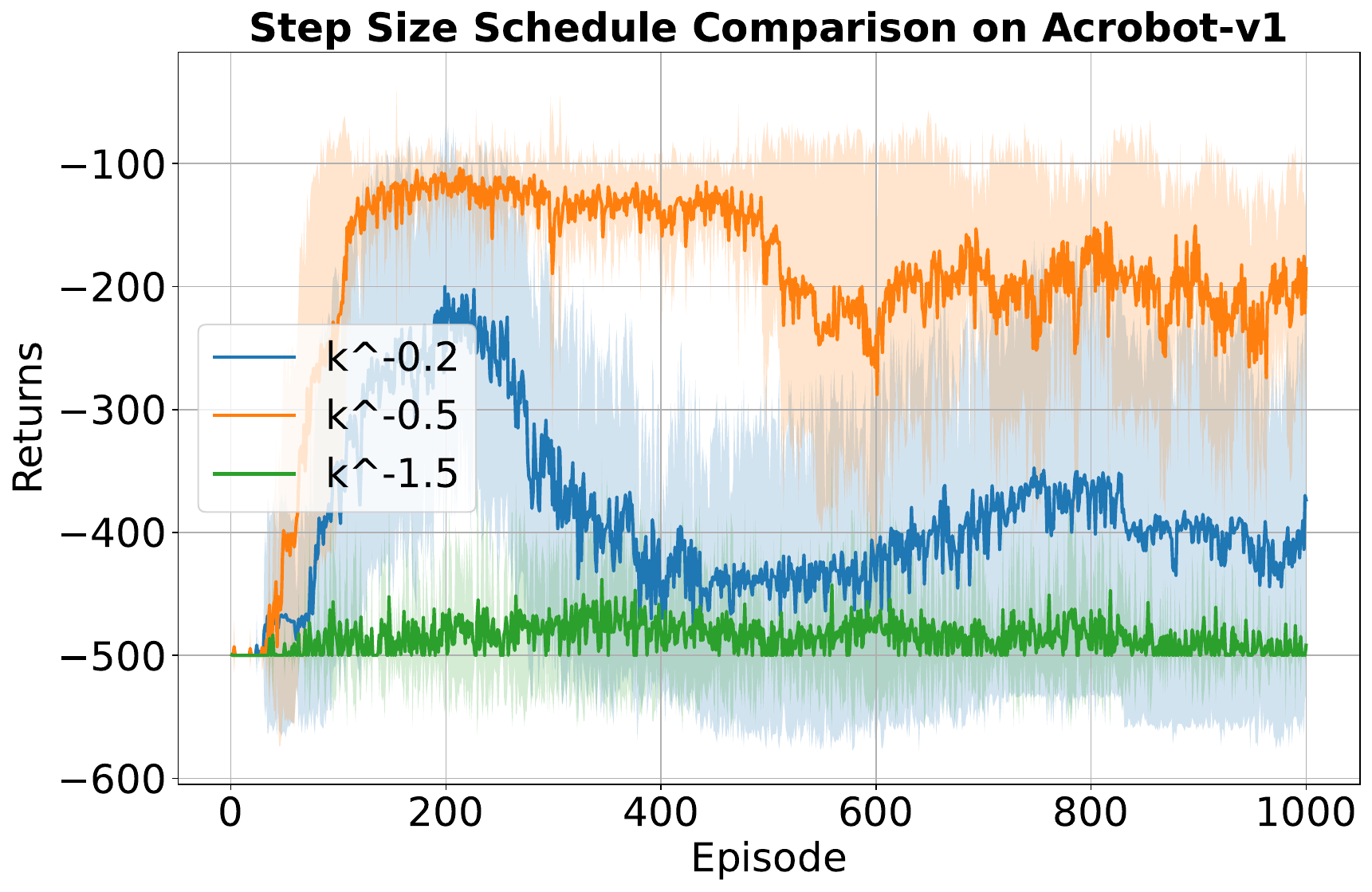}
  \caption{Reward (Acrobot-v1).}
\end{subfigure}\hfill
\begin{subfigure}{0.48\linewidth}
  \centering
  \includegraphics[width=\linewidth]{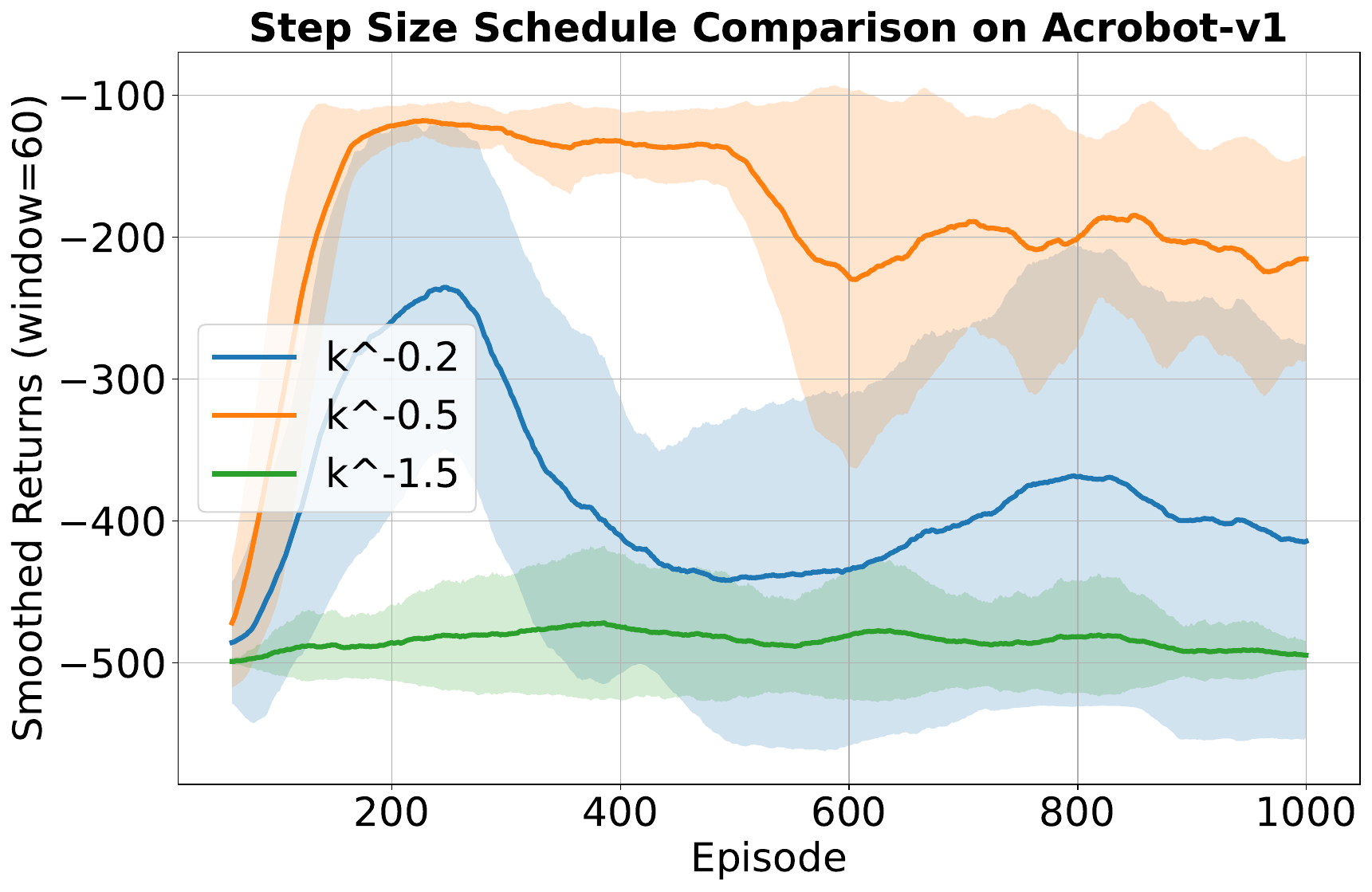}
  \caption{Smoothed (Acrobot-v1).}
\end{subfigure}

\caption{Convergence analysis on CartPole-v1 and Acrobot-v1. We show both raw and smoothed rewards (using a window size of 60). Both environments exhibit a consistent convergence pattern under different step sizes: $\Delta_k = k^{-0.5}$ robustly converges to optimal performance, whereas $\Delta_k = k^{-0.2}$ diverges and $\Delta_k = k^{-1.5}$ stagnates.}
\label{fig:convergence-cartpole-acrobot}
\end{wrapfigure}

\subsection{Experimental Setup}

\textbf{Environments.}  
We evaluate our method on standard benchmarks: \textit{CartPole-v1} \citep{barto2012neuronlike}, a classic low-dimensional control task (4-dim continuous states and 2-dim discrete actions), and \textit{Acrobot-v1} \citep{sutton1998reinforcement}, a more challenging underactuated robotics problem (6-dim continuous states and 3-dim discrete actions. The details of these two environments are in the Appendix~\ref{app:env}.

\textbf{Model Architecture.} We instantiate the non-parametric function $f^{(k)}$ from Algorithm 1 with a neural network $f_{\theta^{(k)}}$, where the policy is given by $\pi_\theta \propto \text{Softmax}(f_{\theta})$. The network is a two-layer MLP.
% with ReLU activations and 64 hidden units. 
This model is implemented within an Actor-Critic paradigm where the MLP backbone serves as the Critic ($f_\theta$), outputting Q-values, and the addition of a Softmax layer forms the Actor ($\pi_\theta$). The Actor and Critic share all parameters, enabling efficient end-to-end training.

\textbf{Algorithm Configuration.} Algorithm~\ref{alg:PPO_RKHS} outlines a two-stage process: optimizing the Q-function for $T$ epochs, followed by a single policy update. 
% A literal deep learning implementation of this proves to be unstable. 
In practice, a finite number of updates (e.g., $T=4$) is insufficient for the Critic to learn well. Consequently, updating the Actor based on an imperfect Critic leads to inefficient and unstable learning. To address this, we adopt a joint optimization strategy, a standard practice in algorithms like PPO. We combine the Critic's TD Error loss (Line 9) and the Actor's NPG objective (Line 12) into a unified loss function. This objective is then minimized over $T$ epochs for each batch of collected data. 
% This approach ensures a stable co-adaptation of the policy and value functions and demonstrates superior empirical performance.

Each outer iteration $k$ proceeds as follows: we execute the current policy to collect $n$ samples, after which we run $T$ optimization epochs over this batch to update the shared actor–critic parameters. Experiments are run for 1000 episodes and averaged over 10 random seeds; we report the mean and standard deviation across seeds. Hyperparameters are shown in Table \ref{tab:hyper_1} in Appendix \ref{app:hyper}.

\subsection{Convergence Analysis}
\label{sec:converge}
We investigate the empirical impact of the step size schedule, $\Delta_k = k^{-\alpha}$, by testing three theoretically motivated exponents: $\alpha \in \{0.2, 0.5, 1.5\}$.

\textbf{Results Analysis.}
Figure~\ref{fig:convergence-cartpole-acrobot} corroborates our theoretical predictions, revealing three distinct learning dynamics determined by the schedule $\alpha$.
% \begin{itemize}
%     \item Divergence ($\alpha = 0.2$): An aggressive schedule (blue curve) with a rapidly decaying KL penalty permits excessively large policy updates, leading to instability and eventual performance collapse.
%     \item Stable Convergence ($\alpha = 0.5$): This balanced schedule (orange curve) robustly converges to optimal performance with low variance across both environments, validating it as an effective choice.
%     \item Stagnation ($\alpha = 1.5$): A conservative schedule (green curve) imposes an overly restrictive KL penalty, stifling exploration and causing learning to stagnate at a highly suboptimal policy.
% \end{itemize}
These results empirically confirm that the step size schedule is a critical factor governing the stability and efficiency of NPG with TD learning. The narrow confidence intervals across 10 seeds highlight the statistical reliability of our findings, providing strong evidence for the optimal convergence for the $\Delta_k = \frac{1}{\sqrt{k}}$ schedule in Corollary \ref{coro:NGP_convergence}. Moreover, as shown in Figure~\ref{fig:convergence-cartpole-acrobot} (c-d), the learning curve for $\Delta_k = \tfrac{1}{\sqrt{k}}$ exhibits a decline after some episodes. This occurs because, as the neural network becomes more complex, the available samples are insufficient to fully support training, thereby weakening the effectiveness of optimization. A similar trend emerges when the CartPole experiments run for longer episodes (e.g., Figure \ref{fig:comparison} (a-b)).

\subsection{Running Efficiency Analysis}
\label{sec:ppo_comp}
To analyze the sample efficiency, we compare our Algorithm with Standard PPO\citep{schulman2017proximal}. The key architectural difference between these methods lies in their data requirements for value estimation: Standard PPO utilizes a state-value critic ($V(s)$) and relies on Generalized Advantage Estimation (GAE). GAE requires complete trajectory $(s_0, a_0, r_0, s_1, a_1, ..., s_t, a_t)$ segments to recursively compute advantage targets, making its update rule non-local in time. Instead, Our NPG employs an action-value critic ($Q(s,a)$) and learns from single-step Temporal Difference (TD) errors. This approach is highly data-efficient, as it only requires $(s_0, a_0, r_0, s_1, a_1)$ learning Q.

\begin{figure}[h]
\centering
% \vspace{-5pt}
\begin{subfigure}{0.31\textwidth}
    \includegraphics[width=\textwidth]{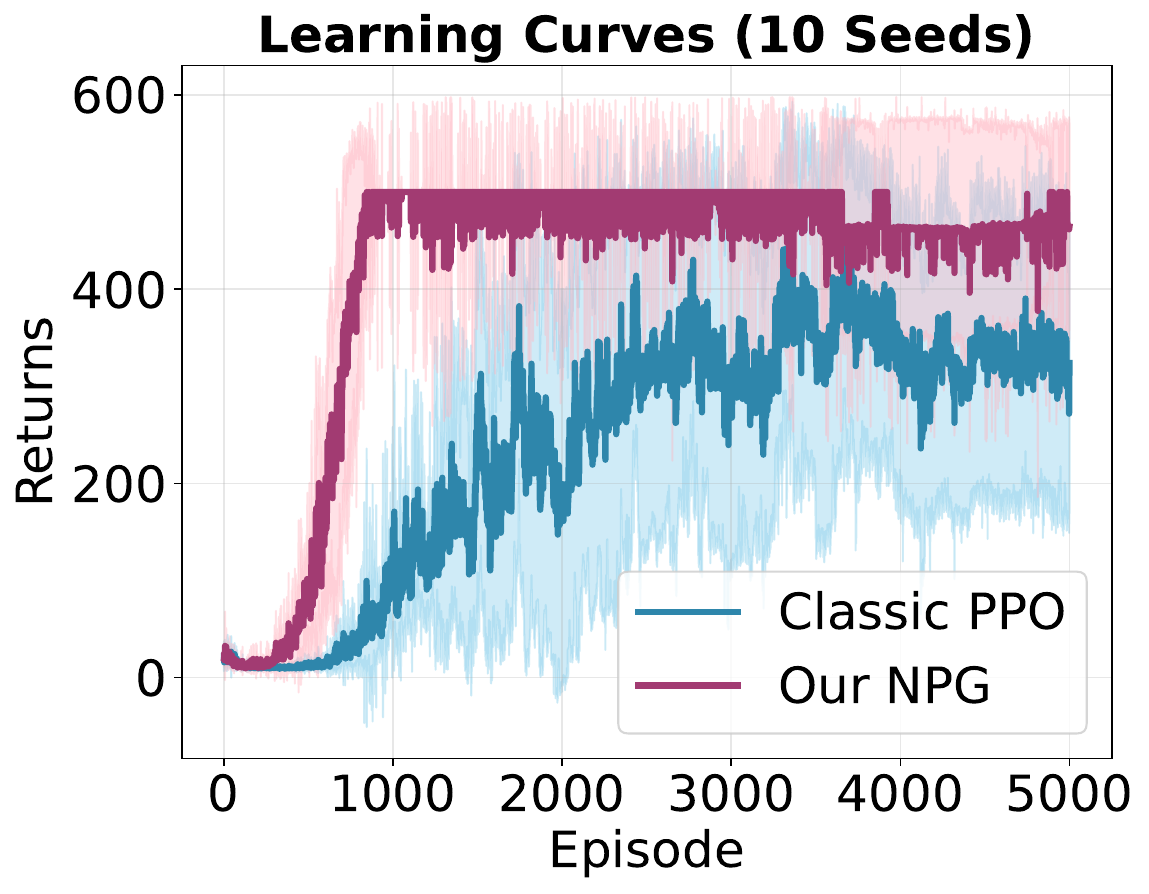}
    \caption{Reward.}
    \label{fig:sub2}
\end{subfigure}
\hfill
\begin{subfigure}{0.31\textwidth}
    \includegraphics[width=\textwidth]{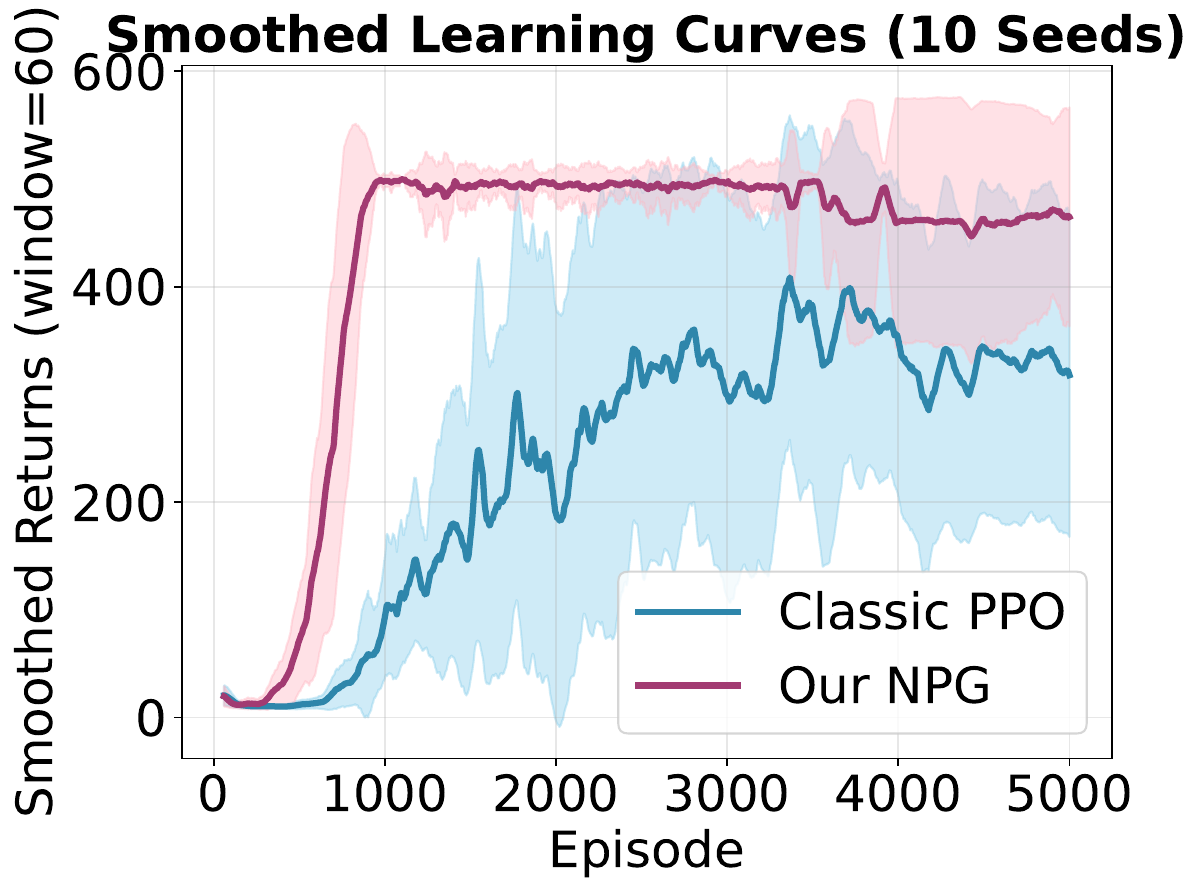}
    \caption{Smoothed reward.}
    \label{fig:sub1}
\end{subfigure}
\hfill
\begin{subfigure}{0.31\textwidth}
    \includegraphics[width=\textwidth]{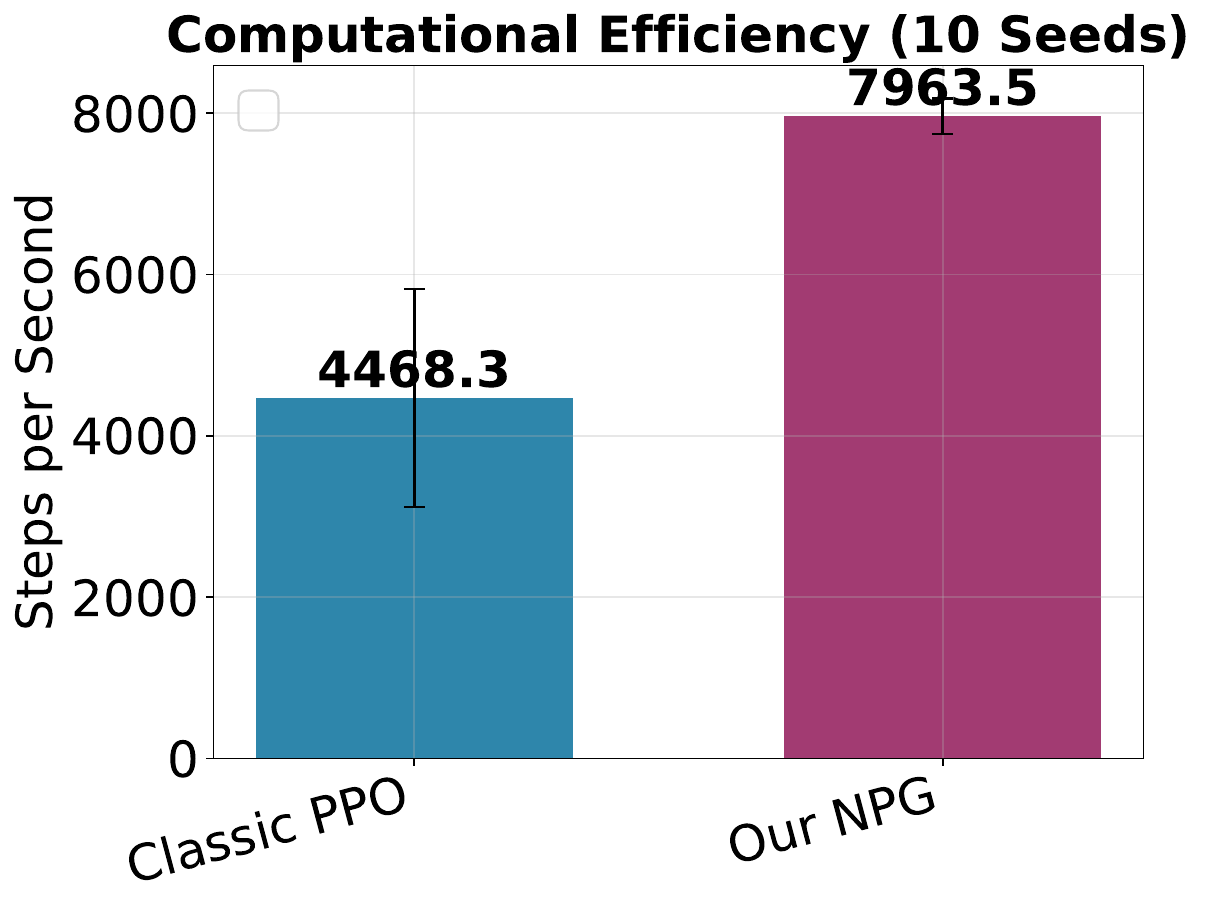}
    \caption{Processed steps/second.}
    \label{fig:sub3}
\end{subfigure}
\vspace{-3pt}
\caption{Performance and efficiency comparison on CartPole-v1 over 10 seeds.} 
% Our NPG with TD learning outperforms Standard PPO.}
\label{fig:comparison}
\end{figure}

\textbf{Results Analysis.}
Our experiments demonstrate that our NPG with TD learning significantly outperforms Standard PPO in both sample efficiency and computational efficiency. As shown in Figure~\ref{fig:comparison} (a–b), our NPG consistently solves the task, achieving the maximum reward of 500. In contrast, Standard PPO converges more slowly and exhibits high variance, with only some seeds reaching the optimal solution. Since our algorithm depends on sampling quality (Assumption~\ref{assump:distribution_bounded} and~\ref{assump:dist_mu_gamma}), choosing a suitable distribution enables fast convergence, as confirmed in Figure~\ref{fig:comparison}. The detailed Hyperparameter setting are explained in the Appendix~\ref{app:hyper}.

Figure \ref{fig:comparison}(c) shows our NPG is about 78.2\% more efficient, processing 7,963 state-actions pairs (i.e. $\omega_t^{(i)}, t = 0,1$) per second compared to PPO's 4,468 pairs (i.e. $\omega_t^{(i)}$) per second. 
This computational advantage stems directly from our algorithm's design. The update for our NPG is based on a computationally lightweight, one-step TD-error calculation. In contrast, Standard PPO must perform a more complex, recursive GAE calculation over entire trajectory segments for each update.

% This analysis confirms that our NPG's architectural choice not only leads to better sample efficiency but also enhances computational throughput. 

\section{Conclusions}

We generalize TD and NPG to infinite-dimensional RKHSs and develop a unified framework for analyzing the convergence rate of kernelized TD in terms of sample size, as well as the sample complexity of RKHS-based NPG for attaining the $\CalO_p(k^{-1/2})$ rate. The framework is illustrated using four commonly studied RKHSs, and numerical experiments further confirm that the theoretical results align well with empirical performance.

\newpage

\bibliography{iclr2026_conference}
\bibliographystyle{iclr2026_conference}

\appendix
\section{Appendix}
\section{Proof of Lemma \ref{lem:Q_pi_RKHS}}
\begin{proof}
    Form the definition of $Q^{\pi}$ and triangle inequality, we have
    \begin{equation}
        \label{eq:Q_pi_RKHS_1}
        \|Q^\pi\|_\CalH\leq \sum_{t=0}^\infty\gamma^t\|\E[r(s_t,a_t)|s_0,a_0]\|_{\CalH}.
    \end{equation}
    So we only need to prove $\|\E[r(s_t,a_t)|s_0=\cdot,a_0=\cdot]\|_{\CalH}\leq C_1R$ for some $C_1<\infty$ for all $t$. For the case $t=0$, we use Assumption \ref{assump:r_pi_P_RKHS} to have 
    \[|\E[r(s_0,a_0)|s_0=\cdot,a_0=\cdot]\|_{\CalH}=\|r\|_{\CalH}\leq R.\]
    For general $t$, we rewrite $\E[r(s_t,a_t)|s_0=\cdot,a_0=\cdot]$ in an integral form
    \begin{align}
        &\E[r(s_t,a_t)|s_0,a_0]\nonumber\\
        =&\int r(s_t,a_t)\left(\prod_{\tau=1}^t\pi(a_\tau|s_\tau)P(s_\tau|s_{\tau-1},a_\tau-1)\right)P(s_1|s_0,a_0)\pi(a_0|s_0) ds_1da_1\cdots ds_tda_t\nonumber\\
        =&\int r(s_t,a_t) P\left((s_1)\to(s_t,a_t)\right)P(s_1|s_0,a_0)\pi(a_0|s_0)\label{eq:Q_pi_RKHS_2}
    \end{align}
    where $P\left((s_1)\to(s_t,a_t)\right)$ is the distribution of $(s_t,a_t)$ conditioned on $s_1$. Let $\{\phi_i\}$ be the eigenfunction of $K$ then the RKHS norms of $P(s|\cdot,\cdot)$ and $\pi$ are then
    \begin{equation}
        \label{eq:Q_pi_RKHS_3}
        \|P(s|\cdot,\cdot)\|_\CalH^2=\sum_ip_i(s)^2\leq R,\quad \|\pi\|_{\CalH}^2=\sum\pi_i^2\leq R
    \end{equation}
    where $p_i(s)=\langle P(s|\cdot,\cdot),\phi_i\rangle_\CalH$ and $\pi_i=\langle\pi,\phi_i\rangle_\CalH$.

    The coefficient regarding $\E[r(s_t,a_t)|s_0,a_0]$ projected onto $\phi_i$ is then
    \begin{equation}
        \label{eq:Q_pi_RKHS_4}
        \begin{aligned}
        E_i=&\langle \E[r(s_t,a_t)|s_0,a_0],\phi_i\rangle_\CalH=\int r(s_t,a_t) P\left((s_1)\to(s_t,a_t)\right)\langle P(s_1|s_0,a_0)\pi(a_0|s_0),\phi_i\rangle_\CalH\\
        \leq & \max_{(s,a)\in\CalS\times\CalA}r(s,a) \max_{s\in\CalS}\langle P(s|s_0,a_0)\pi(a_0|s_0),\phi_i\rangle_\CalH\\
        \leq & \max_{\omega\in\CalS\times\CalA}K(\omega,\omega) R \max_{s\in\CalS}\langle P(s|s_0,a_0)\pi(a_0|s_0),\phi_i\rangle_\CalH
    \end{aligned}
    \end{equation}
    where the last line is because $\max_\omega r(\omega)=\langle r,K(\cdot,\omega)\rangle_\CalH\leq \|K(\cdot,\omega)\|_\CalH\|r\|_{\CalH}\leq \max_\omega K(\omega,\omega) R$.

    From \eqref{eq:Q_pi_RKHS_4}, we only need to calculate the RKHS norm of function $P(s|s_0,a_0)\pi(a_0|s_0)$. From \eqref{eq:Q_pi_RKHS_3}, we can derive that
    \begin{equation}
        \label{eq:Q_pi_RKHS_5}
        \begin{aligned}
            \|P(s|s_0,a_0)\pi(a_0|s_0)\|^2_\CalH=&\|\left(\sum_ip_i(s)\phi_i\right)\left(\sum_i\pi_i\phi_i\right)\|_\CalH^2\\
            =& \sum_i p_i(s)^2\pi_i^2\leq \left(\sup_s\sup_i p_i(s)^2\right)\|\pi\|_{\CalH}^2\leq R^2\|\pi\|_{\CalH}^2.
        \end{aligned}
    \end{equation}
  By combining \eqref{eq:Q_pi_RKHS_3}~\eqref{eq:Q_pi_RKHS_5}, we can conclude that
  \begin{equation}
    \label{eq:Q_pi_RKHS_6}
      \|\E[r(s_t,a_t)|s_0=\cdot,a_0=\cdot]\|_{\CalH}\leq C \|\pi\|_{\CalH}^2.
  \end{equation}
  Substitute \eqref{eq:Q_pi_RKHS_6} into \eqref{eq:Q_pi_RKHS_1}, we can have the final result.
\end{proof}

\section{Proof of Proposition \ref{prop:KRR_TD_representer_thm}}
\begin{proof}[Proof of \eqref{eq:KRR_TD_closed_form}]

We first briefly introduce the concept of covariance operators, which is crucial in the proof. We adopt the shorthand notation $\Phi(\omega)=K(\omega,\cdot)$ and $\mu^{\pi}(\omega,\omega')=\mu^{\pi}((s,a)),(s',a')=\mu(s)\pi(s|a)P(s'|s,a)\pi(a'|s')$. Then the  covariance operator  $C_{\omega_0,\omega_0}$  and cross-covariance operator $C_{\omega_0,\omega_1}$ on the RKHS $\CalH$ are defined as:
\begin{align*}
    &C_{\omega_0,\omega_0}=\E_{(S,A) \sim \mu(s)\pi(a|s)}[\Phi(S,A)\otimes \Phi(S,A)],\\
    &C_{\omega_0,\omega_1}=\E_{((S,A),(S',A'))\sim\mu^{\pi}}[\Phi(S,A)\otimes \Phi({S',A'})]
\end{align*}
where $f\otimes g$ denote the tensor product between functions $f$ and $g$. Accordingly, for any $f_0,f_1\in\CalH$, $C_{\omega_0,\omega_0}$ and $C_{\omega_0,\omega_1}$ satisfy 
\begin{align*}
    &\langle f_0, C_{\omega_0,\omega_0} f_1\rangle_\CalH=\E_{(S,A) \sim \mu(s)\pi(a|s)}[f_0(S,A)f_1(S,A)],\\
    &\langle f_0, C_{\omega_0,\omega_1} f_1\rangle_\CalH=\E_{((S,A),(S',A'))\sim\mu^{\pi}}[f_0(S,A)f_1(S',A')]].
\end{align*}
We can also define the empirical version of operators $C_{\omega_0,\omega_0}$ and $C_{\omega_0,\omega_1}$ as follows:
\[\hat{C}_{\omega_0,\omega_0}=\frac{1}{n}\sum_{i=1}^{n}\Phi({\omega^{(i)}_0})\otimes \Phi({\omega^{(i)}_0}),\quad \hat{C}^{\pi}_{\omega_0,\omega_1}=\frac{1}{n}\sum_{i=1}^{n}\Phi({\omega^{(i)}_0})\otimes \Phi({\omega^{(i)}_1}).\]

With the concept of covariance operator, we can have the following lemma which can yield a specific formulation for the solution of \eqref{eq:KRR_TD}:
\begin{lemma}
\label{lem:KRR_TD_solution}
    Solution to \eqref{eq:KRR_TD} is equivalent to the following equation:
    \begin{equation}
        \label{eq:KRR_TD_solution}
        \hat{C}_{\omega_0,\omega_0}\hat{Q}^{\pi}-\left[ \hat{C}_{\omega_0,\omega_0}r+\gamma\hat{C}_{\omega_0,\omega_1} \hat{Q}^{\pi}\right]+\lambda\hat{Q}^{\pi}=0.
    \end{equation}
\end{lemma}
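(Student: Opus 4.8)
The plan is to derive \eqref{eq:KRR_TD_solution} as the first-order optimality condition for the strongly convex functional in \eqref{eq:KRR_TD}, converting every pointwise evaluation into an RKHS inner product through the reproducing property $f(\omega)=\langle f,\Phi(\omega)\rangle_\CalH$, where $\Phi(\omega)=K(\omega,\cdot)$.

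First I would isolate the semi-gradient structure: in \eqref{eq:KRR_TD} the bootstrap term contains $\hat{Q}^\pi$ rather than the optimization variable $f$, so when minimizing I treat $r(\omega_0^{(i)})+\gamma\hat{Q}^\pi(\omega_1^{(i)})$ as a frozen target. Writing the objective as $J(f)$, its Fr\'echet derivative in a direction $h\in\CalH$ is, up to a factor of two,
\[
\frac{1}{n}\sum_{i=1}^n\big(f(\omega_0^{(i)})-r(\omega_0^{(i)})-\gamma\hat{Q}^\pi(\omega_1^{(i)})\big)h(\omega_0^{(i)})+\lambda\langle f,h\rangle_\CalH.
\]
Because the squared-loss term is convex in $f$ and $\lambda\|f\|_\CalH^2$ is strongly convex, the stationarity condition $DJ(f)[h]=0$ for all $h$ characterizes the unique minimizer; the self-consistent definition of $\hat{Q}^\pi$ then amounts to substituting $f=\hat{Q}^\pi$.

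The central step is rewriting each empirical sum as a covariance operator applied to a function. Using $h(\omega_0^{(i)})=\langle h,\Phi(\omega_0^{(i)})\rangle_\CalH$ together with the tensor-product action $(\phi\otimes\psi)g=\phi\,\langle\psi,g\rangle_\CalH$, I would establish $\hat{C}_{\omega_0,\omega_0}g=\frac{1}{n}\sum_i\Phi(\omega_0^{(i)})\,g(\omega_0^{(i)})$ for $g\in\{\hat{Q}^\pi,r\}$ and $\hat{C}_{\omega_0,\omega_1}\hat{Q}^\pi=\frac{1}{n}\sum_i\Phi(\omega_0^{(i)})\,\hat{Q}^\pi(\omega_1^{(i)})$; the reward identity uses $\langle\Phi(\omega_0^{(i)}),r\rangle_\CalH=r(\omega_0^{(i)})$, which is legitimate because Assumption~\ref{assump:r_pi_P_RKHS} guarantees $r\in\CalH$. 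Factoring $h$ out, the derivative collapses to $\langle h,\,\hat{C}_{\omega_0,\omega_0}\hat{Q}^\pi-\hat{C}_{\omega_0,\omega_0}r-\gamma\hat{C}_{\omega_0,\omega_1}\hat{Q}^\pi+\lambda\hat{Q}^\pi\rangle_\CalH$, and since $h$ is arbitrary the bracketed element must vanish, which is exactly \eqref{eq:KRR_TD_solution}.

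The main obstacle is conceptual rather than computational: one must respect the semi-gradient formulation by freezing the bootstrap target before differentiating and only afterward imposing self-consistency $f=\hat{Q}^\pi$, and one must verify that expressing the reward contribution as $\hat{C}_{\omega_0,\omega_0}r$ is valid, which rests on $r\in\CalH$. Once the reproducing property and the tensor-product identities are in place, collecting terms is routine.
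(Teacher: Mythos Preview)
Your proposal is correct and follows essentially the same route as the paper: both compute the Fr\'echet derivative of the (semi-gradient) objective with the bootstrap target $r(\omega_0^{(i)})+\gamma\hat{Q}^\pi(\omega_1^{(i)})$ held fixed, rewrite the resulting sums via the reproducing property as $\hat{C}_{\omega_0,\omega_0}$ and $\hat{C}_{\omega_0,\omega_1}$ acting on the relevant functions, and then set the gradient to zero at $f=\hat{Q}^\pi$. Your write-up is in fact a bit more explicit than the paper's about the semi-gradient freezing and about why $r\in\CalH$ (Assumption~\ref{assump:r_pi_P_RKHS}) is needed to express the reward contribution as $\hat{C}_{\omega_0,\omega_0}r$, but the underlying argument is identical.
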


From Lemma \ref{lem:KRR_TD_solution}, we can prove that the solution $\hat{Q}^{\pi}$ resides in a finite-dimensional space:
\begin{lemma}
    \label{lem:KRR_TD_finite_dim}
    Let $\hat{\CalH}=\{K(\omega_i,\cdot),i=1,\cdots,n\}$ be the finite dimensional space spanned by kernel functions. We have $\hat{Q}^{\pi}\in\hat{\CalH}$.
\end{lemma}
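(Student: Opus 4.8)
The plan is to invoke Lemma~\ref{lem:KRR_TD_solution} and then exploit the fact that each empirical covariance operator maps all of $\CalH$ into the finite-dimensional span $\hat{\CalH}=\mathrm{span}\{K(\omega_0^{(i)},\cdot)\}_{i=1}^n$ of kernel sections at the sampled first-coordinate points. The entire argument is a representer-theorem-style computation, so I do not anticipate a genuine obstacle; the only point requiring care is correctly evaluating how the rank-one tensor-product operators act via the reproducing property.

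First I would record the action of each empirical operator on an arbitrary $f\in\CalH$. Writing $\Phi(\omega)=K(\omega,\cdot)$ and using $\langle\Phi(\omega'),f\rangle_\CalH=f(\omega')$, each rank-one operator satisfies $(\Phi(\omega)\otimes\Phi(\omega'))f=\langle\Phi(\omega'),f\rangle_\CalH\,\Phi(\omega)=f(\omega')\,\Phi(\omega)$. Summing over the samples gives
\begin{align*}
\hat{C}_{\omega_0,\omega_0}f&=\frac{1}{n}\sum_{i=1}^n f(\omega_0^{(i)})\,K(\omega_0^{(i)},\cdot),\\
\hat{C}_{\omega_0,\omega_1}f&=\frac{1}{n}\sum_{i=1}^n f(\omega_1^{(i)})\,K(\omega_0^{(i)},\cdot).
\end{align*}
The key observation is that in both cases the output is a linear combination of $\{K(\omega_0^{(i)},\cdot)\}_{i=1}^n$, for \emph{any} input $f$; that is, $\mathrm{Range}(\hat{C}_{\omega_0,\omega_0})\subseteq\hat{\CalH}$ and $\mathrm{Range}(\hat{C}_{\omega_0,\omega_1})\subseteq\hat{\CalH}$, since the ``output'' factor of each tensor product is always $\Phi(\omega_0^{(i)})$. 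In particular $\hat{C}_{\omega_0,\omega_0}r\in\hat{\CalH}$ as well.

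Next I would rearrange the operator equation from Lemma~\ref{lem:KRR_TD_solution}. Because $\lambda>0$, it can be solved for $\hat{Q}^{\pi}$ on the left,
\begin{equation*}
\hat{Q}^{\pi}=\frac{1}{\lambda}\Big[\hat{C}_{\omega_0,\omega_0}r+\gamma\,\hat{C}_{\omega_0,\omega_1}\hat{Q}^{\pi}-\hat{C}_{\omega_0,\omega_0}\hat{Q}^{\pi}\Big].
\end{equation*}
Every term inside the bracket lies in $\hat{\CalH}$ by the range property just established, and the appearance of $\hat{Q}^{\pi}$ itself on the right is harmless, since we only use that the operators send it into $\hat{\CalH}$. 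As $\hat{\CalH}$ is a linear subspace, the right-hand side lies in $\hat{\CalH}$, whence $\hat{Q}^{\pi}\in\hat{\CalH}$, which is the claim. If desired, equating this identity then yields the coefficient vector $\bold{b}^{\pi}$ and recovers the closed form of Proposition~\ref{prop:KRR_TD_representer_thm}. The only mildly delicate step is verifying the tensor-product action through the reproducing property; once that is in hand, the conclusion is immediate.
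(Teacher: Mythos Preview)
Your proposal is correct and follows essentially the same approach as the paper: rearrange the operator identity from Lemma~\ref{lem:KRR_TD_solution} to isolate $\lambda\hat{Q}^{\pi}$, then observe that the empirical operators $\hat{C}_{\omega_0,\omega_0}$ and $\hat{C}_{\omega_0,\omega_1}$ have range contained in $\hat{\CalH}$ because each summand is a multiple of $K(\omega_0^{(i)},\cdot)$. Your treatment of the tensor-product action via the reproducing property is slightly more explicit than the paper's, but the argument is the same.
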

 From Lemma \ref{lem:KRR_TD_finite_dim}, we know that $\hat{Q}^{\pi}$ must be of the form:
\begin{equation}
\label{eq:KRR_TD_error_decomposition_1}
    \hat{Q}^{\pi}=\sum_{i=1}^n K(\cdot,\omega_i)b_i=K(\cdot,\Bomega_0)\bold{b}.
\end{equation}
Substitute \eqref{eq:KRR_TD_error_decomposition_1} into \eqref{eq:KRR_TD_solution}, togethe with the definition of empirical operators $\hat{C}_{\omega_0,\omega_0}$ and $\hat{C}_{\omega_0,\omega_1}$, we then have
\begin{align}
\label{eq:KRR_TD_error_decomposition_2}
    &\frac{1}{n}\sum_{i=1}^n\Phi(\omega_0^{(i)})\sum_{j=1}^nb_j K(\omega_0^{(i)},\omega_0^{(j)})-\frac{1}{n}\sum_{i=1}^n\Phi(\omega_0^{(i)})\left[r(\omega_0^{(i)})+\gamma \sum_{j=1}^n b_jK(\omega_1^{(i)},\omega_0^{(j)})\right]\\
    &+\lambda\sum_{i=1}^nb_i\Phi(\omega_0^{(i)})=0.\nonumber
\end{align}
 Rearrange \eqref{eq:KRR_TD_error_decomposition_2} and write it in vector form, we have
\begin{equation}
    \Phi(\Bomega_0)\left[\BK\bold{b}+\lambda n\bold{b}-\gamma \bold{C}\bold{b}\right]=\Phi(\Bomega_0)\Br.
\end{equation}
So, we have
\[\bold{b}=\left[\BK+\lambda n {\rm I}-\gamma \bold{C}\right]^{-1}\Br.\]
\end{proof}

\subsection{Proof of Lemma \ref{lem:KRR_TD_solution}}
\begin{proof}
    Define the functional 
    \begin{equation}
    \label{eq:KRR_TD_solution_0}
        \CalJ[f]=\|\hat{C}_{\omega_0,\omega_0} f-\left(\hat{C}_{\omega_0,\omega_0}r+\gamma\hat{C}_{\omega_0,\omega_1}Q^{\pi}\right)\|_{n}^2+\lambda \|f\|_{\CalH}^2.
    \end{equation}
    The  functional $\CalJ$ is Fr\'echet differentiable obviously with RKHS derivative $ \nabla_f\CalJ[f]$ as follows:
    \begin{align}
        \label{KRR_TD_solution_1}
        &\frac{1}{2}\partial_\delta\CalJ[f+\delta u]\big|_{\delta=0}\\
        =&\langle \frac{1}{n}\sum_{i=1}^{n}\left(f(\omega^{(i)}_0)-r(\omega^{(i)}_0)-\gamma\hat{Q}^{\pi}(\omega^{(i)}_1)\right)K(\omega^{(i)}_0,\cdot)+\lambda f,u\rangle_\CalH,\nonumber\\
        =&\langle \nabla_f\CalJ[f],u\rangle_\CalH,\quad \forall u\in\CalH.
    \end{align}
    From the definition of empirical covariance operators, we can notice that
    \begin{align}
        &\frac{1}{n}\sum_{i=1}^{n}\left(f(\omega^{(i)}_0)-r(\omega^{(i)}_0)\right)K(\omega^{(i)}_0,\cdot)=\hat{C}_{s_0,s_0}[f-r], \label{KRR_TD_solution_2}\\
        &\frac{1}{n}\sum_{i=1}^{n}\hat{Q}^{\pi}(\omega^{(i)}_1)K(\omega^{(i)}_0,\cdot)=\hat{C}_{\omega_0,\omega_1}\hat{Q}^{\pi}  \label{KRR_TD_solution_3}.
    \end{align}
    Substitute \eqref{KRR_TD_solution_2} and \eqref{KRR_TD_solution_3} into \eqref{KRR_TD_solution_1}, we then have:
    \begin{equation}
        \label{KRR_TD_solution_4}
       \nabla_f\CalJ[f]= \hat{C}_{\omega_0,\omega_0}f- \left[\hat{C}_{\omega_0,\omega_0}r+\gamma \hat{C}_{\omega_0,\omega_1}\hat{Q}^{\pi}\right]+\lambda f.
    \end{equation}
    According to definition, $\hat{Q}^{\pi}$ is also the minimizer of $\CalJ$. Substitute $f=\hat{Q}^{\pi}$ into \eqref{KRR_TD_solution_4}, we can conclude that the minimization of \eqref{eq:KRR_TD_solution_0} is equivalent to \eqref{eq:KRR_TD}.
\end{proof}

\subsection{Proof of Lemma \ref{lem:KRR_TD_finite_dim}}
\begin{proof}
    Rearrange \eqref{eq:KRR_TD_solution} as follows
    \begin{equation}
        \label{eq:KRR_TD_finite_dim_1}
       \lambda \hat{Q}^{\pi}=\hat{C}_{\omega_0,\omega_0}r+\gamma\hat{C}_{\omega_0,\omega_1} \hat{Q}^{\pi}- \hat{C}_{\omega_0,\omega_0}\hat{Q}^{\pi}.
    \end{equation}
    According to \eqref{KRR_TD_solution_2} and \eqref{KRR_TD_solution_3}, the right hand side of \eqref{eq:KRR_TD_finite_dim_1} can be written as
    \begin{align*}
        &\hat{C}_{\omega_0,\omega_0}\left[r-\hat{Q}^{\pi}\right]+\gamma\hat{C}_{\omega_0,\omega_1} \hat{Q}^{\pi}\\
        =&\frac{1}{n}K(\cdot,\omega^{(i)}_0)\left[r(\omega^{(i)}_0)- \hat{Q}^{\pi}(\omega^{(i)}_0)+\hat{Q}^{\pi}(\omega^{(i)}_1)\right]\in\hat{\CalH}.
    \end{align*}
    The right hand side of  \eqref{eq:KRR_TD_finite_dim_1} resides in $\hat{\CalH}$. So, the left hand side $\hat{Q}^{\pi}\in\hat{\CalH}$.
\end{proof}

\section{Proof of Proposition \ref{prop:KRR_TD_error_decomposition}}
\begin{proof}[Proof.]
    According to the definition of covariance operator, the Bellman equation \eqref{eq:bellman} is equivalent to to follow form:
    \begin{equation}
        \label{eq:error_decomposition_1}
        C_{\omega_0,\omega_0}Q^{\pi}=C_{\omega_0,\omega_0}r+\gamma C_{\omega_0,\omega_1}Q^{\pi}.
    \end{equation}
    Substitute \eqref{eq:error_decomposition_1}  into Lemma \ref{lem:KRR_TD_solution}, we can have 
        \begin{equation}
        \label{eq:error_decomposition_2}
        \left[\hat{C}_{\omega_0,\omega_0}+\lambda{\rm I}\right]\hat{Q}^{\pi}-C_{\omega_0,\omega_0}Q^{\pi}=\left[\hat{C}_{\omega_0,\omega_0}-C_{\omega_0,\omega_0}\right]r+\gamma\left[\hat{C}_{\omega_0,\omega_1}\hat{Q}^{\pi}-C_{\omega_0,\omega_1}Q^{\pi}\right].
    \end{equation}
    Recall that $\CalD^{\pi}=\hat{Q}^{\pi}-Q^{\pi}$,  we can rewrite \eqref{eq:error_decomposition_2} as
    \begin{align}
        \label{eq:error_decomposition_3}
        &\left[\hat{C}_{\omega_0,\omega_0}+\lambda{\rm I}-\gamma\hat{C}_{\omega_0,\omega_1}\right]\CalD^{\pi}\\
        =&\left[\hat{C}_{\omega_0,\omega_0}+\lambda{\rm I}-C_{\omega_0,\omega_0}\right]\left(r-Q^{\pi}\right)-\lambda r+\gamma\left[\hat{C}_{\omega_0,\omega_1}-C_{\omega_0,\omega_1}\right]Q^{\pi}\nonumber\\
        =&\left[\hat{C}_{\omega_0,\omega_0}+\lambda{\rm I}\right]\left(r-Q^{\pi}\right)+\gamma \hat{C}_{s_0,s_1}Q^{\pi}-\lambda r\nonumber\\
        &+\left(C_{\omega_0,\omega_0}\left(Q^{\pi}-r\right)-\gamma C_{\omega_0,\omega_1}Q^{\pi}\right)\nonumber\\
        =&\left[\hat{C}_{\omega_0,\omega_0}+\lambda{\rm I}\right]\left(r-Q^{\pi}\right)+\gamma \hat{C}_{\omega_0,\omega_1}Q^{\pi}-\lambda r\label{eq:error_decomposition_4}
    \end{align}
    where the last line is from \eqref{eq:error_decomposition_1}. Taking the Hilbert space inner product with $\CalD^{\pi}$ in \eqref{eq:error_decomposition_3} and \eqref{eq:error_decomposition_4}:
    \begin{align}
        &\langle\CalD^{\pi},\left[\hat{C}_{\omega_0,\omega_0}+\lambda {\rm I}-\gamma\hat{C}_{\omega_0,\omega_1}\right]\CalD^{\pi}\rangle_\CalH\label{eq:error_decomposition_5}\\
        =& \langle\CalD^{\pi},\left[\hat{C}_{\omega_0,\omega_0}+\lambda{\rm I}\right]\left(r-Q^{\pi}\right)+\gamma \hat{C}_{\omega_0,\omega_1}Q^{\pi}-\lambda r \rangle_\CalH\label{eq:error_decomposition_6}.
    \end{align}
According to the definition of empirical covariance operators, \eqref{eq:error_decomposition_5} can be further rewritten as
\begin{align}
\label{eq:error_decomposition_7}
    &\langle\CalD^{\pi},\left[\hat{C}_{\omega_0,\omega_0}+\lambda {\rm I}-\gamma\hat{C}_{\omega_0,\omega_1}\right]\CalD^{\pi}\rangle_\CalH\\
    =&\frac{1}{n}\sum_{i=1}^n\CalD^{\pi}(\omega_0^{(i)})^2-\frac{\gamma}{n}\sum_{i=1}^n\CalD^{\pi}(\omega_0^{(i)})\CalD^{\pi}(\omega_1^{(i)})+\lambda\|\CalD^{\pi}\|^2_\CalH\nonumber
\end{align}
and \eqref{eq:error_decomposition_6} can be further rewritten as
\begin{align}
    \label{eq:error_decomposition_8}
    &\langle\CalD^{\pi},\left[\hat{C}_{\omega_0,\omega_0}+\lambda_K{\rm I}\right]\left(r-Q^{\pi}\right)+\gamma \hat{C}_{\omega_0,\omega_1}Q^{\pi}-\lambda r \rangle_\CalH\\
    =&\langle\CalD^{\pi},\hat{C}_{\omega_0,\omega_0}\left(r-Q^{\pi}\right)+\gamma \hat{C}_{\omega_0,\omega_1}Q^{\pi}\rangle_\CalH-\lambda \langle\CalD^{\pi},Q^{\pi}\rangle_\CalH\nonumber.
\end{align}
Because \eqref{eq:error_decomposition_7} and \eqref{eq:error_decomposition_8} are equals, by rearranging their terms, we have
\begin{align}
    \label{eq:error_decomposition_9}
    & \frac{1}{n}\sum_{i=1}^n\CalD^{\pi}(\omega_0^{(i)})^2-\frac{\gamma}{n}\sum_{i=1}^n\CalD^{\pi}(\omega_0^{(i)})\CalD^{\pi}(\omega_1^{(i)})\\
    =&\langle\CalD^{\pi},\hat{C}_{\omega_0,\omega_0}\left(r-Q^{\pi}\right)+\gamma \hat{C}_{\omega_0,\omega_1}Q^{\pi}\rangle_\CalH-\lambda \langle\CalD^{\pi},\hat{Q}^{\pi}\rangle_\CalH\label{eq:error_decomposition_10}.
\end{align}
We use the definitions of empirical and population covariance operators again, the first term of \eqref{eq:error_decomposition_10} is
\begin{align}
\label{eq:error_decomposition_11}
    &\langle\CalD^{\pi},\hat{C}_{\omega_0,\omega_0}\left(r-Q^{\pi}\right)+\gamma \hat{C}_{\omega_0,\omega_1}Q^{\pi}\rangle_\CalH\\
    =&\frac{1}{n}\sum_{i=1}^n\CalD^{\pi}(\omega_0^{(i)})\left(r(\omega_0^{(i)})-Q^{\pi}(\omega_0^{(i)})+\gamma Q^{\pi}(\omega_1^{(i)})\right)
    =\frac{1}{n}\sum_{i=1}^n\varepsilon_i\CalD^{\pi}(\omega_0^{(i)})\nonumber.
\end{align}
Substitute \eqref{eq:error_decomposition_11} into \eqref{eq:error_decomposition_10}, we can have the final result.
\end{proof}

\section{Proof of Theorem \ref{thm:convergence_Kernel_TD}}

In order to proof  Theorem \ref{thm:convergence_Kernel_TD}, we first need the following theorem, which we will provide its proof in Appendix \ref{Sec:pf_convergence_KRR}.
\begin{theorem}
    \label{thm:convergence_KRR}
         Let 
         $\lambda =\CalO((1-c\gamma)^{\frac{\beta}{2+2\beta}}n^{-\frac{1}{2+2\beta}}|\log n|^{\frac{\kappa}{1+\beta}})$.
         Under Assumptions~\ref{assump:distribution_bounded}, \ref{assump:r_pi_P_RKHS}, \ref{assump:entropy}, and \ref{assump:dist_mu_gamma} we can have
        \begin{equation}
        \label{eq:convergence_KRR}
        \begin{aligned}
        % \|D^\pi\|_{L_\infty}\leq \CalO_p\left(n^{-\frac{\varphi}{2}}\right)\|Q^\pi\|_{\CalH}
             &\sqrt{\frac{1}{n}\sum_{i=1}^n\left|\CalD^{\pi}(\omega_0^{(i)})\right|^2}\leq \CalO_p\left((1-c\gamma)^{-\frac{2+\beta}{2+2\beta}}n^{-\frac{1}{2+2\beta}}|\log n|^{\frac{\kappa}{1+\beta}}\right)\|Q^\pi\|_{\CalH},\\
    &\|\hat{Q}^\pi\|_{\CalH}\leq \CalO_p(1)\|Q^\pi\|_{\CalH}.
        \end{aligned}
        \end{equation}
    \end{theorem}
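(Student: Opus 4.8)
The plan is to convert the exact identity of Proposition~\ref{prop:KRR_TD_error_decomposition} into a penalized ``basic inequality'' and then close it with a localized empirical-process bound driven by the entropy in Assumption~\ref{assump:entropy}. Write $\|g\|_{n,0}^2=\frac1n\sum_{i=1}^n g(\omega_0^{(i)})^2$ and $\|g\|_{n,1}^2=\frac1n\sum_{i=1}^n g(\omega_1^{(i)})^2$. First I would establish coercivity of the left-hand side of \eqref{eq:error_decomposition}: by Cauchy--Schwarz the cross term satisfies $\frac\gamma n\sum_i\CalD^{\pi}(\omega_0^{(i)})\CalD^{\pi}(\omega_1^{(i)})\le\gamma\|\CalD^{\pi}\|_{n,0}\|\CalD^{\pi}\|_{n,1}$, and Assumption~\ref{assump:dist_mu_gamma} (with $P(s'|s,a)\le c^2\mu_0(s')$) forces the law of $\omega_1$ to be dominated by $c^2\sigma^\pi_0$, so that $\|\CalD^{\pi}\|_{n,1}^2\lesssim c^2\|\CalD^{\pi}\|_{n,0}^2$ in expectation; transferring this to empirical norms by a uniform concentration step yields the coercive lower bound $\mathrm{LHS}\ge(1-c\gamma)\|\CalD^{\pi}\|_{n,0}^2$, which is exactly where the factor $1-c\gamma$ originates.

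Next I would control the two terms on the right of \eqref{eq:error_decomposition}. The penalty is handled algebraically: expanding $\langle\CalD^{\pi},\hat Q^{\pi}\rangle_{\CalH}=\|\CalD^{\pi}\|_{\CalH}^2+\langle\CalD^{\pi},Q^{\pi}\rangle_{\CalH}$ gives $-\lambda\langle\CalD^{\pi},\hat Q^{\pi}\rangle_{\CalH}\le-\lambda\|\CalD^{\pi}\|_{\CalH}^2+\lambda\|\CalD^{\pi}\|_{\CalH}\|Q^{\pi}\|_{\CalH}$, so the regularizer supplies a usable $-\lambda\|\CalD^{\pi}\|_{\CalH}^2$ plus a bias term. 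The noise term $\frac1n\sum_i\varepsilon_i\CalD^{\pi}(\omega_0^{(i)})$ is the crux. The key structural fact is that the Bellman residual \eqref{eq:bellman_residual} is conditionally centered, $\E[\varepsilon_i\mid\omega_0^{(i)}]=0$, by the fixed-point property \eqref{eq:bellman}, and that $\varepsilon_i$ is bounded since $r$ and $Q^{\pi}$ are bounded (via Assumption~\ref{assump:r_pi_P_RKHS}, Lemma~\ref{lem:Q_pi_RKHS}, and boundedness of the kernel). Because $\CalD^{\pi}$ is data-dependent I cannot bound this pointwise; instead I would run a localized chaining/Dudley argument over the RKHS ball, evaluating the entropy integral under Assumption~\ref{assump:entropy} to obtain, with high probability, $\big|\frac1n\sum_i\varepsilon_i g(\omega_0^{(i)})\big|\lesssim n^{-1/2}\|g\|_{\CalH}^{\beta}\|g\|_{n,0}^{1-\beta}|\log n|^{\kappa}$ uniformly over a suitable ball, applied at $g=\CalD^{\pi}$.

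Combining the three ingredients gives $(1-c\gamma)\|\CalD^{\pi}\|_{n,0}^2+\lambda\|\CalD^{\pi}\|_{\CalH}^2\lesssim n^{-1/2}\|\CalD^{\pi}\|_{\CalH}^{\beta}\|\CalD^{\pi}\|_{n,0}^{1-\beta}|\log n|^{\kappa}+\lambda\|\CalD^{\pi}\|_{\CalH}\|Q^{\pi}\|_{\CalH}$. From the regularization terms, with the fluctuation term checked to be of lower order, I would first extract $\|\CalD^{\pi}\|_{\CalH}\lesssim\|Q^{\pi}\|_{\CalH}$, which, via $\hat Q^{\pi}=Q^{\pi}+\CalD^{\pi}$, gives the second claim $\|\hat Q^{\pi}\|_{\CalH}\le\CalO_p(1)\|Q^{\pi}\|_{\CalH}$. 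Substituting this norm bound back, absorbing $\|\CalD^{\pi}\|_{n,0}^{1-\beta}$ into $(1-c\gamma)\|\CalD^{\pi}\|_{n,0}^2$ by Young's inequality, and plugging in the prescribed $\lambda=\CalO((1-c\gamma)^{\beta/(2+2\beta)}n^{-1/(2+2\beta)}|\log n|^{\kappa/(1+\beta)})$ that balances bias against fluctuation, produces the empirical-error rate $(1-c\gamma)^{-(2+\beta)/(2+2\beta)}n^{-1/(2+2\beta)}|\log n|^{\kappa/(1+\beta)}\|Q^{\pi}\|_{\CalH}$. The main obstacle is the noise step: making the localized maximal inequality for the conditionally-centered weighted sum $\frac1n\sum_i\varepsilon_i g(\omega_0^{(i)})$ rigorous, with the entropy integral and localization radius tuned so the exponents in $n$, $1-c\gamma$, and $\|Q^{\pi}\|_{\CalH}$ come out as stated, while accommodating the dependence between $\omega_0^{(i)}$ and $\omega_1^{(i)}$ inside $\varepsilon_i$ through the conditional-mean-zero structure rather than full independence.
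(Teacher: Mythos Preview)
Your proposal is correct and follows essentially the same route as the paper: start from the identity in Proposition~\ref{prop:KRR_TD_error_decomposition}, use Assumption~\ref{assump:dist_mu_gamma} with Cauchy--Schwarz to extract the coercive factor $(1-c\gamma)$ from the cross term, bound the Bellman-residual inner product by a localized empirical-process/modulus-of-continuity inequality of the form $n^{-1/2}\|g\|_n^{1-\beta}\|g\|_{\CalH}^{\beta}|\log|^{\kappa}$ driven by Assumption~\ref{assump:entropy}, and then balance the fluctuation against the $\lambda$-bias to read off the rate. The only cosmetic differences are that the paper rearranges the penalty as $+\lambda\|\hat Q^{\pi}\|_{\CalH}^2$ on the left and $+\lambda\langle Q^{\pi},\hat Q^{\pi}\rangle_{\CalH}$ on the right (equivalent to your decomposition via $\CalD^{\pi}$), and it closes the final inequality by an explicit two-case split on $\|\hat Q^{\pi}\|_{\CalH}\gtrless\|Q^{\pi}\|_{\CalH}$ rather than your ``extract the RKHS-norm bound first, then Young's inequality'' sequence.
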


\begin{proof}[Proof of Theorem \ref{thm:convergence_Kernel_TD}:]
    By triangle inequality and Theorem \ref{thm:convergence_KRR}, we have
    \begin{equation}
        \label{eq:convergence_Kernel_TD_0}
        \begin{aligned}
          \|f_t-Q^\pi\|_n\leq& \|f_t-\hat{Q}^\pi\|_n+ \|\hat{Q}^\pi-Q^\pi\|_n\\
          \leq &\|f_t-\hat{Q}^\pi\|_n+ \CalO_p\left((1-c\gamma)^{-\frac{2+\beta}{2+2\beta}}n^{-\frac{1}{2+2\beta}}|\log n|^{\frac{\kappa}{1+\beta}}\right),
    \end{aligned}
    \end{equation}
    and
    \begin{align}
    \label{eq:convergence_Kernel_TD_01}
        &\|f_t\|_\CalH\leq \|\hat{Q}^\pi\|_\CalH+\|f_t-\hat{Q}^\pi\|_\CalH+ \|\hat{Q}^\pi-Q^\pi\|_\CalH\leq \CalO_p(1)\|Q^\pi\|_\CalH+\|f_t-\hat{Q}^\pi\|_\CalH.
    \end{align}
    Therefore, we only need to prove that the convergence rates of $f_t$ to $\hat{Q}^\pi$  under $\|\cdot\|_n$ and $\|\cdot\|_\CalH$ have the same order as their counterpart in Theorem \ref{thm:convergence_KRR}.

    Note that by the selections of decay weight $\alpha$ and step size $\eta$, $\alpha/\eta=\lambda$. Then we can use \eqref{eq:TD_iteration_2} and H\"older inequality to have 
    \begin{equation}
        \label{eq:convergence_Kernel_TD_1}
        \begin{aligned}
             \|f_t-\hat{Q}^\pi\|_n^2=&\frac{1}{n}\left(\Bb_t-\Bb^\pi\right)^\top\BK^2\left(\Bb_t-\Bb^\pi\right)\\
             \leq &\left(\max_{\omega}K(\omega,\omega)\right)^2n\left(\Bb_t-\Bb^\pi\right)^2\\
    =&Cn\left(\left[((1-\alpha)\rm{I}-\eta\BK+\eta\gamma\BC\right]^{t}\left[\Bb_0-\Bb^\pi\right] \right)^2\\
            \leq & Cn \Lambda_2^t (\Bb_0-\Bb^\pi)^2
        \end{aligned}
    \end{equation}
    
    where $\Lambda_2$ is the maximum eigenvalue $\left[((1-\alpha)\rm{I}-\eta\BK+\eta\gamma\BC\right]\top \left[((1-\alpha)\rm{I}-\eta\BK+\eta\gamma\BC\right]$ and $C=(\max_{\omega}K(\omega,\omega))^2$.

    Note that the maximum and minimum eigenvalue of the matrix $\BK-\gamma\BC$ is upper bounded by $(1+\gamma)\max_{\omega}K(\omega,\omega)n$ and lower bounded by $-\gamma \max_{\omega}K(\omega,\omega)n$, respectively. Therefore, if we can select $\eta$ such that
    \begin{equation}
        \label{eq:convergence_Kernel_TD_2}
        \begin{aligned}
        &(1-\alpha -(1+\gamma)\max_{\omega}K(\omega,\omega)n\eta)\leq C_1<1.\\
         &(1-\alpha +\gamma\max_{\omega}K(\omega,\omega)n\eta)\leq C_2<1,
    \end{aligned}
    \end{equation}
    then $\Lambda_2\leq C_3^2<1$ where $C_3=\max\{C_1,C_2\}$. We also need an extra condition so that $f_t$ converges to the estimator $\hat{Q}^\pi$ with correctly selected $\lambda$ as in Theorem \ref{thm:convergence_KRR}, we also have
    \begin{equation}
        \label{eq:convergence_Kernel_TD_3}
        \alpha=\eta\lambda n.
    \end{equation}
    Solving \eqref{eq:convergence_Kernel_TD_2} and \eqref{eq:convergence_Kernel_TD_3} to get
    \begin{equation*}
       0< \eta\leq \frac{1-C_1}{n(1+\gamma)C+\lambda}
    \end{equation*}
    and then by letting $t=\log_{C_3}\left([n(\Bb_0-\Bb^\pi)^2]^{-1}\lambda\right)$, we have
    \begin{align}
    \label{eq:convergence_Kernel_TD_4}
        \|f_t-\hat{Q}^\pi\|_n^2\leq C n (\Bb_0-\Bb_1)^2C_3^t=Cn(\Bb_0-\Bb^\pi)^2 C_3^{\log_{C_3} [n(\Bb_0-\Bb^\pi)]^{-1}\lambda}\leq\CalO(\lambda).
    \end{align}

Similarly, for the convergence in RKHS norm, we have
\begin{equation}
        \label{eq:convergence_Kernel_TD_5}
        \begin{aligned}
             \|f_t-\hat{Q}^\pi\|_\CalH^2=&\left(\Bb_t-\Bb^\pi\right)^\top\BK\left(\Bb_t-\Bb^\pi\right)\\
             \leq &\left(\max_{\omega}K(\omega,\omega)\right)n\left(\Bb_t-\Bb^\pi\right)^2\\
    =&\sqrt{C}n\left(\left[((1-\alpha)\rm{I}-\eta\BK+\eta\gamma\BC\right]^{t}\left[\Bb_0-\Bb^\pi\right] \right)^2\\
            \leq & \sqrt{C}n \Lambda_2^t (\Bb_0-\Bb^\pi)^2.
        \end{aligned}
    \end{equation}
Note that the order of \eqref{eq:convergence_Kernel_TD_5} is the same as that of \eqref{eq:convergence_Kernel_TD_1} except that the constant term is changed from $C$ to $\sqrt{C}$. Therefore, the same iteration number $t$ is enough for the following convergence 
    \begin{equation}
    \label{eq:convergence_Kernel_TD_6}
         \|f_t-\hat{Q}^\pi\|_\CalH^2\leq \CalO(\lambda).
    \end{equation}
    Substitute \eqref{eq:convergence_Kernel_TD_4} into \eqref{eq:convergence_Kernel_TD_0} and \eqref{eq:convergence_Kernel_TD_6} into 
    \eqref{eq:convergence_Kernel_TD_01}, we can have the final results.
\end{proof}

\section{Proof of Theorem \ref{thm:convergence_KRR}}
\label{Sec:pf_convergence_KRR}

    We first slightly modify the error decomposition \eqref{eq:error_decomposition} to get the following  inequality
    \begin{equation}
        \begin{aligned}
        \label{eq:convergence_KRR_1}
            &\frac{1}{n}\sum_{i=1}^n\left(\CalD^{\pi}(\omega^{(i)}_0)^2-\gamma \CalD^{\pi}(\omega^{(i)}_0)\CalD^{\pi}(\omega^{(i)}_1)\right)+\lambda \|\hat{Q}^\pi\|_\CalH^2\\
            =&\frac{1}{n}\sum_{i=1}^n\varepsilon_i\CalD^\pi(\omega_0^{(i)})+\lambda\langle Q^\pi,\hat{Q}^\pi\rangle_\CalH.
        \end{aligned}
    \end{equation}
    By further rewriting \eqref{eq:convergence_KRR_1}, we obtain the oracle  that is essential to the proof:
    \begin{equation}
        \label{eq:convergence_KRR_2}
        \frac{1}{n}\sum_{i=1}^n\left(\CalD^{\pi}(\omega^{(i)}_0)^2-\gamma \CalD^{\pi}(\omega^{(i)}_0)\CalD^{\pi}(\omega^{(i)}_1)\right)+\lambda \|\hat{Q}^\pi\|_\CalH^2 =\underbrace{\frac{1}{n}\sum_{i=1}^n\varepsilon_i\CalD^\pi(\omega_0^{(i)})}_{\text{variance}}+\underbrace{\lambda \langle \hat{Q}^\pi,Q^\pi\rangle_{\CalH}^2}_{\text{bias}}.
    \end{equation}
    For the left-hand side of \eqref{eq:convergence_KRR_2}, we can use standard central limit theorem for Monte Carlo integration to have
    \begin{equation}
        \label{eq:convergence_KRR_3}
    \begin{aligned}
        \gamma\frac{1}{n}\sum_{i=1}^n \CalD^{\pi}(\omega^{(i)}_0)\CalD^{\pi}(\omega^{(i)}_1)\leq &\gamma \big|\E[\CalD^\pi(\omega_0)\CalD^\pi(\omega_1)]\big|+ \CalO_p(n^{-\frac{1}{2}})\left|\E[\CalD^\pi(\omega_0)\CalD^\pi(\omega_1)]\right|\\
        \leq &\left( \gamma+\CalO_p(n^{-\frac{1}{2}}) \right)  \sqrt{\E[|\CalD^\pi(\omega_0)|^2]\E[|\CalD^\pi(\omega_1)|^2]}\\
        \leq & c\left( \gamma+\CalO_p(n^{-\frac{1}{2}}) \right)  {\E[|\CalD^\pi(\omega_0)|^2]}\\
        \leq & c\gamma \frac{1}{n}\sum_{i=1}^n\CalD^{\pi}(\omega^{(i)}_0)^2+\CalO_p(n^{-\frac{1}{2}}){\E[|\CalD^\pi(\omega_0)|^2]},
    \end{aligned}
    \end{equation}
    where the third line of \eqref{eq:convergence_KRR_3} follows from Assumption \ref{assump:dist_mu_gamma} that
\begin{align*}
    \E[|\CalD^\pi(\omega_1)|^2]=&\int \left|\CalD^\pi(s_1,a_1)\right|^2\pi(a_1|s_1)P(s_1|s_0,a_0)\pi(a_0|s_0)\mu(s_0)ds_1da_1ds_0da_0\\
    =& \int \left|\CalD^\pi(s_1,a_1)\right|^2\pi(a_1|s_1)\left[\int P(s_1|s_0,a_0)\pi(a_0|s_0)\mu(s_0)ds_0da_0\right]ds_1da_1\\
    \leq &\int \left|\CalD^\pi(s_1,a_1)\right|^2\pi(a_1|s_1)\left(\max_{s,a}P(s_1|s,a)\right)ds_1da_1\\
    \leq & c\int \left|\CalD^\pi(s_1,a_1)\right|^2\pi(a_1|s_1)\mu(s_1)ds_1da_1\\
    =&c^2\E[|\CalD^\pi(\omega_0)|^2].
\end{align*}

    For the right-hand side of \eqref{eq:convergence_KRR_2}, we use empirical processes \citep{geer2000empirical} to estimate the variance term in \eqref{eq:convergence_KRR_2}. In order to do so, we first show that the Bellman residual is zero-mean sub-Gaussian. From the definition of Bellman operator \eqref{eq:bellman}, the expectation of the Bellman residual is 0, i.e., $\E\varepsilon_i=0$ for all $i=1,\cdots,n$.  From Assumption \eqref{assump:r_pi_P_RKHS} and Lemma \ref{lem:Q_pi_RKHS}, we can know that the Bellman residual is bounded:
    \begin{equation*}
        |r(\omega_0^i)+\gamma Q^{\pi}(\omega_1^{(i)})-Q^{\pi}(\omega_0^{(i)})| \leq \|r\|_{\CalH}+(1-\gamma)\|Q^{\pi}\|_{\CalH}
    \end{equation*}
    where the right-hand side because $\max_\omega f(\omega)=\langle f,K(\cdot,\omega)\rangle_\CalH\leq \|K(\cdot,\omega)\|_\CalH\|f\|_{\CalH}$ for any $f\in\CalH$. So $\varepsilon_i$ is sub-Gaussian.

    For natation simplicity, first define the empirical inner product and empirical norm for any pair of functions $f_1$ and $f_2$ defined on $\{\omega_0^{(i)}\}$:
    \[\langle f_1,f_2\rangle_n=\frac{1}{n}\sum_{i=1}^nf_1(\omega_0^{(i)})f_2(\omega_0^{(i)}),\quad, \|f_1\|_n^2=\langle f_1,f_1\rangle_n.\]
    
    The sub-Gaussianity of Bellman residual $\{\varepsilon_i\}$  allows us apply the following lemma to estimate the variance terms.

\begin{lemma}\label{lem:Modulus-Continuity}

Suppose the RKHS $\CalH$ satisfies Assumption~\ref{assump:entropy}.
Suppose that  $\{\varepsilon_i\}_{i=1}^n$ are Bellman residual as defined in \eqref{eq:bellman_residual}. 
Then,  for all $t$ large enough, 
\begin{equation}\label{eq:Modulus-of-Continuity}
        \sup_{f\in\CalH}\frac{ n^{\frac{1}{2}}\langle \boldsymbol{\varepsilon},f\rangle_n}{{\|f\|_n^{1-\beta}\|f\|^{\beta}_{\CalH} \left|\log\frac{\|f\|_n}{\|f\|_{\CalH}}\right|^{\kappa}}} > t ,
\end{equation}
with probability at most $C_2\exp(-C_1t^2)$ with some positive constants $C_1$ and $C_2$.
\end{lemma}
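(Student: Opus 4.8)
The plan is to read the displayed quotient as a weighted sub-Gaussian empirical process and to control it by the peeling-and-chaining machinery of \citep{geer2000empirical}. First, observe that the ratio is invariant under $f\mapsto cf$ for $c>0$: the numerator, $\|f\|_n^{1-\beta}$, and $\|f\|_{\CalH}^{\beta}$ together scale like $c$, while $|\log(\|f\|_n/\|f\|_{\CalH})|^{\kappa}$ is unchanged, so the quotient is homogeneous of degree zero. I would therefore restrict the supremum to the unit ball $\CalB=\{f:\|f\|_{\CalH}\leq 1\}$, on which the denominator reduces to $\|f\|_n^{1-\beta}\,|\log\|f\|_n|^{\kappa}$ and $\|f\|_n\leq\|f\|_{L_\infty}\leq\sqrt{\max_\omega K(\omega,\omega)}$ is bounded. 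Conditioning on the design $\{\omega_0^{(i)}\}$, the preamble to the lemma shows the $\varepsilon_i$ are independent, conditionally mean-zero (the Bellman fixed-point identity \eqref{eq:bellman} gives $\E[\varepsilon_i\mid\omega_0^{(i)}]=0$), and sub-Gaussian; hence $X_f:=n^{1/2}\langle\boldsymbol{\varepsilon},f\rangle_n=n^{-1/2}\sum_i\varepsilon_i f(\omega_0^{(i)})$ is a sub-Gaussian process with increments obeying $\|X_f-X_g\|_{\psi_2}\lesssim\sigma_\varepsilon\|f-g\|_n$ and variance proxy $\Var(X_f)\leq\sigma_\varepsilon^2\|f\|_n^2$.

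Next I would peel over dyadic shells of the empirical norm. With $r_j=2^{-j}$, partition $\CalB$ into $\CalB_j=\{f\in\CalB:r_{j+1}<\|f\|_n\leq r_j\}$, on which the denominator is comparable to $r_j^{1-\beta}|\log r_j|^{\kappa}\asymp r_j^{1-\beta}j^{\kappa}$. On each shell I bound the expected supremum of $X_f$ by Dudley's entropy integral. Since $\|\cdot\|_n\leq\|\cdot\|_{L_\infty}$, covering numbers under $\|\cdot\|_n$ are dominated by those under $L_\infty$, so Assumption~\ref{assump:entropy} yields $H(\delta,\|\cdot\|_n,\CalB)\leq C\delta^{-2\beta}|\log\delta|^{2\kappa}$, and the crux computation is
\[
\int_0^{r}\sqrt{H(\delta,\|\cdot\|_n,\CalB)}\,d\delta\;\lesssim\;\int_0^{r}\delta^{-\beta}|\log\delta|^{\kappa}\,d\delta\;\asymp\;r^{1-\beta}|\log r|^{\kappa},
\]
which is valid because $\beta\in[0,1)$ makes $\delta^{-\beta}$ integrable at the origin. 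This gives $\E\sup_{\CalB_j}X_f\lesssim r_j^{1-\beta}|\log r_j|^{\kappa}$, exactly the size of the shell's normalizing weight.

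With the expected supremum calibrated to the weight, I would then invoke a concentration inequality for the supremum of a sub-Gaussian process (Borell--TIS / Talagrand type, or a chaining tail bound): using $\Var(X_f)\leq\sigma_\varepsilon^2 r_j^2$ on $\CalB_j$,
\[
\pr\!\Big(\sup_{\CalB_j}X_f>\E\sup_{\CalB_j}X_f+u\Big)\leq C_2\exp\!\big(-C_1 u^2/r_j^2\big).
\]
Taking $u\asymp t\,r_j^{1-\beta}j^{\kappa}$ (legitimate once $t$ exceeds a multiple of the entropy-integral constant) forces the event $\{X_f>t\cdot(\text{denom.})\}$ on $\CalB_j$ and produces a per-shell tail $\exp(-C_1 t^2 r_j^{-2\beta}j^{2\kappa})=\exp(-C_1 t^2\,4^{\beta j}j^{2\kappa})$. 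A union bound over $j$ then collapses to $\sum_j\exp(-C_1 t^2\,4^{\beta j}j^{2\kappa})\leq C_2\exp(-C_1 t^2)$ for $t$ large, since the exponents grow in $j$ (geometrically when $\beta>0$, and through the $j^{2\kappa}$ weight when $\beta=0<\kappa$). Undoing the normalization returns the bound for all $f\in\CalH$.

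The main obstacle is the chaining/peeling bookkeeping in the two middle steps: verifying that the entropy integral evaluates to precisely $r^{1-\beta}|\log r|^{\kappa}$, so that the random fluctuation is matched to the denominator on every shell simultaneously, and that the union bound over infinitely many shells telescopes into a single $\exp(-C_1 t^2)$. Two edge cases need separate care: the top shell, where $\|f\|_n/\|f\|_{\CalH}$ approaches $1$ and the logarithmic weight degenerates (there the functions are ``smooth'', the process is a single variable of variance $\lesssim\|f\|_n^2$, and a direct sub-Gaussian bound suffices); and the boundary regime $\beta=\kappa=0$, which is essentially parametric (finite entropy) and follows from a direct net-plus-union-bound argument without peeling. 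Supplying the sub-Gaussian increment and concentration inputs rigorously under the conditional (fixed-design) law is routine given the boundedness of $\varepsilon_i$ already established.
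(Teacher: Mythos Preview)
Your proposal is correct and follows essentially the same route as the paper: normalize to the unit RKHS ball via scale invariance, compute the entropy integral $\int_0^{r}\delta^{-\beta}|\log\delta|^{\kappa}\,d\delta\asymp r^{1-\beta}|\log r|^{\kappa}$, peel over dyadic shells $\|f\|_n\in(2^{-j-1},2^{-j}]$, obtain a per-shell sub-Gaussian tail of order $\exp(-c\,t^2\,2^{2\beta j}j^{2\kappa})$, and union bound. The only cosmetic difference is that the paper packages the chaining-plus-concentration step by invoking Corollary~8.3 of \citet{geer2000empirical} directly, whereas you unroll it into Dudley's integral plus a Borell--TIS type deviation; the inputs, the entropy calculation, and the peeling are identical.
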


We can substitute \eqref{eq:convergence_KRR_2} and \eqref{eq:Modulus-of-Continuity} in Lemma \ref{lem:Modulus-Continuity} into the oracle \eqref{eq:convergence_KRR_2}:
\begin{equation}
\label{eq:convergence_KRR_4}
\begin{aligned}
    &(1-c\gamma)\|\CalD^{\pi}\|_n^2 +\lambda \|\hat{Q}^\pi\|_\CalH^2\\
    \leq&\CalO_p\left(n^{-1/2}\right)\|\CalD^\pi\|_n^{1-\beta}\|\CalD^\pi\|_{\CalH}^{\beta}\left|\log \frac{\|\CalD^\pi\|_n}{\|\CalD^\pi\|_{\CalH}}\right|^\kappa +\lambda \langle Q^\pi,\hat{Q}^\pi\rangle_\CalH.
\end{aligned}
\end{equation}

\textbf{Case I $\|\hat{Q}^\pi\|_\CalH \geq \|Q^\pi\|_\CalH$}, \eqref{eq:convergence_KRR_4} yields two subcases, either
\begin{equation}
\label{eq:convergence_KRR_5} 
\begin{aligned}
    &(1-c\gamma))\|\CalD^{\pi}\|_n^2+\lambda\|\hat{Q}^\pi\|_\CalH^2\\
    \leq&\CalO_p\left(n^{-1/2}\right)\|\CalD^\pi\|_n^{1-\beta}\|\hat{Q}^\pi\|_{\CalH}^{\beta}\left||\log{\|\CalD^\pi\|_n}|+|\log{\|\hat{Q}^\pi\|_{\CalH}}|\right|^\kappa
\end{aligned}
\end{equation}
or
\begin{equation}
\label{eq:convergence_KRR_6} 
(1-c\gamma)\|\CalD^{\pi}\|_n^2+\lambda\|\hat{Q}^\pi\|_\CalH^2
    \leq\lambda \langle Q^\pi,\hat{Q}^\pi\rangle_\CalH+\CalO_p(n^{-\frac{1}{2}})\|\hat{Q}^\pi\|_{\CalH}^2.
\end{equation}

Solving \eqref{eq:convergence_KRR_5} yields
\begin{equation}
\label{eq:convergence_KRR_7}
    \begin{aligned}
        &\|\CalD^{\pi}\|_n=({1-c\gamma})^{\frac{\beta}{2}-1}\CalO_p(n^{-\frac{1}{2}})\lambda^{-\beta}(|\log n|+|\log \lambda|)^\kappa,\\
        &\|\hat{Q}^{\pi}\|_\CalH=({1-c\gamma})^{\frac{\beta-1}{2}}\CalO_p(n^{-\frac{1}{2}})\lambda^{-\frac{1+\beta}{2}}(|\log n|+|\log \lambda|)^\kappa.
    \end{aligned}
\end{equation}
Solving \eqref{eq:convergence_KRR_6} yields
    \begin{equation}
\label{eq:convergence_KRR_8}
    \begin{aligned}
        &\|\CalD^{\pi}\|_n\leq \frac{2}{1-c\gamma}\lambda\|Q^\pi\|_\CalH,\\
        &\|\hat{Q}^\pi\|_{\CalH}\leq 2\|Q^\pi\|_{\CalH}.
    \end{aligned}
\end{equation}

\textbf{Case II $\|\hat{Q}^\pi\|_\CalH \leq \|Q^\pi\|_\CalH$}, \eqref{eq:convergence_KRR_4} yields another two subcases, either
\begin{equation}
    \label{eq:convergence_KRR_9}
    (1-c\gamma)\|\CalD^\pi\|_n^2\leq\CalO_p(n^{-1/2})\|\CalD^\pi\|_n^{1-\beta}\|Q^\pi\|_{\CalH}^{\beta},
\end{equation}
or
\begin{equation}
    \label{eq:convergence_KRR_10}
    (1-c\gamma)\|\CalD^\pi\|_n^2\leq 2\lambda \|Q^\pi\|_\CalH^2.
\end{equation}
It follows that either \eqref{eq:convergence_KRR_9} holds or
\begin{equation}
    \label{eq:convergence_KRR_11}
    \|\CalD^\pi\|_n \leq (1-c\gamma)^{\frac{-1}{1+\beta}}\CalO_p(n^{-\frac{1}{2+2\beta}}).
\end{equation}

Now we can summarize \eqref{eq:convergence_KRR_8}, \eqref{eq:convergence_KRR_9}, \eqref{eq:convergence_KRR_10}, and \eqref{eq:convergence_KRR_11}. We can notice that by selecting
\begin{equation}
    \label{eq:convergence_KRR_12}
    \lambda =\CalO((1-c\gamma)^{\frac{\beta}{2+2\beta}}n^{-\frac{1}{2+2\beta}}|\log n|^{\frac{\kappa}{1+\beta}}),
\end{equation}
we can have the final result,
\begin{align}
    &\|\CalD^\pi\|_n\leq \CalO_p\left((1-c\gamma)^{-\frac{2+\beta}{2+2\beta}}n^{-\frac{1}{2+2\beta}}|\log n|^{\frac{\kappa}{1+\beta}}\right)\|Q^\pi\|_\CalH,\label{eq:convergence_KRR_13}\\
    &\|\hat{Q}^\pi\|_{\CalH}\leq \CalO_p(1)\|Q^\pi\|_{\CalH}. \label{eq:convergence_KRR_14}
\end{align}

\subsection{Proof of Lemma \ref{lem:Modulus-Continuity}}
To proof Lemma \ref{lem:Modulus-Continuity}, we first need the following lemma from \cite{geer2000empirical}
\begin{lemma}[Corollary 8.3 in \cite{geer2000empirical}] 
    \label{lemma:geer8_3}
Let $a>0$, $R>0$, and $\CalH$ be a function space.
Suppose that $\sup_{g\in\CalH}\|f\|_{L_\infty}\leq R$ for and  $\boldsymbol{\varepsilon}=\{\varepsilon_i\}_{i=1}^n$ are independent zero-mean sub-Gaussian random variables. 
If there exists some universal positive constant $C$ satisfying
\begin{equation}
    \label{eq:entropy_condition}
    a\geq Cn^{-\frac{1}{2}} \bigg(\int^R_0 H^{\frac{1}{2}}(u,\|\cdot\|_{L_\infty},\CalH)du\vee R\bigg),
\end{equation}
then we have
\begin{equation}
    \label{eq:geer_modulus_continuity}
    \mathbb{P}\left(\sup_{f\in\CalH}\left|\langle \boldsymbol{\varepsilon},f\rangle_n\right|\geq a\right)\leq 2 \exp\biggl(-\tilde{C}\frac{na^2}{R^2}\biggr),
\end{equation}
for some positive $\tilde{C}$.
\end{lemma}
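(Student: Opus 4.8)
The plan is to prove the tail bound \eqref{eq:geer_modulus_continuity} by a Dudley-type chaining argument for the zero-mean multiplier process $f\mapsto\langle\boldsymbol{\varepsilon},f\rangle_n$ indexed by $f\in\CalH$, working conditionally on the fixed evaluation points $\{\omega_0^{(i)}\}$ so that the weights $f(\omega_0^{(i)})/n$ are deterministic and the $\varepsilon_i$ are independent, zero-mean, and sub-Gaussian. The starting point is a single-increment concentration bound: for fixed $f,g$, the quantity $\langle\boldsymbol{\varepsilon},f-g\rangle_n$ is a weighted sum of independent zero-mean sub-Gaussian variables with proxy variance of order $n^{-2}\sum_i (f-g)^2(\omega_0^{(i)})=n^{-1}\|f-g\|_n^2\le n^{-1}\|f-g\|_{L_\infty}^2$, so that
\[
\mathbb{P}\left(\,|\langle \boldsymbol{\varepsilon}, f-g\rangle_n| \geq \tau \,\right) \leq 2\exp\left(-\frac{c\,n\,\tau^2}{\|f-g\|_{L_\infty}^2}\right)
\]
for a universal $c>0$. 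This exhibits the sub-Gaussian metric structure, with metric $n^{-1/2}\|\cdot\|_{L_\infty}$, that chaining requires, and it is why the controlling complexity is the $L_\infty$-entropy $H(\cdot,\|\cdot\|_{L_\infty},\CalH)$.

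Next I would run the chaining over a dyadic sequence of $L_\infty$-nets. Set $\delta_j=R\,2^{-j}$ and let $\CalH_j$ be a minimal $\delta_j$-net of $\CalH$ in $\|\cdot\|_{L_\infty}$, so $\log|\CalH_j|=H(\delta_j,\|\cdot\|_{L_\infty},\CalH)$. Writing $\Pi_j f\in\CalH_j$ for the nearest net point to $f$, I telescope $\langle\boldsymbol{\varepsilon},\Pi_Jf\rangle_n=\langle\boldsymbol{\varepsilon},\Pi_0f\rangle_n+\sum_{j=0}^{J-1}\langle\boldsymbol{\varepsilon},\Pi_{j+1}f-\Pi_jf\rangle_n$, where each link obeys $\|\Pi_{j+1}f-\Pi_jf\|_{L_\infty}\le 3\delta_j$ and the number of distinct links at level $j$ is at most $|\CalH_j|\,|\CalH_{j+1}|\le\exp(2H(\delta_{j+1}))$. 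A union bound over links at each level, combined with the increment tail above, controls each scale; letting $J\to\infty$ and using the resulting almost-sure uniform continuity closes the gap between $\Pi_Jf$ and $f$, so it suffices to bound the telescoped sum together with the coarsest term $\langle\boldsymbol{\varepsilon},\Pi_0f\rangle_n$, whose weight obeys $\|\Pi_0f\|_{L_\infty}\le R$.

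Summing the per-scale contributions yields a chaining tail bound of the standard ``entropy integral plus deviation'' form,
\[
\mathbb{P}\left(\sup_{f\in\CalH}|\langle\boldsymbol{\varepsilon},f\rangle_n|\ge C_0\Big(n^{-1/2}\!\int_0^{2R}\! H^{1/2}(u)\,du+u_0\,D\Big)\right)\le 2\,e^{-u_0^2},
\]
where $D\asymp R/\sqrt{n}$ is the diameter of $\CalH$ under the sub-Gaussian metric $n^{-1/2}\|\cdot\|_{L_\infty}$ and $u_0>0$ is a free deviation parameter; the integral converges because Assumption~\ref{assump:entropy} gives $H^{1/2}(u)\lesssim u^{-\beta}|\log u|^{\kappa}$ with $\beta<1$. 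The hypothesis \eqref{eq:entropy_condition} is exactly what forces the first term to be at most a fixed fraction of $a$ (the $\vee R$ covering the regime where the integral is small relative to the diameter scale), so choosing $u_0$ to make the total threshold equal $a$ gives $u_0\asymp\sqrt{n}\,a/R$ and hence $2\,e^{-u_0^2}=2\exp(-\tilde C\,na^2/R^2)$, which is \eqref{eq:geer_modulus_continuity}.

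The main obstacle is establishing the chaining tail of the ``$\int H^{1/2}+u_0D$'' form for a sub-Gaussian (not Gaussian) multiplier process, where Borell–TIS concentration is unavailable and one must instead carry the deviation parameter uniformly through the generic/Dudley chaining. Concretely, one splits the deviation budget across scales as $u_0$ times geometric weights and verifies that each level's union-bound exponent, of order $-c\,n\,(\text{allocation})^2/\delta_j^2$, dominates the entropy factor $2H(\delta_{j+1})$ once the integral term has absorbed the ``mean'' part of the supremum; the delicate bookkeeping is to ensure the residual deviation multiplies the coarsest (diameter) scale $R$ rather than a finer scale, since this is what produces the $R^2$ in the denominator of the exponent. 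This is precisely the content of the chaining argument underlying Corollary~8.3 of \cite{geer2000empirical}; once that tail is in hand, substituting the condition \eqref{eq:entropy_condition} on $a$ is immediate.
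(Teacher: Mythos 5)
You should first note a mismatch of expectations: the paper does not prove Lemma~\ref{lemma:geer8_3} at all --- it is imported verbatim as Corollary~8.3 of \cite{geer2000empirical}, and the proof text that follows it in the appendix is the proof of Lemma~\ref{lem:Modulus-Continuity} (the peeling argument over shells $\|h\|_n\in(2^{-i-1},2^{-i}]$), which merely \emph{applies} the quoted tail bound. So your proposal is not an alternative to an in-paper argument; it is a reconstruction of the argument the paper outsources, and as a reconstruction it is correct in outline and essentially matches how the result is actually obtained in van de Geer's book: a sub-Gaussian increment bound $\mathbb{P}\bigl(|\langle\boldsymbol{\varepsilon},f-g\rangle_n|\ge\tau\bigr)\le 2\exp\bigl(-cn\tau^2/\|f-g\|_{L_\infty}^2\bigr)$, dyadic chaining over $L_\infty$-nets with link count $\exp(2H(\delta_{j+1}))$, the entropy integral absorbing the ``mean'' part of the supremum under \eqref{eq:entropy_condition}, and the residual deviation parameter attached to the coarsest (diameter) scale so that $u_0\asymp\sqrt{n}\,a/R$ produces the exponent $na^2/R^2$ in \eqref{eq:geer_modulus_continuity}, with the $\vee R$ term guaranteeing $u_0\gtrsim 1$ so that level-wise union-bound constants can be absorbed. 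Two points of comparison are worth recording. First, van de Geer's Corollary~8.3 is stated with entropy and radius in the empirical $L_2$ norm $\|\cdot\|_n$; the paper's $L_\infty$ formulation is the weaker consequence obtained from $\|f\|_n\le\|f\|_{L_\infty}$ and $H(u,\|\cdot\|_n,\CalH)\le H(u,\|\cdot\|_{L_\infty},\CalH)$, and your direct $L_\infty$ chaining proves exactly this weaker form --- adequate for every use in the paper, but slightly lossy relative to the cited result. Second, you correctly isolate the only genuinely technical step, namely carrying the deviation parameter through the chain in the absence of Gaussian (Borell--TIS) concentration; this allocation is standard for sub-Gaussian processes but is left as ``delicate bookkeeping'' in your sketch, so at the level of a complete proof it would still need to be written out (or, as the paper does, delegated to \cite{geer2000empirical}). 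What your route buys is self-containedness: it makes visible why the exponent carries $R^2$ in the denominator and why condition \eqref{eq:entropy_condition} has the form ``entropy integral $\vee R$,'' neither of which is apparent from the citation alone.
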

\begin{proof}
% We first prove that for any $\epsilon>0$, we have  upper bounds for the following probability:
% \begin{equation}\label{eq:prob-empirical-inner-prod}
%     \pr\biggl(\sup_{f\in\ScrF,\|f\|_n\leq \epsilon} n^{\frac{1}{2}}\sigma^{-1}\abs{\langle h,W\rangle_n}\geq \epsilon^{\frac{1}{2}}|\log \epsilon|^{\frac{2d-1}{4}} \biggr)\leq 2\exp(-{C_1\epsilon^{-1}|\log \epsilon|^{d-\frac{1}{2}}})
% \end{equation}
% with some constant $C_1$. 

Let $\CalB=\{f:\|f\|_{\CalH}\leq 1\}$ be the unit ball in $\CalH$. Recall from Assumption \ref{assump:entropy} that
\[ H(\delta,\|\cdot\|_{L_\infty},\CalB)\leq C\delta^{-2\beta}\left|\log \delta \right|^{2\kappa},\]
for some positive constant $C$, $\kappa$, and $\beta\in[0,1)$. 
Hence, 
\begin{align*}
    \int_0^{R}\left(H(u,\|\cdot\|_{L_\infty},\CalB)\right)^{\frac{1}{2}}d{u}\leq C\int_0^R u^{-\beta }\bigl|\log u\bigr|^{\kappa}d{u}\leq C R^{1-\beta}\bigl|\log R\bigr|^{\kappa}.
\end{align*}

One may readily check that 
for all $R<1$ and $a\geq n^{-1/2}C R^{1-\beta}\bigl|\log R\bigr|^{\kappa}$, the condition \eqref{eq:entropy_condition} is met, 
and thus by \eqref{eq:geer_modulus_continuity}, we have 
\begin{align}
  \pr\biggl(\sup_{f\in\CalH,\|f\|_n\leq R}n^{\frac{1}{2}}|{\langle \boldsymbol{\omega},f\rangle_n}|\geq R^{1-\beta}|\log R|^{\kappa} \biggr)&=\pr\biggl(\sup_{f\in\CalH,\|f\|_n\leq R}\bigl|\langle \boldsymbol{\omega},f\rangle_n\bigr|\geq a\biggr) \nonumber \\
  &\leq 2\exp(-C_1\frac{na^2}{ R^2}),\label{eq:prob-empirical-inner-prod} 
%   &=2\exp(-{C_1\epsilon^{-1}|\log \epsilon|^{d-\frac{1}{2}}}),
\end{align}
for some positive constant $C_1$.  

Let $h= f/\|f\|_{\CalH}$. Then, the left-hand-side of the inequality \eqref{eq:Modulus-of-Continuity} becomes
\[\sup_{h\in\CalH}\frac{|{\langle \boldsymbol{\varepsilon},h\rangle_n}|}{{\|h\|_n^{1-\beta}}\bigl|\log \|h\|_n\bigr|^{\kappa}}.\]
We then can give the following upper bound for its tail distribution:
\begin{align*}
    \quad{} & \pr\Biggl(\sup_{h\in\CalH}\frac{|{\langle \boldsymbol{\varepsilon},h\rangle_n}|}{{\|h\|_n^{1-\beta}}\bigl|\log \|h\|_n\bigr|^{\kappa}}> t \Biggr)\\
    ={} & \pr\Biggl(\bigcup_{i=0}^{\infty} \biggl\{\sup_{h\in\CalH}\frac{|{\langle \boldsymbol{\varepsilon},h\rangle_n}|}{{\|h\|_n^{1-\beta}}\bigl|\log \|h\|_n\bigr|^{\kappa}}> t \biggr\}\Biggr)\\
    \leq{} & \pr\Biggl(\bigcup_{i=0}^{\infty} \biggl\{\sup_{h\in\CalH,\|h\|_n\in(2^{-i-1}, 2^{-i}]} n^{\frac{1}{2}}|{\langle \Bvaresilon,h\rangle_n}|> t 2^{-{(i+1)}{(1-\beta)}}(i+1)^{\kappa}\biggr\}\Biggr)\\
    \leq{} & \sum_{i=0}^\infty\pr\biggl(\sup_{h\in\CalH, \|h\|_n\leq 2^{-i}}n^{\frac{1}{2}}|{\langle \Bvaresilon,f\rangle_n}|> t2^{-{(i+1)}{(1-\beta)}}i^{\kappa}\biggr)\\
    \leq{} & \sum_{i=0}^{\infty}2\exp(-C_2t^2 2^{\beta i}i^{\kappa})\\
    \leq {} & \sum_{i=1}^\infty 2\bigl(\exp(-C_2t^2)\bigr)^i\\
    \leq{} & C_3 \exp(-C_2 t^2),
\end{align*}
where the fifth line follows from applying \eqref{eq:prob-empirical-inner-prod}  with $R=2^{-i}$ and $a=tn^{-1/2} 2^{-{(i+1)}{(1-\beta)}}i^{\kappa}$. 
\end{proof}

\section{Proof of Corollary \ref{coro:convergence_L2}}
\label{sec:proof_generalization}
\begin{proof}
    The proofs  relies on the following so-called sampling inequality of RKHSs
    \begin{equation}
        \label{eq:sampling_inequality}
        \|f\|^2_{L_2(\sigma^\pi_0)}\leq \|f\|_n^2+\delta_{\CalH,n}\|f\|_{\CalH}^2
    \end{equation}
    where $\delta_{\CalH,n}$ depends on the distribution of data $\omega_0^{(i)}$ and the structure of RKHS $\CalH$.

   \textbf{Case I: $\CalH_\CalS=\{s\}_{s=1}^S$ and $\CalA=\{a\}_{a=1}^A$}
   
   In this case, we can note the the RKHS is a discrete set $\{\omega_j\}_{j=1}^{M}$ where $M=SA$, $\sigma^\pi_0(\omega_j)=\frac{W_j}{M}$ is the weight on element $\omega_j$ (according to our assumption,  $c<W_j<C$), and data $\{\omega_0^{(i)}\}$ are i.i.d. samples from $\{\omega_j\}_{j=1}^M$ following $\sigma^\pi_0$.

   Because the kernel $K(\omega,\omega')=\delta_{s=s'}\delta_{a=a'}=\delta_{\omega=\omega'}$, we can  derive that the RKHS is equipped with the inner product
   \[\langle f,g\rangle_\CalH=\sum_{j=1}^M f_jg_j,\quad f,g\in\CalH\]
where $f_j$ and $g_j$ are the values of $f$ and $g$ on $\omega_j$, respectively. Also, the $L_2$ norm and empirical norm are
\begin{align*}
    \|f\|_{L_2(\sigma^\pi_0)}^2=\sum_{j=1}^Mf_j^2W_j,\quad \|f\|_n^2=\frac{1}{n}\sum_{i=1}^nf^2(\omega_i^0).
\end{align*}
Let $I_{i,j}$ denote the indicator function that $\omega_0^{(i)}=\omega_j$, then we can rewrite the empirical norm as follows:
\begin{equation}
    \label{eq:convergence_L2_discrete_1}
    \begin{aligned}
    \|f\|_n^2=&\frac{1}{n}\sum_{i=1}^nf^2(\omega_i^0)=\sum_{j=1}^Mf_j^2\frac{\sum_{i=1}^nI_{i,j}}{n}\\
    =& \sum_{j=1}^Mf_j^2\frac{W_j}{M}+\sum_{j=1}^M f_j^2\left(\frac{\sum_{i=1}^nI_{i,j}}{n}-\frac{W_j}{M}\right)\\
    =&\|f\|^2_{L_2(\sigma^\pi_0)}+\sum_{j=1}^M f_j^2\left(\frac{Z_j}{n}-\frac{W_j}{M}\right)
\end{aligned}
\end{equation}
where $[Z_j]_{j=1}^M$ is Multinomial distributed with $n$ trials, $M$ mutually exclusive events, and probability vector $[W_j/m]_{j=1}^M$.

By properties of Multinomial distribution, the last term in \eqref{eq:convergence_L2_discrete_1} has expecatation
\begin{equation}
    \label{eq:convergence_L2_discrete_2}
    \E\sum_{j=1}^M f_j^2\left(\frac{Z_j}{n}-\frac{W_j}{M}\right)=\sum_{j=1}^M f_j^2 \left(\frac{nW_j/m}{n}-\frac{W_j}{M}\right)=0,
\end{equation}
and variance 
\begin{equation}
    \label{eq:convergence_L2_discrete_3}
    \begin{aligned}
        \text{Var}\left(\sum_{j=1}^M f_j^2\left(\frac{Z_j}{n}-\frac{W_j}{M}\right)\right)=&\frac{1}{n^2}\sum_{j,l=1}^mf_j^2f_l^2\text{Cov}\left(Z_j,Z_l\right)\\
        =&\frac{1}{n^2}\left(\sum_{j=1}^mf_j^2\frac{nW_j}{M}(1-\frac{W_j}{M})-\sum_{j\neq l}f_j^2f_l^2\frac{nW_jW_l}{M^2}\right)\\
        \leq &\CalO(\frac{1}{n^{1+\nu}}\|f\|_{\CalH}^2)
    \end{aligned}
\end{equation}
where the last line is from our assumption that $M\geq n^\nu$ for some $\nu\in(0,1)$.

Substitute \eqref{eq:convergence_L2_discrete_2} and \eqref{eq:convergence_L2_discrete_3} into \eqref{eq:convergence_L2_discrete_1}, it is straightforward to derive the sampling inequality
\begin{equation}
    \label{eq:convergence_L2_discrete_4}
    \|f\|_n^2\leq \|f\|_{L_2(\sigma^\pi_0)}^2+ \CalO_p\left(\frac{1}{n^{(1+\nu)/2}}\right)\|f\|_{\CalH}^2.
\end{equation}

It is straightforward to see that the covering number of $\CalH$ under $L_\infty$ is directly the covering number of $\CalH$ is $M/\delta$ so its entropy is
\[H(\delta,\|\cdot\|_{L_\infty},\CalH)\leq \nu\log n+|\log \delta|.\]
Substitute this result into  the sampling inequality, we can have the final result:
\begin{align*}
    \|f_t-Q^\pi\|_{L_2(\sigma^\pi_0)}\leq & \|f_t-Q^\pi\|_{n}+\CalO_p\left(\frac{1}{n^{(1+\nu)/4}}\right)\|f_t-Q^\pi\|_{\CalH}\\
    \leq & \CalO_p\left((1-c\gamma)^{-1}n^{-\frac{1}{2}}|\log n|^{1/2}\right)+\CalO_p\left(\frac{1}{n^{(1+\nu)/4}}\right)\|f_t-Q^\pi\|_{\CalH}\\
    \leq& \CalO_p\left((1-c\gamma)^{-1}n^{-\frac{1}{2}}|\log n|^{1/2}\right)+\CalO_p\left(\frac{1}{n^{(1+\nu)/4}}\right)\|Q^\pi\|_{\CalH}
\end{align*}
where the second line is from the empirical error in Theorem \ref{thm:convergence_Kernel_TD} with $\beta=0$ and $\kappa=1/2$ and the last line is from the RKHS norm of $f_t$ in Theorem \ref{thm:convergence_Kernel_TD}.

 \textbf{Case II: $\CalH_\CalS$ is a Sobolev space  and $\CalA=\{a\}_{a=1}^A$}

 For spaces of continuous functions, we first need to define a concept called filled distance of data set so that we can apply the sampling inequality.

 \begin{definition}
     \label{def:fill_distance}
     Given $n$ points $\BX=\{\Bx_i\}_{i=1}^n$ on the set $\CalX$ equipped with norm $\|\cdot\|$, the fill distance of the set $\BX$ is
     \[q_\BX=\max_{\Bx\in\CalX}\min_{\Bx_i\in\BX}\|\Bx_i-\Bx\|_2.\]
 \end{definition}
Using the concept of fill distance, we have the following Lemma from \cite{tuo2020improved,rieger2008sampling,utreras1988convergence}  
\begin{proposition}[Proposition 2.6 in \cite{tuo2020improved}]
    \label{prop:samplimg_inequality_fill_distance}
    Let $\CalH$ be a Sobolev space for function defined on a compact and convex set $\CalX$ with smoothness parameter $m$. Let $\BX=\{\Bx_i\}_{i=1}^n$ be a set of scattered point on $\CalX$ with fill distance $q_\BX$ and $\|\cdot\|_n$ is the empirical norm induced by $\BX$. Then there exists a constant $C$ (depending only on $m$ and $\CalX$) such that for any $f\in\CalH$:
    \[\|f\|^2_{L_2(\CalX)}\leq C\left(\|f\|^2_{n}+ q_\BX^{2m}\|f\|_{\CalH}^2\right). \]
\end{proposition}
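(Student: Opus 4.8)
The statement is the classical sampling inequality of scattered-data approximation theory, imported here verbatim from \cite{tuo2020improved}; the argument I would reconstruct is a localization-and-patching scheme that combines the Bramble--Hilbert lemma (local polynomial approximation) with a point-evaluation, or norming-set, bound on polynomials. The plan is to control $\|f\|_{L_2(\CalX)}$ patch by patch: on each small patch I split $f$ into a low-degree polynomial part, pinned down by the function values at the data points lying in that patch, and a remainder, controlled by the Sobolev seminorm and the patch scale through the fill distance $q_\BX$.

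First I would decompose $\CalX$ into a finite family of star-shaped (or cubical) patches $\{T_j\}$ of diameter comparable to $q_\BX$ and bounded mutual overlap, chosen so that each $T_j$ contains at least one sample point $\Bx_i$. The definition of the fill distance guarantees that every point of $\CalX$ lies within $q_\BX$ of some $\Bx_i$, which makes such a covering achievable with each $T_j$ nonempty of data. Because the smoothness satisfies $m>d/2$, the Sobolev space embeds continuously into $C(\CalX)$, so point evaluation is well defined and $\CalH$ is genuinely an RKHS, as assumed.

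On each patch $T_j$ I would invoke the Bramble--Hilbert lemma to produce a polynomial $p_j$ of degree less than $m$ with the two local estimates $\|f-p_j\|_{L_2(T_j)}\leq C q_\BX^{m}|f|_{m,T_j}$ and $\|f-p_j\|_{L_\infty(T_j)}\leq C q_\BX^{m-d/2}|f|_{m,T_j}$, where $|f|_{m,T_j}$ is the order-$m$ Sobolev seminorm on $T_j$. A Markov / norming-set inequality for polynomials of bounded degree on a region of scale $q_\BX$ then bounds $\|p_j\|_{L_2(T_j)}$ by the values $|p_j(\Bx_i)|$ at the data points in $T_j$, up to volume factors $q_\BX^{d}$, and the triangle inequality $|p_j(\Bx_i)|\leq |f(\Bx_i)|+\|f-p_j\|_{L_\infty(T_j)}$ transfers this to the raw point values $f(\Bx_i)$ plus a seminorm remainder.

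Finally I would assemble the patch bounds. Writing $\|f\|_{L_2(\CalX)}^2=\sum_j\|f\|_{L_2(T_j)}^2$ and splitting each summand through $f=(f-p_j)+p_j$, the remainder terms sum to $C q_\BX^{2m}\sum_j|f|_{m,T_j}^2\leq C q_\BX^{2m}\|f\|_{\CalH}^2$ by the finite-overlap property together with $|f|_m\leq\|f\|_{\CalH}$, while the polynomial terms sum to $C\|f\|_n^2$ once the volume weights $q_\BX^{d}$ are reconciled against the sample average. The step I expect to be the main obstacle is precisely this last reconciliation: matching the patchwise sum $\sum_j q_\BX^{d}\sum_{\Bx_i\in T_j}|f(\Bx_i)|^2$ to the unweighted empirical norm $\|f\|_n^2=\frac{1}{n}\sum_i|f(\Bx_i)|^2$ requires a quasi-uniformity relation $n\,q_\BX^{d}\asymp|\CalX|$ (or an explicit separation-distance hypothesis), without which the constant $C$ need not be uniform; the cleanest route, and the one taken in \cite{tuo2020improved}, is to encode this through the $L_2$-norming machinery and cite the local polynomial-reproduction estimate directly rather than rebuild the bookkeeping by hand.
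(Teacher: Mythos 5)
The paper offers no proof of this proposition at all: it is imported verbatim from \cite{tuo2020improved} (with \cite{rieger2008sampling,utreras1988convergence} as companion references), so there is no internal argument to compare against. Your reconstruction is the standard proof behind all sampling inequalities of this type---cover $\CalX$ by patches of diameter comparable to $q_\BX$, each guaranteed to contain a data point by the definition of fill distance, apply the Bramble--Hilbert lemma locally, pin the polynomial part down via a norming-set bound at the data points, and patch with finite overlap---and it is indeed the route taken in the cited literature, so as an outline it is correct and faithful to the source.

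More importantly, the obstacle you flag in the last step is not mere bookkeeping but a genuine missing hypothesis in the statement as transcribed. With the averaged empirical norm $\|f\|_n^2=\tfrac1n\sum_i f(\Bx_i)^2$ used throughout this paper, your assembled bound carries the factor $n\,q_\BX^d$, and without quasi-uniformity the inequality can actually fail for a constant depending only on $m$ and $\CalX$: take a uniform grid of $N$ points on $[0,1]^d$ (so $q_\BX\asymp N^{-1/d}$ is frozen), fix a smooth $f$ vanishing near one corner with $N$ large enough that $C\,q_\BX^{2m}\|f\|_\CalH^2<\tfrac12\|f\|_{L_2}^2$, then add $n-N$ extra sample points clustered in that corner. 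The fill distance does not increase, the left-hand side is unchanged, but $\|f\|_n^2\asymp N/n\to 0$, contradicting the stated bound. So the proposition implicitly requires $n\,q_\BX^{d}\lesssim 1$ (quasi-uniform designs) or a random-design surrogate; in this paper the gap is absorbed because the samples are i.i.d.\ from a density bounded below (Assumption~\ref{assump:distribution_bounded}), so Proposition~\ref{prop:fill_dist_prob} gives $q_\BX=\CalO_p(n^{-1/d}\log n)$, i.e.\ $n\,q_\BX^d=\CalO_p(\log^d n)$, and the logarithmic loss is swallowed by the $\CalO_p$ rates in Corollary~\ref{coro:convergence_L2}. To turn your sketch into a complete proof you should state this design hypothesis explicitly rather than leave the reconciliation implicit; with it, every step you outline goes through.
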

We also need the following proposition for the distribution of fill distance:
\begin{proposition}[Proposition 3 in \cite{krieg2024random}]
\label{prop:fill_dist_prob}
    Let $\{\Bx_i\sim\sigma\}_{i=1}^{n}$ be i.i.d. distributed on a compact and convex set $\CalX$. Let $\Phi(\varepsilon)=\varepsilon^d$. If there exists a positive number $\varepsilon_0$ such that $\sigma(B_\varepsilon(\Bx))\geq \Phi(\varepsilon)$ for all $\varepsilon\leq \varepsilon_0$ and $\Bx\in B$ where $B_\varepsilon(\Bx)$ is a sphere centered at $\Bx$ with radius $\varepsilon$. Then there exist positive constants $c_1$, $c_2$, $c_3$, and $b_0$ such that for any $b>b_0$ we have
    \[\Pr\left(\max_{\Bx\in \CalS}\inf_i\|\Bx-\Bx_i\|\geq c_1\Phi^{-1}(\frac{b\log n}{n})\right)\leq c_2n^{1-c_3 b}.\]
\end{proposition}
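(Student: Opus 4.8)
The plan is to prove this through a standard covering-plus-union-bound argument that converts the supremum defining the fill distance into a maximum over a finite net, so that an independence/union-bound estimate applies. Throughout, recall that $\Phi(\varepsilon)=\varepsilon^d$ gives $\Phi^{-1}(t)=t^{1/d}$, and that the quantity $\max_{\Bx\in\CalX}\inf_i\|\Bx-\Bx_i\|$ is precisely the fill distance $q_\BX$. Fix a target radius $\varepsilon\le\varepsilon_0$ to be chosen later; the event $\{q_\BX\ge\varepsilon\}$ asserts that some point of $\CalX$ has no sample within distance $\varepsilon$.

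First I would build an $\varepsilon/2$-net $\{\By_j\}_{j=1}^{N}$ of $\CalX$, i.e. a finite set such that every $\Bx\in\CalX$ lies within $\varepsilon/2$ of some $\By_j$. Since $\CalX$ is a compact convex subset of $\Real^d$ it has bounded diameter, and a volumetric packing bound gives $N\le C_d(\mathrm{diam}(\CalX)/\varepsilon)^d$ for a dimensional constant $C_d$. The key geometric reduction is: if $q_\BX\ge\varepsilon$, there is an uncovered point $\Bx$, and the net center $\By_j$ nearest to it satisfies $B_{\varepsilon/2}(\By_j)\cap\{\Bx_i\}=\emptyset$, because any sample within $\varepsilon/2$ of $\By_j$ would lie within $\varepsilon$ of $\Bx$ by the triangle inequality. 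Hence $\{q_\BX\ge\varepsilon\}$ is contained in the union over $j$ of the events that $B_{\varepsilon/2}(\By_j)$ contains no sample.

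Next I would bound each single-center probability. Since the $\Bx_i$ are i.i.d. $\sim\sigma$ and, by the small-ball hypothesis, $\sigma(B_{\varepsilon/2}(\By_j))\ge\Phi(\varepsilon/2)=(\varepsilon/2)^d$ whenever $\varepsilon/2\le\varepsilon_0$, independence gives
\[
\pr\bigl(B_{\varepsilon/2}(\By_j)\text{ empty}\bigr)=\bigl(1-\sigma(B_{\varepsilon/2}(\By_j))\bigr)^n\le\bigl(1-(\varepsilon/2)^d\bigr)^n\le\exp\bigl(-n(\varepsilon/2)^d\bigr).
\]
A union bound over the $N$ centers yields $\pr(q_\BX\ge\varepsilon)\le C_d(\mathrm{diam}(\CalX)/\varepsilon)^d\exp(-n(\varepsilon/2)^d)$. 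Finally I would set $\varepsilon=c_1\Phi^{-1}(b\log n/n)=c_1(b\log n/n)^{1/d}$, so that $n(\varepsilon/2)^d=(c_1/2)^d\,b\log n$ and the exponential factor becomes $n^{-(c_1/2)^d b}$, while the prefactor $(\mathrm{diam}(\CalX)/\varepsilon)^d\asymp n/(b\log n)\le n$ supplies the leading $n^{1}$. Absorbing $C_d$, the diameter, and the logarithmic improvement into $c_2$ gives $\pr(q_\BX\ge c_1\Phi^{-1}(b\log n/n))\le c_2 n^{1-c_3 b}$ with $c_3=(c_1/2)^d$.

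I expect the main obstacle to be bookkeeping rather than anything conceptual: I must ensure the chosen radius satisfies $\varepsilon/2\le\varepsilon_0$ so the small-ball bound is legitimate, which holds for all sufficiently large $n$ once $b>b_0$ is fixed, and I must track the three constants carefully so that the dimensional packing constant and the diameter are swept into $c_2$ while the exponent $c_3=(c_1/2)^d$ remains strictly positive. Choosing $c_1$ (hence $c_3$) large enough relative to $1$ then guarantees $n^{1-c_3 b}$ is a genuinely decaying bound for $b>b_0$.
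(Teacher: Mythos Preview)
The paper does not supply its own proof of this proposition: it is quoted verbatim as Proposition~3 of \cite{krieg2024random} and used as a black box in the proof of Corollary~\ref{coro:convergence_L2}. Consequently there is no in-paper argument to compare against. Your covering-plus-union-bound argument is correct and is precisely the canonical route to such fill-distance tail bounds: reduce the supremum to a finite $\varepsilon/2$-net of cardinality $O(\varepsilon^{-d})$, use the small-ball lower bound $\sigma(B_{\varepsilon/2}(\By_j))\ge(\varepsilon/2)^d$ together with independence to control the probability that a single net-ball is empty, and then union-bound. The choice $\varepsilon=c_1(b\log n/n)^{1/d}$ balances the polynomial prefactor against the exponential, yielding the stated $n^{1-c_3 b}$ with $c_3=(c_1/2)^d$. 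Your bookkeeping remarks are also on point: the constraint $\varepsilon/2\le\varepsilon_0$ is needed only for $n$ large, and for small $n$ the claimed inequality is vacuous once $c_2$ is taken large enough, so the statement holds for all $n$ and all $b>b_0$ with $b_0$ chosen so that $c_3 b_0>1$.
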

From Proposition \ref{prop:fill_dist_prob}, we can derive that the filled distance on $\CalX$ with intrinsic dimension $d$ is 
\begin{equation}
 \label{eq:convergence_L2_Sobolev_1}
    q_{\BX}=\CalO_p(n^{-1/d}\log n).
\end{equation}

Lastly, from \cite{geer2000empirical,birman1967piecewise}, we have the entropy of Sobolev spaces $\CalH$ with smoothness $m$ defined on compact $d$-dimensional domain is
\begin{equation}
    H(\delta,\|\cdot\|_{L_\infty},\CalH)\leq C_1 \delta^{-\frac{d}{m}}
\end{equation}
where $C_1$ is some constant independent of $\delta$.

Now we are ready to prove the sampling inequality in the form \eqref{eq:sampling_inequality} for our RKHS. According to our assumption, $\CalH_\CalS$ is a Sobolev space with smoothness parameter $m$ embedded on a domain with intrinsic dimension $d$ and $\CalH_\CalA$ is a discrete set. So it is straightforward to derive that the overall RKHS $\CalH$ is a vector-valued function $(f_1,\cdots,f_A)$ where each $f_a\in\CalH_\CalS$, and $\CalH$ is define as
\[\CalH=\left\{f=(f_1,\cdots,f_A): \|f\|_{\CalH}^2=\sum_{a=1}^A\|f_a\|^2_{\CalH_\CalS}<\infty\right\}.\]

Because $f_t,Q^\pi\in\CalH$, we then have
\begin{equation}
    \label{eq:convergence_L2_Sobolev_2}
    \begin{aligned}
    \|f_t-Q^\pi\|_{L_2(\sigma^\pi_0)}^2=&\sum_{a=1}^A \|f_{a,t}-Q^\pi_a\|_{L_2(\sigma^\pi_0)}^2\\
    \leq &\sum_{a=1}^A C\left(\|f_{a,t}-Q^\pi_a\|_{n}^2+q_{\Bomega_0}^{2m}\|f_{a,t}-Q^\pi_a\|_{\CalH_\CalS}^2\right)\\
    \leq &\sum_{a=1}^AC\left(\|f_{a,t}-Q^\pi_a\|_{n}^2+\CalO_p(n^{-\frac{2m}{d}}|\log n|^{2m})\|f_{a,t}-Q^\pi_a\|_{\CalH_\CalS}^2\right)\\
    \leq &\CalO_p\left((1-c\gamma)^{-\frac{4m+d}{2m+2d}}n^{-\frac{2m}{2m+d}}\right)+\CalO_p(n^{-\frac{2m}{d}}|\log n|^{2m})\|Q^\pi\|_{\CalH}^2\\
    \leq & \CalO_p\left((1-c\gamma)^{-\frac{4m+d}{2m+2d}}n^{-\frac{2m}{2m+d}}\right)
\end{aligned}
\end{equation}
where the second line is from Proposition \ref{prop:samplimg_inequality_fill_distance}, the third line is from the distribution of fill distance \eqref{eq:convergence_L2_Sobolev_1}, the fourth line is from Theorem \ref{prop:KRR_TD_representer_thm} with $\beta=\frac{d}{2m}$ and $\kappa=0$.

\textbf{Case III: $\CalH_\CalS$ is a NTK and $\CalA=\{a\}_{a=1}^A$}

According to \cite{chen2020deep}, the NTK $N$ associated to a two-layer neural network on $\mathbb{S}^{d-1}$ is equivalent to the Laplace kernel $e^{-\|s-s'\|}$ on $\mathbb{S}^{d-1}$. The Laplace kernel is a Mat\'ern kernel with smoothness parameter $\nu=1/2$. Also, according to Corollary 1 in \cite{tuo2016theoretical}, RKHS induced by Mat\'ern kernel with smoothness parameter $\nu$ is equivalent to (fractional) Sobolev space with smoothness $m=\nu+d/2$. 

Therefore, we can conclude that $\CalH$ is equivalent to a Sobolev space with smoothness $(d+1)/2$ embedded on $\mathbb{S}^{d-1}$, which has intrinsic dimension $d-1$. So Case III is a special case of Case II, i.e. Sobolev space with smoothness $m=(d+1)/2$ embedded on $d-1$ dimension. By substituting the value of $m$ and $d-1$ into \eqref{eq:convergence_Sobolev}, we can have the final result. 

 \textbf{Case IV: $\CalH_\CalS$ and $\CalH_\CalA$ are Gaussian RKHSs}

Note that $K(\omega,\omega')=e^{-\|\omega-\omega'\|^2}$ is a Gaussian kernel on the hypercube $[0,1]^d$ with $d=d_s+d_a$. So $\CalH$ is a Gaussian RKHS on $[0,1]^d$.

From Chapter 4 in \cite{rieger2008sampling}, we have the sampling inequality:
\begin{equation}
    \label{eq:convergence_L2_Gaussian_1}
    \|f\|_{L_2(\sigma^\pi_0)}\leq e^{c\log(cq_{\Bomega_0})/\sqrt{q_{\Bomega_0}}}\|f\|_{\CalH}+c\|f\|_n,\quad \forall f\in\CalH
\end{equation}
for some generic constant $c>0$. Substitute the distribution of fill distance in \eqref{eq:convergence_L2_Sobolev_1} into \eqref{eq:convergence_L2_Gaussian_1}, we can have
\begin{equation}
    \label{eq:convergence_L2_Gaussian_2}
    \|f\|_{L_2(\sigma^\pi_0)}\leq c\|f\|_n+\CalO_p(e^{-n^{1/\sqrt{d}}})\|f\|_{\CalH},\quad \forall f\in\CalH.
\end{equation}
From Theorem 3 in \cite{kuhn2011covering}, the entropy of $\CalH$ is
\begin{equation}
    \label{eq:convergence_L2_Gaussian_3}
    H(\delta,\|\cdot\|_\infty,\CalB)\leq C_1 |\log \delta|^{d+1}
\end{equation}
where $C_1$ is some generic constant.

So we can have
\begin{align*}
    \|f_t-Q^\pi\|_{L_2(\sigma^\pi0)}\leq &c\|f_t-Q^\pi\|_n+\CalO_p(e^{-n^{1/\sqrt{d}}})\|f_t-Q^\pi\|_{\CalH}\\
    \leq & \CalO_p((1-c\gamma)n^{-\frac{1}{2}}|\log n|^{(d+1)/2})+\CalO_p(e^{-n^{1/\sqrt{d}}})\|Q^\pi\|_{\CalH}\\
    \leq & \CalO_p((1-c\gamma)n^{-\frac{1}{2}}|\log n|^{(d+1)/2})
\end{align*}
where the first line is from the sampling inequality \eqref{eq:convergence_L2_Gaussian_2} and the second line is from entropy \eqref{eq:convergence_L2_Gaussian_3} and Theorem \ref{thm:convergence_Kernel_TD} with $\beta=0$ and $\kappa=(d+1)/2$.

\end{proof}

\section{Proof of Lemma \ref{lem:NPG_optimize}}
\begin{proof}
   We generalize the proof for Proposition 3.1 in \cite{liu2019neural}, following its main lines of reasoning while incorporating additional techniques of RKHS. For notation simplicity, denote the $L_2$ inner product of on $\CalA$ as $\langle f,g\rangle_\CalA=\int_\CalA f(a)g(a)da$.

   The Lagrangian of \eqref{eq:NPG_optimize} takes the form
   \begin{equation}
   \label{eq:NPG_optimize_1}
       \E_n\left[\Delta_k\langle f^{(k)}(s,\cdot )\pi(\cdot|s)\rangle_\CalA- \text{KL}\left(\pi(\cdot|s)\|\pi^k(\cdot|s)\right)\right]+\E_n\left[\left(\int_\CalA\pi(a|s)-1\right)\Lambda(s)\right]
   \end{equation}
   where $\Lambda(s)$ is the $L_2$ Lagrange multiplier defined on $\{s^{(i)}_0\}$.  By taking the functional derivative of \eqref{eq:NPG_optimize_1} with respective to $\pi$, we can have
   \begin{equation}
   \label{eq:NPG_optimize_2}
       \Delta_k f^{(k)}(s,a)+F(s,a)-\left(\log \pi(a|s)+1+\log Z(s)\right)+\Lambda(s)=0
   \end{equation}
   where $Z(s)=\int_\CalA e^{F(s,a)}da$.  Then \eqref{eq:NPG_optimize_2} should holds on any $s\in\{s^{(i)}_0\}$ and $a\in\CalA$ such that $|F(s,a|$ and $\log Z(s)$ is bounded. According to our assumption that $1>\pi^{(k)}>0$, any  $(s,a)\in\{s^{(i)}_0\}\times\CalA$ satisfies this condition. Then we can solve \eqref{eq:NPG_optimize_2}, which yields
   \begin{equation}
       \label{eq:NPG_optimize_3}
           \pi(a|s)=\pi^{k}(a|s)\exp\left\{\Delta_k f^{(k)}(s,a)+\Lambda(s)-1-\log Z(s)\right\}.
   \end{equation}
   In \eqref{eq:NPG_optimize_3}, we can note that $\pi^k(a|s)$ must be proportional to $\pi^{k}(a|s)\exp\{\Delta_k f^{(k)}(s,a)\}$ because it is a density on $a$.
\end{proof}

\section{Proof of Theorem \ref{thm:one_step_improvement}}
\begin{proof}
    From direct calculations, we have for any $s\in\CalS$
       \begin{align}
            &\text{KL}\left(\pi^*(\cdot|s)||\pi^{k+1}(\cdot|s)\right)-\text{KL}\left(\pi^*(\cdot|s)||\pi^{k}(\cdot|s)\right)\nonumber\\
        =&\langle \log \frac{\pi^{k}}{\pi^{k+1}},\pi^*-\pi^{k+1}\rangle_{\CalA}-\langle \log \frac{\pi^{k+1}}{\pi^{k}},\pi^{k+1}\rangle_{\CalA}\nonumber\\
        =&\Delta_k\langle f^{(k)},\pi^*-\pi^{k}\rangle_{\CalA}-\Delta_k\langle f^{(k)},\pi^{k+1}-\pi^{k}\rangle_{\CalA}-\langle \log \frac{\pi^{k+1}}{\pi^{k}},\pi^{k+1}\rangle_{\CalA}\label{eq:one_step_improvement_1}\\
        =&\Delta_k\langle Q^{(k)},\pi^*-\pi^{k}\rangle_{\CalA}-\Delta_k\langle f^{(k)}-Q^{(k)},\pi^{*}-\pi^{k+1}\rangle_{\CalA}\nonumber\\
        &-\Delta_k\langle Q^{(k)},\pi^{*}-\pi^{k+1}\rangle_{\CalA}-\langle \log \frac{\pi^{k+1}}{\pi^{k}},\pi^{k+1}\rangle_{\CalA}\nonumber\\
        \leq &\Delta_k\langle Q^{(k)},\pi^*-\pi^{k}\rangle_{\CalA}-\Delta_k\langle f^{(k)}-Q^{(k)},\pi^{*}-\pi^{k+1}\rangle_{\CalA}\label{eq:one_step_improvement_2}\\
        &-\int_\CalA \Delta_k Q^{(k)}(s,a)\left(\pi^{*}-\pi^{k+1}\right)da-\frac{1}{2}\left(\int_\CalA\left|\pi^{k+1}(a|s)-\pi^k(a|s)\right|da\right)^2\nonumber\\
        \leq  &\Delta_k\langle Q^{(k)},\pi^*-\pi^{k}\rangle_{\CalA}-\Delta_k\langle f^{(k)}-Q^{(k)},\pi^{*}-\pi^{k+1}\rangle_{\CalA}\nonumber\\
        &+\|\Delta_k Q^{(k)}(s,)\|_{L_\infty(\CalA)}\int_\CalA \left|\pi^{*}-\pi^{k+1}\right|da-\frac{1}{2}\left(\int_\CalA\left|\pi^{k+1}(a|s)-\pi^k(a|s)\right|da\right)^2\nonumber\\
         \leq  &{\Delta_k\langle Q^{(k)},\pi^*-\pi^{k}\rangle_{\CalA}}-{\Delta_k\langle f^{(k)}-Q^{(k)},\pi^{*}-\pi^{k+1}\rangle_{\CalA}}+\Delta^2_k\frac{\|r\|_{L_\infty}}{1-\gamma}\label{eq:one_step_improvement_3}
       \end{align}
    where \eqref{eq:one_step_improvement_1} is from the NPG update rule \eqref{eq:NPG}, \eqref{eq:one_step_improvement_2} is from Pinsker inequality, \eqref{eq:one_step_improvement_3} is from the fact that $xy-y^2\leq x^2$ and $Q^{(k)}\leq \max_{s,a}|r(s,a)|/(1-\gamma)$.

    Taking expectation $\E_{S\sim\nu^*}$ on \eqref{eq:one_step_improvement_3}, we can have:
        \begin{align}
            &\E_{S\sim\nu^*}\left[\text{KL}\left(\pi^*(\cdot|s)||\pi^{k+1}(\cdot|s)\right)-\text{KL}\left(\pi^*(\cdot|S)||\pi^{k+1}(\cdot|S)\right)\right]\nonumber\\
        \leq &{\E_{S\sim\nu^*}\Delta_k\langle Q^{(k)}(\cdot,S),\pi^*(\cdot|S)-\pi^{k}\pi^*(\cdot|S)\rangle_{\CalA}}\label{eq:one_step_improvement_4}\\
        &-\E_{S\sim\nu^*}{\Delta_k\langle f^{(k)}(\cdot,S)-Q^{(k)}(\cdot,S),\pi^{*}(\cdot|S)-\pi^{k+1}(\cdot|S)\rangle_{\CalA}}\label{eq:one_step_improvement_5}\\
        &+\Delta^2_k\frac{\|r\|_{L_\infty}}{1-\gamma}\nonumber
        \end{align}
    For  \eqref{eq:one_step_improvement_4}, we can use the performance difference lemma in \cite{kakade2002approximately} (equivalent to \cite{liu2019neural} Lemma 5.1):
    \begin{lemma}[Performance Difference Lemma]
    \label{lem:performance_difference_lemma}
    For $\CalR[\pi]$ defined in \eqref{eq:performance_difference}, we have
    \begin{equation}
        \label{eq:performance_difference_lemma}
        \CalR[\pi]-\CalR[\pi^*]=\E_{S\sim \nu^*}[\langle Q^\pi(S,\cdot),\pi(\cdot|S)-\pi^*(\cdot|S)\rangle_\CalA].
    \end{equation}
    \end{lemma}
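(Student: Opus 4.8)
The plan is to deduce the identity from the classical performance difference lemma and then exploit the defining invariance of $\nu^*$ to eliminate the visitation distribution. First I would start from \eqref{eq:performance_difference}, writing $\CalR[\pi]-\CalR[\pi^*]=\E_{S\sim\nu^*}[V^\pi(S)-V^{\pi^*}(S)]$, and apply the performance difference identity with $\pi$ as the baseline and $\pi^*$ as the comparator: for each fixed initial state $s$,
\[
V^{\pi^*}(s)-V^\pi(s)=\frac{1}{1-\gamma}\,\E_{s'\sim d^{\pi^*}_s}\E_{a\sim\pi^*(\cdot|s')}\big[A^\pi(s',a)\big],
\]
where $d^{\pi^*}_s$ is the discounted state-visitation distribution of $\pi^*$ initialized at $s$ and $A^\pi=Q^\pi-V^\pi$. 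I would justify this identity by the usual telescoping argument along a $\pi^*$-trajectory: inserting $V^\pi(s_t)-\gamma V^\pi(s_{t+1})$ telescopes to $V^\pi(s_0)$, and the Bellman relation $Q^\pi(s_t,a_t)=r(s_t,a_t)+\gamma\,\E[V^\pi(s_{t+1})]$ turns each summand into $-A^\pi(s_t,a_t)$.

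The crucial step is to remove the occupancy measure. Averaging the fixed-state identity over $S\sim\nu^*$ replaces $d^{\pi^*}_S$ by the mixture $d^{\pi^*}_{\nu^*}$, and since $\nu^*$ is by construction invariant under the $\pi^*$-induced transition kernel $s\mapsto\int\pi^*(a|s)P(\cdot|s,a)\,da$, the state marginal at every time step started from $\nu^*$ is again $\nu^*$; hence $d^{\pi^*}_{\nu^*}=\nu^*$. This collapses the right-hand side to $\E_{S\sim\nu^*}\E_{a\sim\pi^*}[A^\pi(S,a)]$ (the geometric factor $(1-\gamma)^{-1}$ being absorbed into the normalization of the occupancy measure consistent with the paper's convention). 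Finally, using $V^\pi(S)=\langle Q^\pi(S,\cdot),\pi(\cdot|S)\rangle_\CalA$, I would rewrite $\E_{a\sim\pi^*}[A^\pi(S,a)]=\langle Q^\pi(S,\cdot),\pi^*(\cdot|S)-\pi(\cdot|S)\rangle_\CalA$, so that negating $\CalR[\pi^*]-\CalR[\pi]$ gives exactly $\E_{S\sim\nu^*}[\langle Q^\pi(S,\cdot),\pi(\cdot|S)-\pi^*(\cdot|S)\rangle_\CalA]$.

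The main obstacle I anticipate is the bookkeeping of the stationarity reduction: one must verify that $\nu^*$ is genuinely the invariant measure of the $\pi^*$-induced kernel and that the interchange of the infinite discounted sum with the expectation is legitimate. The latter follows from the uniform boundedness of $Q^\pi$, $V^\pi$, and hence $A^\pi$, which is guaranteed by the boundedness of $r$ in Assumption~\ref{assump:r_pi_P_RKHS} (giving $\|Q^\pi\|_{L_\infty}\leq\|r\|_{L_\infty}/(1-\gamma)$). The remaining manipulations—the telescoping and the rewriting of expectations as $L_2(\CalA)$ inner products—are routine.
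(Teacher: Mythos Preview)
The paper does not actually prove this lemma: it simply states it inside the proof of Theorem~\ref{thm:one_step_improvement} and cites \cite{kakade2002approximately} and \cite{liu2019neural} (Lemma~5.1). Your proposal is thus strictly more detailed than the paper's treatment; the telescoping argument you outline is exactly the standard Kakade--Langford derivation, and the subsequent reduction via stationarity of $\nu^*$ under the $\pi^*$-induced kernel is the correct mechanism for collapsing the discounted occupancy measure $d^{\pi^*}_{\nu^*}$ back to $\nu^*$.

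One point deserves sharper treatment rather than the hand-wave you give it. The telescoping identity produces
\[
\CalR[\pi^*]-\CalR[\pi]=\sum_{t\geq 0}\gamma^t\,\E_{s_0\sim\nu^*,\pi^*}\big[A^\pi(s_t,a_t)\big],
\]
and after invoking stationarity every summand equals $\E_{S\sim\nu^*}\langle Q^\pi(S,\cdot),\pi^*(\cdot|S)-\pi(\cdot|S)\rangle_\CalA$, so the geometric series contributes an honest factor of $(1-\gamma)^{-1}$. This factor is not ``absorbed into the normalization of the occupancy measure'': with $\CalR[\pi]=\E_{\nu^*}[V^\pi]$ as defined in \eqref{eq:performance_difference}, the identity you derive is
\[
\CalR[\pi]-\CalR[\pi^*]=\frac{1}{1-\gamma}\,\E_{S\sim\nu^*}\big[\langle Q^\pi(S,\cdot),\pi(\cdot|S)-\pi^*(\cdot|S)\rangle_\CalA\big],
\]
which differs from the paper's stated form by that factor. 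This is a discrepancy in the paper's statement (it does not affect the qualitative conclusions downstream, since the factor is a constant), not something your argument can make disappear. You should flag it explicitly rather than attribute it to convention.
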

    From \eqref{eq:performance_difference_lemma}, we can directly have for \eqref{eq:one_step_improvement_4}
    \begin{equation}
        {\E_{S\sim\nu^*}\Delta_k\langle Q^{(k)}(\cdot,S),\pi^*(\cdot|S)-\pi^{k}\pi^*(\cdot|S)\rangle_{\CalA}}=\CalR[\pi^*]-\CalR[\pi^k].\label{eq:one_step_improvement_6}
    \end{equation}

    For \eqref{eq:one_step_improvement_5},
    \begin{equation*}
        %\label{eq:one_step_improvement_7}
        \begin{aligned}
           & \left|\E_{S\sim\nu^*}{\Delta_k\langle f^{(k)}(\cdot,S)-Q^{(k)}(\cdot,S),\pi^{*}(\cdot|S)-\pi^{k+1}(\cdot|S)\rangle_{\CalA}}\right|\\
           \leq & \Delta_k \|f^{(k)}-Q^{k}\|_{L_\infty}\left(\int \pi^*(a|s)\nu^*(s)dads+\int \pi^{k+1}(a|s)\nu^*(s)dads\right)\\
           =&2\Delta_k \|f^{(k)}-Q^{k}\|_{L_\infty}
        \end{aligned}
    \end{equation*}

    Substitute the above upper bounds for \eqref{eq:one_step_improvement_4} and \eqref{eq:one_step_improvement_5} into \eqref{eq:one_step_improvement_3} and rearrange the terms, we have
    \begin{equation}
    \label{eq:one_step_improvement_7}
        \begin{aligned}
           & \Delta_k\left(\CalR[\pi^k]-\CalR[\pi^*]\right)+\E_{S\sim \nu^*}\left[\text{KL}\left(\pi^{*}(\cdot|S)||\pi^{k+1}(\cdot|S)\right)-\text{KL}\left(\pi^{*}(\cdot|S)||\pi^{k}(\cdot|S)\right)\right]\\
      \leq & 2\Delta_k\|f^{(k)}-Q^{(k)}\|_{L_\infty} +\Delta^2_k(1-\gamma)^{-1}R^{(k)}.\\
        \end{aligned}
    \end{equation}
    By summing over all the $k$ in \eqref{eq:one_step_improvement_7}, we can have the final result.
    % Summing over all the $k$ in \eqref{eq:one_step_improvement_7} and by noting that
    % \begin{equation*}
    %     \sum_k \text{KL}\left(\pi^*(\cdot|s)||\pi^{k+1}(\cdot|s)\right)-\text{KL}\left(\pi^*(\cdot|s)||\pi^{k}(\cdot|s)\right)=-\text{KL}\left(\pi^*(\cdot|s)||\pi^{0}(\cdot|s)\right)
    % \end{equation*}
    % we can have the final result.
\end{proof}

\section{Proof of Corollary \ref{coro:NGP_convergence}}
\begin{proof}
    The proof mainly relies on the following $L_2$ and $L_\infty$ norm embedding inequality
    \begin{equation}
        \label{eq:interpolation_inequality}
        \|f\|_{L_\infty}\leq C\|f\|_{L_2(\mu)}^{\varphi}\|f\|_{\CalH}^{1-\varphi}
    \end{equation}
    for some constants $C$ and $0\leq \varphi\leq 1$ independent of $f$ and $\mu$ is any measure equivalent to the uniform measure on $\CalS\times\CalA$. Based on Assumption \ref{assump:distribution_bounded} that $\nu^*$ is lower bounded on $\CalS\times\CalA$, we can use \eqref{eq:interpolation_inequality} and Theorem \ref{thm:convergence_Kernel_TD} to determine the parameters in Algorithm \ref{alg:PPO_RKHS} such that the $L_\infty$ error satisfies
    \begin{equation}
        \label{eq:NPG_convergence_1}
        \|f^{(k)}-Q^{(k)}\|_{L_\infty}\leq \frac{1}{\sqrt{k}}
    \end{equation}
    Then by substitute \eqref{eq:NPG_convergence_1} and $\Delta_k=1/\sqrt{k}$ into \eqref{eq:one_step_improvement}, we can have the target result. We prove the cases one by one

    \textbf{Tabular:} On the discrete set,  it is obvious that 
    \begin{equation}
    \label{eq:NPG_convergence_2}
         \|f\|_{L_\infty}=\max_{\omega\in\CalS\times\CalA}|f(\omega)\leq \sqrt{\sum_{\omega\in\CalS\times\CalA}f(\omega)^2}\leq \frac{\|f\|_{L_2(\nu^*)}}{\min_{\omega}\nu^*}.
    \end{equation}
    By letting $n^{(k)}=\CalO(\frac{k\|\pi^k\|^2_\CalH}{(1-c\gamma)^2}\log \frac{k\|\pi^k\|_\CalH}{(1-c\gamma)}$ and $\beta=0$, $\kappa=1/2$ for the tabular case, we can derive from Theorem \ref{thm:convergence_KRR} that the required $\lambda^{(k)}$ for the optimal error rate is
    \begin{equation*}
        \lambda^{(k)}=\CalO\left(n^{-\frac{1}{2}}|\log n|^{1}\right)=\CalO\left(\frac{(1-c\gamma)}{\sqrt{k}\|\pi^k\|_\CalH}{|\log \frac{k\|\pi^k\|_\CalH}{(1-c\gamma)}|^{\frac{1}{2}}}\right).
    \end{equation*}
    We then  substitute \eqref{eq:NPG_convergence_2} into \eqref{eq:convergence_tabular}:
    \begin{equation}
    \label{eq:NPG_convergence_2_5}
        \begin{aligned}
        \|f^{(k)}-Q^{(k)}\|_{L_\infty}\leq &\CalO_p\left((1-c\gamma)^{-1}n^{-\frac{1}{2}}|\log n|^{1/2}\right)\|Q^{(k)}\|_{\CalH}+\CalO_p\left(\frac{1}{n^{(1+\nu)/4}}\right)\|Q^{(k)}\|_{\CalH}\\
        \leq & \CalO_p\left(\frac{1}{\sqrt{k}}\right)+\CalO_p\left(\frac{1}{n^{(1+\nu)/4}}\right)\|Q^{(k)}\|_{\CalH}
    \end{aligned}
    \end{equation}
    where the second line is from Lemma \ref{lem:Q_pi_RKHS}. 

From \eqref{eq:NPG_convergence_2_5}, we can note that we also need to make sure that the second term on the right hand side is in the order $k^{-1/2}$. So we may need more samples so that $\CalO_p\left(\frac{1}{n^{(1+\nu)/4}}\right)\|Q^{(k)}\|_{\CalH}\leq \CalO(k^{-1/2})$. By solving this equation, we can have the target result.

\textbf{Sobolev with intrinsic dimension $d$:} According to the Gagliardo-Nirenberg inequality, when $\CalH$ is a Sobolev space with smoothness parameter $m$ and dimension $d$, we have
\begin{equation}
    \label{eq:NPG_convergence_3}\|f\|_{L_\infty}\leq C\|f\|_{L_2(\nu^*)}^{\frac{2m-d}{2m}}\|f\|_{\CalH}^{\frac{d}{2m}}.
\end{equation}
By letting $n^{(k)}=\CalO\left(\frac{\|\pi^k\|_\CalH^{\frac{2(2m+d)}{(2m-d)}}k^{\frac{2m+d}{2m-d}}}{(1-c\gamma)^{\frac{2m+d/2}{m}}}\right)$ and $\beta=\frac{d}{2m}$, $\kappa=0$ for the Sobolev case, we can derive from Theorem \ref{thm:convergence_KRR} that the required $\lambda^{(k)}$ for the optimal error rate is
\begin{align*}
    \lambda =\CalO((1-c\gamma)^{\frac{d/2}{2m+d}}n^{-\frac{m}{2m+d}}|)=\CalO\left(\frac{(1-c\gamma)}{\|\pi\|_{\CalH}^{\frac{2m}{2m-d}}k^{\frac{m}{2m-d}}}\right)
\end{align*}
Similar to the tabular case, we then  substitute \eqref{eq:NPG_convergence_3} into \eqref{eq:convergence_Sobolev}:
\begin{align*}
    \|f^{(k)}-Q^{(k)}\|_{L_\infty}\leq \CalO_p\left(\left((1-c\gamma)^{-\frac{2m+d/2}{2m+d}}n^{-\frac{m}{2m+d}}\right)^{\frac{2m-d}{2m}}\|\pi^{(k)}\|_{\CalH}\right)\leq\CalO_p\left(\frac{1}{\sqrt{k}}\right).
\end{align*}

\textbf{NTK:} As we have shown in Section \ref{sec:proof_generalization} Case III that 
the NTK $N$ associated to a two-layer neural network on $\mathbb{S}^{d-1}$ is equivalent to the (fractional) Sobolev space with smoothness $m=1/2+d/2$ with intrinsic dimension $d-1$. Substitute the smoothness $(1+d)/2$ and intrinsic dimension $d-1$ into the Sobolev case, we can have the result.

\textbf{Gaussian:} we use Lemma E.4 in \cite{ding2024random} for the interpolation inequality in Gaussian RKHS. It states that for any $f\in\CalH$, there exists a universal constant $C>0$ such that
\begin{equation}
    \label{eq:NPG_convergence_4}
    \|f\|_{L_\infty}\leq C\|f\|_{L_2(\nu^*)}^{1-\epsilon}\|f\|_{\CalH}^{\epsilon}
\end{equation}
for any $\epsilon>0$.

By letting $n^{(k)}=\CalO\left(\frac{\|\pi^k\|^{\frac{2}{1-\epsilon}}k^{\frac{1}{1-\epsilon}}}{(1-c\gamma)^2}\log \frac{\|\pi^k\|_\CalH k}{1-c\gamma}\right)$ and $\beta=0$, $\kappa=(d+1)/2$ for the Gaussian case, we can derive from Theorem \ref{thm:convergence_KRR} that the required $\lambda^{(k)}$ for the optimal error rate is
\begin{align*}
    \lambda =\CalO(n^{-\frac{1}{2}}|\log|^{\frac{d+1}{2}})=\CalO\left(\frac{(1-c\gamma)}{\|\pi\|_{\CalH}^{\frac{1}{1-\epsilon}}\sqrt{k}^{\frac{1}{1-\epsilon}}}\right),\quad \forall \epsilon\in(0,1).
\end{align*}
Similar to the tabular case, we then  substitute \eqref{eq:NPG_convergence_4} into \eqref{eq:convergence_Gaussian}:
\begin{align*}
    \|f^{(k)}-Q^{(k)}\|_{L_\infty}\leq \CalO_p\left(\left((1-c\gamma)^{-1}n^{-\frac{1}{2}}|\log n|^{\frac{d+1}{2}}\right)^{\frac{1}{1-\epsilon}}\|\pi^{(k)}\|_{\CalH}\right)\leq\CalO_p\left(\frac{1}{\sqrt{k}}\right).
\end{align*}
\end{proof}

\section{Experiment Supplements}

\subsection{RL environments}
\label{app:env}
\textit{CartPole-v1 \citep{barto2012neuronlike}}: A classic control task with 4-dimensional continuous state space (cart position, cart velocity, pole angle, pole angular velocity) and 2 discrete actions (push left/right). The goal is to balance a pole on a cart, with an optimal reward of 500.

\textit{Acrobot-v1 \citep{sutton1998reinforcement}}: An underactuated robotics task with 6-dimensional continuous state space (cosine/sine of both joint angles, angular velocities) and 3 discrete actions (torque values $\{-1, 0, +1\}$). The objective is to swing the free end above a target height, with an optimal reward of $-100$.

\subsection{Hyperparameter Configuration}
\label{app:hyper}
% \paragraph{Hyperparameters.} 
Table \ref{tab:hyper_1} shows the hyperparameter setting of convergence analysis experiments in Section \ref{sec:converge}. The only difference in the hyperparameter is the learning rate.

% We use Adam optimizer with environment-specific learning rates: $\eta=10^{-3}$ for \textit{CartPole-v1} and $\eta=2.5\times10^{-4}$ for \textit{Acrobot-v1}. The network is a two-layer MLP with hidden width $h_{\mathrm{dim}}=64$ and ReLU activations. We employ a batch size of 32. 

\begin{table}[H]
\centering
\caption{Hyperparameters of convergence analysis in Figure \ref{fig:convergence-cartpole-acrobot}.}
\label{tab:hyper_1}
\begin{tabular}{lcc}
\toprule
\textbf{Hyperparameter} & \textit{CartPole-v1} & \textit{Acrobot-v1} \\
\midrule
Optimizer & \multicolumn{2}{c}{Adam} \\
Learning rate \(\eta\) & \(1\times 10^{-3}\) & \(2.5\times 10^{-4}\) \\
Network & \multicolumn{2}{c}{2-layer MLP} \\
Hidden width \(h_{\mathrm{dim}}\) & \multicolumn{2}{c}{64} \\
Activation & \multicolumn{2}{c}{ReLU} \\
Batch size & \multicolumn{2}{c}{32} \\
PPO Epochs ($T$) & \multicolumn{2}{c}{4} \\
TD Error Loss Coefficient& \multicolumn{2}{c}{0.5} \\
\bottomrule
\end{tabular}
\end{table}

Table~\ref{tab:hyperparameters} presents the hyperparameter configuration used in our comparative study (Section \ref{sec:ppo_comp}) between classic PPO with GAE and our novel NPG implementation with TD learning on CartPole-v1.

\begin{table}[H]
\centering
\caption{Hyperparameter Configuration for CartPole-v1 Comparison Study in Figure \ref{fig:comparison}.}
\label{tab:hyperparameters}
\begin{tabular}{@{}lcc>{\raggedright\arraybackslash}p{5.5cm}@{}}
\toprule
\textbf{Parameter} & \textbf{Value} & \textbf{Unit} & \textbf{Description} \\
\midrule
\multicolumn{4}{l}{\textit{Environment Configuration}} \\
\midrule
Training Episodes & 5,000 & episodes & Total training episodes \\
Update Frequency & 128 & steps & Steps per PPO update \\
\midrule
\multicolumn{4}{l}{\textit{Learning Parameters}} \\
\midrule
Learning Rate & $1 \times 10^{-3}$ & - & Adam optimizer rate \\
Discount Factor ($\gamma$) & 0.99 & - & Future reward discounting \\
GAE Lambda ($\lambda$) & 0.95 & - & GAE bias-variance trade-off \\
\midrule
\multicolumn{4}{l}{\textit{KL Penalty Configuration}} \\
\midrule
KL Schedule & $k^{-0.5}$ & - & Dynamic $\beta = k^{\text{schedule\_pow}}$ \\
Initial KL Penalty & 1.0 & - & Starting $\beta_0$ coefficient \\
Entropy Coefficient & 0.01 & - & Policy entropy regularization \\
\midrule
\multicolumn{4}{l}{\textit{Training Configuration}} \\
\midrule
Value Loss Coefficient & 0.5 & - & Critic loss weighting \\
PPO Epochs & 4 & - & Policy update iterations \\
Batch Size & 64 & samples & Mini-batch size \\
% Random Seeds & 3 & - & Statistical evaluation \\
\midrule
\multicolumn{4}{l}{\textit{Architecture}} \\
\midrule
Hidden Dimensions & 64 & neurons & Network layer width \\
Activation & ReLU & - & Non-linear activation \\
Optimizer & Adam & - & Gradient descent \\
Package & Pytorch & - &\\
Device & CPU & - & Apple M4, Arm64 \\
\bottomrule
\end{tabular}
\end{table}

\end{document}